\newtheorem{theorem}{Theorem}[]
\newtheorem{lemma}[]{Lemma}
\newtheorem{definition}[]{Definition}
\newtheorem{assumption}[]{Assumption}
\title{FP3O: Enabling Proximal Policy Optimization in Multi-Agent Cooperation \\ with Parameter-Sharing Versatility}
\author{
    %Authors
    % All authors must be in the same font size and format.
    Lang Feng\textsuperscript{\rm 1},
    Dong Xing\textsuperscript{\rm 1,\rm 2},
    Junru Zhang\textsuperscript{\rm 1},
    Gang Pan\textsuperscript{\rm 1,\rm 2}\thanks{Corresponding author}
}
\begin{document}
\doparttoc % Tell to minitoc to generate a toc for the parts
\faketableofcontents % Run a fake tableofcontents command for the partocs

\maketitle

\begin{abstract}
  Existing multi-agent PPO algorithms lack compatibility with different types of parameter sharing when extending the theoretical guarantee of PPO to cooperative multi-agent reinforcement learning (MARL). In this paper, we propose a novel and versatile multi-agent PPO algorithm for cooperative MARL to overcome this limitation. Our approach is achieved upon the proposed full-pipeline paradigm, which establishes multiple parallel optimization pipelines by employing various equivalent decompositions of the advantage function. This procedure successfully formulates the interconnections among agents in a more general manner, i.e., the interconnections among pipelines, making it compatible with diverse types of parameter sharing. We provide a solid theoretical foundation for policy improvement and subsequently develop a practical algorithm called Full-Pipeline PPO (FP3O) by several approximations. Empirical evaluations on Multi-Agent MuJoCo and StarCraftII tasks demonstrate that FP3O outperforms other strong baselines and exhibits remarkable versatility across various parameter-sharing configurations.
\end{abstract}

\section{Introduction}
Proximal Policy Optimization (PPO) \citep{schulman2017ppo} has been deemed as one of the most effective algorithms in single-agent deep reinforcement learning (RL) with empirical successes in a wide range of challenging domains, such as game playing \citep{bellemare2013arcade}, robotic control \citep{duan2016benchmarking} and human-level task \citep{berner2019dota}.
Originally derived from Trust Region Policy Optimization (TRPO) \citep{schulman2015trpo}, PPO enjoys the theoretical monotonic improvement guarantee by restricting the Kullback-Leibler (KL) divergence between the updated policy and the old one to a trust region. The attractive theoretical support makes PPO's extension in cooperative multi-agent reinforcement learning (MARL) rapidly developed \citep{wu2021coordinated,kuba2022happo,wang2023order}.

Parameter sharing has emerged as a significant topic in MARL due to its substantial influence on training efficiency \cite{liu2020data}, knowledge transfer \cite{agarwal2020learning}, and overall algorithm performance \cite{fu2022revisiting}. Nevertheless, one practical yet essential issue that has received less attention in the scaling of PPO's theoretical guarantee to MARL is the assurance of algorithmic versatility across different parameter-sharing types (e.g., full, partial, and non-parameter sharing). Full parameter sharing, where all parameters are shared among agents, has been widely adopted in scenarios with homogeneous agents or a large number of agents. It has proven beneficial in enhancing learning efficiency and minimizing the number of parameters \cite{rashid2018qmix,wang2020qplex}. Conversely, non-parameter sharing allows each agent to possess an individual policy network, facilitating diverse decision-making \cite{lowe2017maddpg,kuba2022happo}. This is especially crucial in heterogeneous agent scenarios, where agents operate within distinct action spaces. Striking a balance between diversity and sharing, partial parameter sharing has been introduced \cite{chenghao2021celebrating,christianos2021scaling}. It is often employed in large-scale heterogeneous agent scenarios to prevent the explosion of the parameter space while enabling diverse decision-making. Therefore, ensuring the versatility of an algorithm across different parameter-sharing types is critical for its applicability to various multi-agent tasks.

However, existing PPO-based algorithms in cooperative MARL exhibit a deficiency in compatibility with various types of parameter sharing. Algorithms such as Independent PPO (IPPO) \citep{de2020ippo} and Multi-agent PPO (MAPPO) \citep{yu2021mappo} apply PPO straightforwardly to every agent's optimization with full parameter sharing. Despite their empirical successes, the absence of the theoretical guarantee could disturb their performances on other tasks or network types \citep{kuba2022happo}. Recent advancements like Coordinate PPO (CoPPO) \citep{wu2021coordinated}, Heterogeneous-agent PPO (HAPPO) \citep{kuba2022happo} and Agent-by-agent PPO (A2PO) \cite{wang2023order} succeed to scale the theoretical property of PPO to cooperative MARL, but they are predicated on constrained training procedures, which in turn limit their versatility. Specifically, the optimization objective of each agent in CoPPO incorporates the policy network parameters of other agents. However, when updating agents in a partial/non-parameter sharing manner, the process encounters impractical repetitive error backpropagation of the policy networks. Meanwhile, the sequential updates of HAPPO and A2PO are established upon the assumption that each agent's update does not impact the policies of others. This assumption, however, becomes untenable when dealing with full/partial parameter sharing cases, as each agent's update can induce changes in the policy network parameters of other agents, thereby breaking the monotonicity guarantee. 

We propose a general-purpose multi-agent PPO algorithm for cooperative MARL, designed to maintain versatility across different parameter-sharing types. We begin by presenting our single pipeline optimization as the basic pipeline, which offers critical insight into the various available ways of advantage function decomposition. By executing various decompositions, we establish multiple parallel optimization pipelines that are essentially equivalent, resulting in a novel \emph{full-pipeline paradigm}. This paradigm innovatively formulates all potential interconnections among agents, concerning different parameter-sharing types, as interconnections among pipelines, making it a general update scheme. We then prove in theory that full-pipeline optimization enables monotonic joint policy improvement. To implement this procedure, we propose the practical \emph{\textbf{F}ull-\textbf{P}ipeline \textbf{PPO} (FP3O)} algorithm by several approximations. We evaluate our algorithm on Multi-Agent MuJoCo \citep{de2020mamujoco} and StarCraftII Multi-Agent Challenge \citep{samvelyan2019starcraft} benchmarks against several strong baselines. The results demonstrate the superior performances of FP3O and its versatility across various parameter-sharing types.

\section{Preliminaries}
\paragraph{Problem Formulation.}
% We model the cooperative multi-agent task with a multi-agent version of discounted Markov decision process (MDP) \citep{littman1994markov}, which can be described as a tuple
We model our task as a dcentralized Markov decision process~\citep{bernstein2002complexity}, which can be described as a tuple $<\mathcal{S}, \mathcal{N}, \bm{\mathcal{A}}, P, r, \gamma, \rho_{0}>$. Here, $s \in \mathcal{S}$ is the state of the environment. $i \in \mathcal{N} \equiv \left\{ 1,...,n \right\} $ denotes the index of agent. $\bm{\mathcal{A}}$ denotes the joint action space, which is defined by $\prod_{i\in \mathcal{N}}{\mathcal{A}^i}$, the product of all agents' action spaces. $P:\mathcal{S}\times \bm{\mathcal{A}} \times \mathcal{S} \rightarrow [0, 1]$ is the transition probability function and $r: \mathcal{S}\times \bm{\mathcal{A}}\rightarrow \mathbb{R}$ is the shared reward. $\gamma \in [0, 1)$ is the discount factor. $\rho_{0}: \mathcal{S} \rightarrow \mathbb{R}$ is the distribution of the initial state of environment.
At each timestep $t \in \mathbb{N}$, agent $i$ will choose an action $a^i_t \in \mathcal{A}^i$ according to its policy $\pi^i(\cdot|s_t)$. The joint action $\bm{a}_t=\left\{a^1_t,...,a^n_t\right\}$ and the joint policy $\bm{\pi}(\bm{a}_t|s_t)$ can be formed by combining all agents' actions and policies. Then, the state of the environment is transferred to a new state according to the transition probability function $P(s_{t+1}|s_t,\bm{a}_t)$, and all agents will receive a shared reward $r(s_t, \bm{a}_t)$. We let $\rho_{\pi}(s)=\sum\nolimits_{t=0}^{\infty}\gamma^{t}P(s_t=s)$ denote the (unnormalized) state visitation frequencies. The optimization objective for all agents in the fully cooperative multi-agent task is maximizing the discounted cumulative reward: $\mathcal{J}(\bm{\pi})\triangleq\mathbb{E}_{s_{0:\infty}\sim\rho_{\pi},\bm{a}_{0:\infty}\sim\bm{\pi}}\left[\sum_{t=0}^{\infty}{\gamma^{t}r(s_t, \bm{a}_t)}\right]$. 
Then, we denote the joint state value function, the joint state-action function, and the joint advantage function as $V_{\bm{\pi}}(s)\triangleq \mathbb{E}_{s_{1:\infty}\sim\rho_{\pi},\bm{a}_{0:\infty}\sim\bm{\pi}} \left[\sum_{t=0}^{\infty}{\gamma^{t}r(s_t, \bm{a}_t)} \big| s_0=s\right]$, $Q_{\bm{\pi}}(s, \bm{a})\triangleq \mathbb{E}_{s_{1:\infty}\sim\rho_{\pi},\bm{a}_{1:\infty}\sim\bm{\pi}} \left[\sum_{t=0}^{\infty}{\gamma^{t}r(s_t, \bm{a}_t)} \big| s_0=s, \bm{a}_0=\bm{a}\right]$, and $A_{\bm{\pi}}(s,\bm{a})\triangleq Q_{\bm{\pi}}(s, \bm{a})-V_{\bm{\pi}}(s)$ respectively.
% Then, we denote the joint state value function as $V_{\bm{\pi}}(s)\triangleq \mathbb{E}_{s_{1:\infty}\sim\rho_{\pi},\bm{a}_{0:\infty}\sim\bm{\pi}} \left[\sum_{t=0}^{\infty}{\gamma^{t}r(s_t, \bm{a}_t)} \big| s_0=s\right]$, the joint state-action function as $Q_{\bm{\pi}}(s, \bm{a})\triangleq \mathbb{E}_{s_{1:\infty}\sim\rho_{\pi},\bm{a}_{1:\infty}\sim\bm{\pi}} \left[\sum_{t=0}^{\infty}{\gamma^{t}r(s_t, \bm{a}_t)} \big| s_0=s, \bm{a}_0=\bm{a}\right]$, and the joint advantage function as $A_{\bm{\pi}}(s,\bm{a})\triangleq Q_{\bm{\pi}}(s, \bm{a})-V_{\bm{\pi}}(s)$.

\paragraph{Multi-Agent Credit Assignment.}
Quantifying agents' contribution from the whole team's joint reward is commonly known as the multi-agent credit assignment \citep{chang2003all,sunehag2017value,foerster2018counterfactual,son2019qtran}.
% Currently, there are many efforts to address this issue from both value-based \citep{sunehag2017value,son2019qtran} and policy-based perspectives \citep{foerster2018counterfactual}. 
In this paper, we follow \citet{kuba2022happo} to deduce the contribution of different subsets of agents. We let $i_{1:m}$ be a subset $\left\{i_1,...,i_m\right\}$ of $\mathcal{N}$, and let $-i_{1:m}$ be the complement set of $i_{1:m}$. In particular, $i_{1:n}$ represents the complete set $\mathcal{N}$ of all agents. Then, the multi-agent state-action value function can be defined as
\begin{equation}
  \small
  \label{equation_state_action_value_function}
  Q_{\bm{\pi}}^{i_{1:m}}(s,\bm{a}^{i_{1:m}})\triangleq \mathbb{E}_{\bm{a}^{-i_{1:m}}\sim\bm{\pi}^{-i_{1:m}}}\left[Q_{\bm{\pi}}(s,\bm{a})\right].
\end{equation}
It quantifies the contribution of agents $i_{1:m}$ to the joint reward by calculating the average return if agents $i_{1:m}$ execute joint action $\bm{a}^{i_{1:m}}$. Then, for arbitrary disjoint sets $j_{1:k}$ and $i_{1:m}$, the multi-agent advantage function is defined as
{\small
\begin{align}
  \label{equation_advantage_function}
  &A_{\bm{\pi}}^{i_{1:m}}(s, \bm{a}^{j_{1:k}},\bm{a}^{i_{1:m}}) \nonumber \\
  &\qquad\qquad \triangleq Q_{\bm{\pi}}^{j_{1:k},i_{1:m}}(s,\bm{a}^{j_{1:k},i_{1:m}})-Q_{\bm{\pi}}^{j_{1:k}}(s,\bm{a}^{j_{1:k}}), 
\end{align}
}
which quantifies the additional contribution of agents $i_{1:m}$ executing joint action $\bm{a}^{i_{1:m}}$ if agents $j_{1:k}$ have executed joint action $\bm{a}^{j_{1:k}}$. Finally, given the complete set $i_{1:n}$, the joint advantage function $A_{\bm{\pi}}(s,\bm{a})$ can be decomposed as follow:
\begin{equation}
\small
  \label{equation_multi_agent_advantage_decomposition}
  A_{\bm{\pi}}(s,\bm{a}) = \sum\nolimits_{m=1}^n{A_{\bm{\pi}}^{i_m}(s,\bm{a}^{i_{1:m-1}},a^{i_m})}.  
\end{equation}
% It is a general setup without extra assumptions for the credit assignment problem in cooperative MARL.

\paragraph{Policy Improvement Lower Bound.}
In single-agent RL, trust region learning such as TRPO and PPO enjoys the monotonicity guarantee of discounted cumulative reward $\mathcal{J}(\pi)$. The key policy improvement lower bound \cite{schulman2015trpo} is given by
% \begin{theorem}[\citeauthor{schulman2015trpo}, \citeyear{schulman2015trpo}]
\begin{theorem}
  \label{theorem_bound}
  Let $D_{\rm KL}^{\max}(\pi, \tilde{\pi})=\max_s{D_{\rm KL}(\pi(\cdot |s), \tilde{\pi}(\cdot |s))}$, $C=\frac{4\gamma \max_{s,a}|A_{\pi}(s,a)|}{(1-\gamma)^2}$, and $\pi$ denote the old policy. For any policy $\tilde{\pi}$, the following policy improvement bound holds:
  \begin{equation}
  \small
     \mathcal{J}(\tilde{\pi})\geq \mathcal{J}(\pi)+\mathbb{E}_{s\sim\rho_{\pi}, a\sim\tilde{\pi}}[A_{\pi}(s,a)]-CD_{\rm KL}^{\max}(\pi,\tilde{\pi}). \nonumber
  \end{equation}
  % where $D_{\rm KL}^{\max}(\pi, \tilde{\pi})=\max_s{D_{\rm KL}(\pi(\cdot |s), \tilde{\pi}(\cdot |s))}$, and $C=\frac{4\gamma \max_{s,a}|A_{\pi}(s,a)|}{(1-\gamma)^2}$.
\end{theorem}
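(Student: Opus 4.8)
The plan is to follow the classical trust-region argument that underlies \citet{schulman2015trpo}, whose backbone is the exact policy-difference identity of Kakade and Langford. First I would establish
\begin{equation}
  \mathcal{J}(\tilde{\pi}) = \mathcal{J}(\pi) + \mathbb{E}_{s\sim\rho_{\tilde{\pi}}, a\sim\tilde{\pi}}[A_{\pi}(s,a)], \nonumber
\end{equation}
which follows by writing $A_{\pi}(s,a) = \mathbb{E}_{s'\sim P}[r(s,a) + \gamma V_{\pi}(s') - V_{\pi}(s)]$, substituting it into the right-hand side, and telescoping the $V_{\pi}$ terms along trajectories drawn from $\tilde{\pi}$ so that only $-V_{\pi}(s_0)$ survives and recombines with $\mathbb{E}_{\tilde{\pi}}[\sum_t \gamma^t r]$ to reproduce $\mathcal{J}(\tilde{\pi})$.

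The catch is that this expectation is taken over the visitation $\rho_{\tilde{\pi}}$ of the \emph{new} policy, which is inaccessible during optimization. I would therefore introduce the surrogate
\begin{equation}
  L_{\pi}(\tilde{\pi}) \triangleq \mathcal{J}(\pi) + \mathbb{E}_{s\sim\rho_{\pi}, a\sim\tilde{\pi}}[A_{\pi}(s,a)], \nonumber
\end{equation}
which replaces $\rho_{\tilde{\pi}}$ by the tractable $\rho_{\pi}$ and whose advantage term is precisely the one appearing in the theorem. Setting $\bar{A}(s) \triangleq \mathbb{E}_{a\sim\tilde{\pi}}[A_{\pi}(s,a)]$, the claim reduces to controlling the gap $\mathcal{J}(\tilde{\pi}) - L_{\pi}(\tilde{\pi}) = \sum_{s}\bigl(\rho_{\tilde{\pi}}(s) - \rho_{\pi}(s)\bigr)\bar{A}(s)$.

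The technical heart, and the step I expect to be the main obstacle, is bounding this gap by the policy discrepancy, because it requires reasoning about how two coupled Markov chains drift apart rather than comparing policies at a single state. I would couple $\pi$ and $\tilde{\pi}$ so that at any state they emit the same action with probability at least $1-\alpha$, where $\alpha \triangleq \max_{s} D_{\rm TV}(\pi(\cdot|s), \tilde{\pi}(\cdot|s))$. Two sub-steps follow. First, since $\mathbb{E}_{a\sim\pi}[A_{\pi}(s,a)] = 0$, the coupling yields $|\bar{A}(s)| \leq 2\alpha\,\epsilon$ with $\epsilon \triangleq \max_{s,a}|A_{\pi}(s,a)|$. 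Second, running both chains from $\rho_0$, the probability that their state trajectories have already disagreed by step $t$ is at most $1-(1-\alpha)^t$, so $|\mathbb{E}_{s_t\sim\tilde{\pi}}[\bar{A}(s_t)] - \mathbb{E}_{s_t\sim\pi}[\bar{A}(s_t)]| \leq 2\bigl(1-(1-\alpha)^t\bigr)\max_s|\bar{A}(s)|$. Weighting by $\gamma^t$ and using $\sum_{t}\gamma^t(1-(1-\alpha)^t) \leq \gamma\alpha/(1-\gamma)^2$ gives $|\mathcal{J}(\tilde{\pi}) - L_{\pi}(\tilde{\pi})| \leq 4\epsilon\gamma\alpha^2/(1-\gamma)^2$.

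To close, I would pass from total variation to KL via Pinsker's inequality, which yields $\alpha^2 = \bigl(\max_s D_{\rm TV}\bigr)^2 \leq D_{\rm KL}^{\max}(\pi,\tilde{\pi})$, upper-bounding the gap by $C\,D_{\rm KL}^{\max}(\pi,\tilde{\pi})$ with $C = 4\gamma\epsilon/(1-\gamma)^2$ exactly as stated. Rearranging $\mathcal{J}(\tilde{\pi}) \geq L_{\pi}(\tilde{\pi}) - C\,D_{\rm KL}^{\max}(\pi,\tilde{\pi})$ and unfolding the definition of $L_{\pi}$ then delivers the theorem. Everything apart from the coupling estimate is telescoping algebra and a routine invocation of Pinsker.
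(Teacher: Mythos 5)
Your proposal is correct and is essentially the canonical argument: the paper does not reprove this statement but imports it directly from \citet{schulman2015trpo}, and your reconstruction (Kakade--Langford policy-difference identity, the surrogate $L_{\pi}$, the $\alpha$-coupling bound $4\epsilon\gamma\alpha^{2}/(1-\gamma)^{2}$ on the gap, and the total-variation-to-KL step) is exactly that proof. All the individual estimates check out, including the sum $\sum_{t}\gamma^{t}\bigl(1-(1-\alpha)^{t}\bigr)\leq\gamma\alpha/(1-\gamma)^{2}$ and the (slightly weakened) Pinsker-type inequality $\alpha^{2}\leq D_{\rm KL}^{\max}$ used in the original.
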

% The right-hand policy improvement lower bound described in \cref{theorem_bound} serves as a surrogate objective for optimizing the true objective $\mathcal{J}(\tilde{\pi})$. The maximization of this surrogate objective at each policy iteration can guarantee a monotonic improvement of the true objective.z
The lower bound in Theorem~\ref{theorem_bound} serves as a surrogate objective of the true objective $\mathcal{J}(\tilde{\pi})$. It guarantees $\mathcal{J}(\tilde{\pi})$ is non-decreasing when improving the right-hand lower bound at each policy iteration. To leverage Theorem~\ref{theorem_bound} to cooperative MARL, we inherit HAPPO's settings of centralized training with decentralized execution (CTDE) paradigm \citep{oliehoek2008optimal,kraemer2016multi,mahajan2019maven} and the below assumption.
\begin{assumption}
  \label{assumption_policy}
  In the CTDE paradigm, agents take actions in a decentralized-execution manner and the policies $\pi^1(a^1_t|s_t),\dots,\pi^n(a^n_t|s_t)$ are independent from one another.
\end{assumption}
Then, by substituting $A_{\bm{\pi}}(s, \bm{a})$ with the advantage decomposition given by Equation (\ref{equation_multi_agent_advantage_decomposition}), the following lower bound can be derived (see proofs in \cite{kuba2022happo}):
% To extend \cref{theorem_bound} to cooperative MARL, 
% However, in cooperative MARL, the application of \cref{theorem_bound} is less straightforward. $A_{\bm{\pi}}(s, \bm{a})$ can be replaced with joint advantage decomposition defined by \cref{equation_multi_agent_advantage_decomposition} and the following lower bound can be derived:
\begin{equation}
  \small
  \label{equation_happo_bound}
  \mathcal{J}(\tilde{\bm{\pi}})\geq \mathcal{J}(\bm{\pi})+\sum\nolimits_{m=1}^n{M_{\bm{\pi}}^{i_{m}}(\tilde{\bm{\pi}}^{i_{1:m-1}}, \tilde{\pi}^{i_m})},
\end{equation}
where $M_{\bm{\pi}}^{i_{m}}(\tilde{\bm{\pi}}^{i_{1:m-1}}, \tilde{\pi}^{i_m})$ satisfies the below definition.
\begin{definition}\label{definition_surrogate}
  Let $\bm{\pi}$ denote the old joint policy, $\tilde{\bm{\pi}}$ denote the any future joint policy, $\mathcal{C}=\frac{4\gamma \max_{s,\bm{a}}|A_{\bm{\pi}}(s,\bm{a})|}{(1-\gamma)^2}$. For arbitrary disjoint sets $j_{1:k}$ and $i_{1:m}$, we define
  \begin{small}
  \begin{align}
     &M_{\bm{\pi}}^{i_{1:m}}(\tilde{\bm{\pi}}^{j_{1:k}}, \tilde{\bm{\pi}}^{i_{1:m}})= \mathbb{E} _{s\sim \rho_{\bm{\pi}},\bm{a}\sim \tilde{\bm{\pi}}}\big[ \nonumber\\
     &\qquad\qquad A_{\bm{\pi}}^{i_{1:m}}(s, \bm{a}^{j_{1:k}}, \bm{a}^{i_{1:m}})\big]-\sum\nolimits_{i \in i_{1:m}} \mathcal{C}D_{\rm KL}^{\max}({\pi}^{i}, {\tilde{\pi}}^{i}). \nonumber
  \end{align}
  \end{small}
\end{definition}
Equation~(\ref{equation_happo_bound}) describes a sequential update scheme of HAPPO at $k$-th iteration in which the policy of each agent $i_m$ ($m=1,\ldots,n$) is updated in sequence by $\pi_{k+1}^{i_m} = \arg\max_{\pi^{i_m}}M_{\bm{\pi}_k}^{i_{1:m}}(\bm{\pi}_{k+1}^{i_{1:m-1}}, \pi^{i_m})$ based on the update results of the preceding agents $i_{1:m-1}$. The update of each agent $i_m$ can result in a non-negative change of $M_{\bm{\pi}_k}^{i_{1:m}}$, thus enhancing the overall summation. However, the improvement guarantee of the overall summation is restricted to cases where there is no sharing of parameters among agents, i.e., the update of one agent's policy will not affect the objectives $M_{\bm{\pi}}^{i_{1:m}}$ of other agents. This guarantee becomes invalid in scenarios where parameter sharing exists, as the update to one agent's policy network parameters directly affects the parameters of other agents. This not only undermines the improvement guarantee but also results in excessive KL divergence, negatively impacting the performance (see Appendix~\ref{appendix_sequential_update_scheme_bad}). Hence, it is desirable for a general update scheme that can be applied to varieties of parameter-sharing types.

\section{The FP3O Algorithm}
In this part, we propose our \textbf{F}ull-\textbf{P}ipeline \textbf{PPO} (FP3O) algorithm for cooperative MARL that is highly versatile across different parameter-sharing types. We first introduce the single pipeline optimization as the basic pipeline and then establish multiple parallel optimization pipelines to yield the full-pipeline paradigm. Then, we visit its theoretical guarantee and present the implementation of FP3O in practice. 

\subsection{Single Pipeline Optimization}\label{section_single_pipeline}
We first introduce single pipeline optimization, the basic optimization pipeline of full pipeline optimization. It enables an arbitrary agent to initiate an optimization pipeline, which is the foundation for establishing full pipelines. To this end, it is necessary to separate the contribution of a specific agent from the joint advantage function, independent of the other agents, as each agent has no chance to access the actions and optimization directions of other agents at the beginning of each policy iteration. Hence, for $\forall i_{p} \in i_{1:n}=\mathcal{N}$, we give the following bound:
\begin{equation}
\small
   \label{equation_multibound}
   \mathcal{J}(\tilde{\bm{\pi}})\geq \mathcal{J}(\bm{\pi})+M_{\bm{\pi}}^{i_p}(\tilde{\pi}^\varnothing,{\tilde{\pi}}^{i_p})+M_{\bm{\pi}}^{-i_p}(\tilde{\pi}^{i_p}, \tilde{\bm{\pi}}^{-i_p}),
\end{equation}
where $\varnothing$ denotes the empty set. The proof is in Appendix~\ref{appendix_proof_equation5}. Equation~(\ref{equation_multibound}) decomposes the lower bound into two surrogate objective components. The first part, $M_{\bm{\pi}}^{i_p}(\tilde{\pi}^\varnothing,{\tilde{\pi}}^{i_p})$ of agent $i_p$, corresponds to a completely independent update step without regard to the policies of other agents $-i_{p}$. The second part, $M_{\bm{\pi}}^{-i_p}(\tilde{\pi}^{i_p}, \tilde{\bm{\pi}}^{-i_p})$ of agents $-i_p$, corresponds to a dependent update step that is impacted by the updated outcome of agent $i_{p}$. 
To get the lower bound improved at $k$-th iteration, we first do update by $\pi_{k+1}^{i_p} = \arg\max\limits_{\pi^{i_p}}M_{\bm{\pi}_k}^{i_p}(\pi_{k+1}^\varnothing,{\pi}^{i_p})$, which with importance sampling is equivalent to
\begin{equation}
    \small
   \label{equation_impsampling1}
   \pi_{k+1}^{i_p} = \arg\max\limits_{\pi^{i_p}}\left[\mathbb{E}_{s,\bm{a}} \left[r(\pi^{i_p}) A_{\bm{\pi}_k}(s,\bm{a})\right]-\mathcal{C}D_{\rm KL}^{\max}({\pi}^{i_p}_{k}, {\pi}^{i_p})\right],
\end{equation}
where $s\sim \rho_{\bm{\pi}_k},\bm{a}\sim \bm{\pi}_k$, $r(\pi^{i_p})=\frac{\pi^{i_p}(a^{i_p}|s)}{\pi_k^{i_p}(a^{i_p}|s)}$. Generally, we define $r(\pi^{i})=\frac{\pi^{i}(a^{i}|s)}{\pi_k^{i}(a^{i}|s)}$ as the probability ratio between $\pi^{i}$ and its old one $\pi^{i}_k$ at $k$-th iteration. We then update the policies of agents $-i_p$ by $\bm{\pi}_{k+1}^{-i_p} = \arg\max\limits_{\bm{\pi}^{-i_p}} M_{\bm{\pi}_k}^{-i_p}(\pi_{k+1}^{i_p}, \bm{\pi}^{-i_p})$, the importance-sampling objective of which is
{
\small
\begin{align}
   \label{equation_impsampling2}
   &\bm{\pi}_{k+1}^{-i_p} = \arg\max\limits_{\bm{\pi}^{-i_p}}\Big[\mathbb{E}_{s,\bm{a}} \Big[\left(r(\bm{\pi}^{-i_p}) - 1\right)  \nonumber \\
   &\qquad \cdot r(\pi_{k+1}^{i_p}) \cdot A_{\bm{\pi}_k}(s,\bm{a})\Big] -\sum\nolimits_{i \in -i_p} \mathcal{C}D_{\rm KL}^{\max}({\pi}_k^{i}, \pi^{i})\Big],
\end{align}}
where $r(\bm{\pi}^{-i_p})=\frac{\bm{\pi}^{-i_p}(\bm{a}^{-i_p}|s)}{\bm{\pi}_k^{-i_p}(\bm{a}^{-i_p}|s)}$, $r(\pi_{k+1}^{i_p})=\frac{\pi_{k+1}^{i_p}(a^{i_p}|s)}{\pi_k^{i_p}(a^{i_p}|s)}$. The proof of importance-sampling objectives is in Appendix~\ref{appendix_proof_equation67}. Single pipeline optimization is not suitable for all types of parameter sharing since it requires the update of agent $i_p$ (or $-i_p$) not to affect the objectives of other agents $M_{\bm{\pi}}^{-i_p}$ (or $M_{\bm{\pi}}^{i_p}$). This limitation has also been observed in the sequential update scheme of HAPPO. 
Nevertheless, as an \emph{arbitrary} agent can be the separated agent $i_{p}$ in Equation~(\ref{equation_multibound}), a variety of decomposition ways are made available. This provides critical insight into the full-pipeline paradigm.

\subsection{Full-Pipeline Paradigm}\label{section_full_pipeline_optimization}
Instead of relying on one decomposition way, we can let every agent be the separated agent by setting $i_p=i_1,...,i_n$ and derive multiple, fully equivalent policy improvement lower bounds on $n$ pipelines: 
{\small
\begin{align}
  \label{equation_fullmultibound}
  \mathcal{J}(\bm{\pi})+M_{\bm{\pi}}^{i_1}(\tilde{\pi}^\varnothing,{\tilde{\pi}}^{i_1}&)+ M_{\bm{\pi}}^{-i_1}(\tilde{\pi}^{i_1}, \tilde{\bm{\pi}}^{-i_1}) \nonumber                           \\
                                                                 & \vdots                                                                                    \nonumber \\
  \mathcal{J}(\bm{\pi})+M_{\bm{\pi}}^{i_n}(\tilde{\pi}^\varnothing,{\tilde{\pi}}^{i_n}&)+M_{\bm{\pi}}^{-i_n}(\tilde{\pi}^{i_n}, \tilde{\bm{\pi}}^{-i_n}).
\end{align}}
The optimizations of these $n$ pipelines, which we refer to as the \emph{full-pipeline paradigm}, are performed in a parallel manner. The first update step involves maximizing $M_{\bm{\pi}}^{i_1}(\tilde{\pi}^\varnothing,{\tilde{\pi}}^{i_1}),...,M_{\bm{\pi}}^{i_n}(\tilde{\pi}^\varnothing,{\tilde{\pi}}^{i_n})$ for each pipeline from $1$ to $n$. These objectives are independent of the update policies of other agents, which allows us to initiate the parallel optimization of pipelines without any restrictions. As such, we call the optimization of $M_{\bm{\pi}}^{i_1}(\tilde{\pi}^\varnothing,{\tilde{\pi}}^{i_1}),...,M_{\bm{\pi}}^{i_n}(\tilde{\pi}^\varnothing,{\tilde{\pi}}^{i_n})$ as the \emph{independent step}.
The second update step involves maximizing $M_{\bm{\pi}}^{-i_1}(\tilde{\pi}^{i_1}, \tilde{\bm{\pi}}^{-i_1}),...,M_{\bm{\pi}}^{-i_n}(\tilde{\pi}^{i_n}, \tilde{\bm{\pi}}^{-i_n})$ for each pipeline from $1$ to $n$. The optimization of this step relies on the update results of the independent step, hence it is referred to as the \emph{dependent step}. Then, the full-pipeline optimization can be completed with the following two measures. 

\paragraph{Non-overlapping selection.} 
% When parallelizing the optimization of $\{\mathop{\rm maximize}\nolimits_{\tilde{\bm{\pi}}^{-i_p}}M_{\bm{\pi}}^{-i_p}(\tilde{\pi}^{i_p}, \tilde{\bm{\pi}}^{-i_p})\}_{p=1}^n$ on pipelines $p=1,...,n$, there are overlapping policies among $\tilde{\bm{\pi}}^{-i_1},...,\tilde{\bm{\pi}}^{-i_n}$. 
In the process of parallelizing $\{\mathop{\rm maximize}\nolimits_{\tilde{\bm{\pi}}^{-i_p}}M_{\bm{\pi}}^{-i_p}(\tilde{\pi}^{i_p}, \tilde{\bm{\pi}}^{-i_p})\}_{p=1}^n$ during the dependent step across pipelines $p=1,...,n$, we encounter overlapping policies among $\tilde{\bm{\pi}}^{-i_1},...,\tilde{\bm{\pi}}^{-i_n}$. This overlap implies that a single agent's policy (for instance, $\tilde{\pi}^1$) is subject to participate in the optimization of multiple pipelines (e.g. $p=2,...,n$) simultaneously, which is impractical. Thus, we propose a specific selection criterion that selects \emph{one} specific agent's policy from $\tilde{\bm{\pi}}^{-i_p}$ for each pipeline $p$ to optimize, ensuring there are no overlapping policies among pipelines. This non-overlapping selection criterion satisfies
\begin{equation}
\small
   \label{equation_mapping}
   f: i_{1:n}\rightarrow j_{1:n}, \quad \text{where}\,\, j_p \in -i_p , \,\, j_{1:n}=\mathcal{N}.             
\end{equation}
% It outlines a selection rule for each pipeline $p$: policy $\tilde{\pi}^{i_p}$ is optimized at the independent step, while only $\tilde{\pi}^{j_p}$ (where $j_p=f(i_p)$) is selected from $\tilde{\bm{\pi}}^{-i_p}$ to be optimized at the dependent step. 
This selection criterion specifies that, for each pipeline $p$, we optimize policy $\tilde{\pi}^{i_p}$ at the independent step and then select a candidate policy $\tilde{\pi}^{j_p}$ (where $j_p=f(i_p)$) from $\tilde{\bm{\pi}}^{-i_p}$ to be optimized at the dependent step.
By doing so, it ensures no agent's policy is included in the optimization of more than one pipeline at a time. 
To indicate that $\tilde{\pi}^{j_p}$ is the selected candidate policy, we rephrase $M_{\bm{\pi}}^{-i_p}(\tilde{\pi}^{i_p},\tilde{\bm{\pi}}^{-i_p})$ as $M_{\bm{\pi}}^{-i_p}(\tilde{\bm{\pi}}^{-j_p},\tilde{\pi}^{j_p})$. Thus, the dependent step executes $\{\mathop{\rm maximize}\nolimits_{\tilde{\pi}^{j_p}}M_{\bm{\pi}}^{-i_p}(\tilde{\bm{\pi}}^{-j_p},\tilde{\pi}^{j_p})\}_{p=1}^n$ instead. However, this does not mean that updates to the non-selected agents' policies are excluded. Thanks to the non-overlapping selection, they can be optimized by other pipelines as $j_{1:n}=\mathcal{N}$ ensures that $n$ agents can be exactly selected by $n$ pipelines for optimization without overlap.

\paragraph{The modified objectives.} 
% A natural question: is the improved lower bound from the single pipeline optimization still valid under the full-pipeline paradigm? 
In the full-pipeline paradigm, the pipelines are not independent but rather exhibit complex interconnections, e.g., the policy optimization of one pipeline will also affect the lower bounds of other pipelines. This could potentially compromise the inherent monotonicity of the single pipeline optimization. 
Therefore, to maintain the monotonic improvement of the lower bound in the full-pipeline paradigm, we next give the modified objectives of original $M_{\bm{\pi}}^{i_p}(\tilde{\pi}^\varnothing,{\tilde{\pi}}^{i_p})$ and $M_{\bm{\pi}}^{-i_p}(\tilde{\bm{\pi}}^{-j_p},\tilde{\pi}^{j_p})$. We split $A_{\bm{\pi}}(s, \bm{a})$ into $n$ arbitrary scalar values $A^1,...,A^n$, i.e., $\sum_{i=1}^n{A^{i}}=A_{\bm{\pi}}(s, \bm{a})$. At the \emph{independent step} of $k$-th iteration, we perform the parallel maximization of the modified objective $\widetilde{M}_{\bm{\pi}_k}^{i_p}(\pi_{k+1}^\varnothing,\pi^{i_p})$ w.r.t $\pi^{i_p}$ of all pipelines: 
\begin{equation}
    \small
   \label{equation_full_step1}
   \Big\{\pi_{k+0.5}^{i_p} = \arg\max\limits_{\pi^{i_p}}\Big[\mathbb{E}_{s,\bm{a}} \big[r(\pi^{i_p}) A^{i_p}\big]-\mathcal{C}D_{\rm KL}^{\max}({\pi}^{i_p}_{k}, {\pi}^{i_p})\Big]\Big\}_{p=1}^n,
\end{equation}
where $\{\pi_{k+0.5}^{i_p}\}_{p=1}^n$ denotes the intermediate policies of $k$-th iteration, specifically the update outcomes of the independent step. In the full-pipeline paradigm, the result $\pi_{k+0.5}^{i_p}$ generated by pipeline $p$ at the independent step will undergo additional optimization by one of the complementary pipelines $-p$ at the dependent step due to the non-overlapping selection. 
This mutual influence on the policies among pipelines is fundamentally different from the single pipeline optimization and HAPPO. 
Then, the \emph{dependent step} maximizes the modified objective $\widetilde{M}_{\bm{\pi}_k}^{-i_p}(\bm{\pi}_{k+1}^{-j_p}, \pi^{j_p})$ w.r.t $\pi^{j_p}$ of all pipelines:
\begin{small}
\begin{align}
    \label{equation_full_step2}
     \Big\{\pi_{k+1}^{j_p} &= \arg\max\limits_{\pi^{j_p}}\Big[\mathbb{E}_{s,\bm{a}} \Big[\Big(r(\pi^{j_p})\cdot r\big(\bm{\pi}_{k+1}^{-\{i_p,j_p\}}\big) - 1\Big)\nonumber\\
     &\qquad \ \ \cdot r(\pi_{k+1}^{i_p}) \cdot A^{j_p}\Big] - \mathcal{C}D_{\rm KL}^{\max}({\pi}_k^{j_p}, \pi^{j_p})\Big]\Big\}_{p=1}^n   \\ 
     \text{s.t.}\quad &\sum\nolimits_{i\in \mathcal{N}}{\mu(\bm{\pi}^{-i}_{k+1})} \geq \sum\nolimits_{i \in \mathcal{N}}{\mu(\bm{\pi}^{-i}_{k})}, \nonumber \\
     % &\mu(\bar{\bm{\pi}}^{-i}) = \mathbb{E}_{\substack{s\sim \rho_{\bm{\pi}_k} \\ a^{i}\sim \pi_{k+0.5}^{i} \\ \bm{a}^{-i}\sim \bar{\bm{\pi}}^{-i}}}\left[A_{\bm{\pi}_{k}}^{i}\right].\nonumber
     &\mu(\tilde{\bm{\pi}}^{-i}) = \mathbb{E}_{s\sim \rho_{\bm{\pi}_k, a^{i}\sim \pi_{k+0.5}^{i} , \bm{a}^{-i}\sim \tilde{\bm{\pi}}^{-i}}}\big[A^{i}\big].\nonumber
\end{align}
\end{small}
% Note that, in the cases of parameter sharing, Equation~(\ref{equation_full_step1}) is implemented by maximizing the expectation along pipelines: $\mathop{\rm maximize}\limits_{\pi^{i_1},...,\pi^{i_n}}{\frac{1}{n}\sum_{p=1}^n \widetilde{M}_{\bm{\pi}_k}^{i_p}(\pi_{k+1}^\varnothing,\pi^{i_p})}$ due to the interdependency among the policies of agents. The same applies to Equation~(\ref{equation_full_step2}).
% Besides, a constraint on the expectation of $\{A^i\}_{i\sim \mathcal{N}}$ is introduced at the dependent step to ensure the monotonically improved lower bounds of all pipelines. In the next section, we will prove how the modified objectives can guarantee non-decreasing lower bounds.
Note that, in the cases of parameter sharing, Equations~(\ref{equation_full_step1}) and (\ref{equation_full_step2}) are executed by maximizing the expectation along pipelines due to the interdependency among the policies of agents, e.g. $\mathop{\rm maximize}\nolimits_{(\pi^{i_1},...,\pi^{i_n})}{\frac{1}{n}\sum_{p=1}^n \widetilde{M}_{\bm{\pi}_k}^{i_p}(\pi_{k+1}^\varnothing,\pi^{i_p})}$.
Besides, a constraint on the expectation of $\{A^i\}_{i\sim \mathcal{N}}$ is introduced at the dependent step to ensure the monotonicity of all pipelines' lower bounds. In the next section, we will prove how the modified objectives can guarantee non-decreasing lower bounds.

\subsection{Theoretical Foundation}\label{section_theoretical_guarantee}
We now present the theoretical analysis for full-pipeline optimization. We begin by providing a lemma that establishes the relationship between the lower bounds of $n$ pipelines.
\begin{lemma}[Shared Lower Bound]
   \label{lemma_shared_lower_bound}
   Let $\mathcal{L}_{\bm{\pi}}(\tilde{\pi}^{i_p},\tilde{\pi}^{j_p})=\mathcal{J}(\bm{\pi})+M_{\bm{\pi}}^{i_p}(\tilde{\pi}^\varnothing,{\tilde{\pi}}^{i_p})+M_{\bm{\pi}}^{-i_p}(\tilde{\bm{\pi}}^{-j_p}, \tilde{\pi}^{j_p})$ be the lower bound of pipeline $p$. For all pipelines $p=1,...,n$, their lower bounds satisfies
   \begin{equation}
   \small
      \mathcal{L}_{\bm{\pi}}(\tilde{\pi}^{i_1},\tilde{\pi}^{j_1}) = ,\dots, = \mathcal{L}_{\bm{\pi}}(\tilde{\pi}^{i_n},\tilde{\pi}^{j_n}). \nonumber
   \end{equation}
\end{lemma}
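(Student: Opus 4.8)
The plan is to show that, for \emph{any} fixed future joint policy $\tilde{\bm{\pi}}$, every pipeline's expression $\mathcal{L}_{\bm{\pi}}(\tilde{\pi}^{i_p},\tilde{\pi}^{j_p})$ collapses to one and the same quantity---namely the joint trust-region bound $\mathcal{J}(\bm{\pi})+\mathbb{E}_{s\sim\rho_{\bm{\pi}},\bm{a}\sim\tilde{\bm{\pi}}}[A_{\bm{\pi}}(s,\bm{a})]-\sum_{i\in\mathcal{N}}\mathcal{C}D_{\rm KL}^{\max}(\pi^i,\tilde{\pi}^i)$---which carries no reference to the index $p$. Since this target is independent of $p$, equality across all $n$ pipelines follows immediately. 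First I would note that the rephrasing $M_{\bm{\pi}}^{-i_p}(\tilde{\bm{\pi}}^{-j_p},\tilde{\pi}^{j_p})$ is only a relabeling of the arguments of $M_{\bm{\pi}}^{-i_p}(\tilde{\pi}^{i_p},\tilde{\bm{\pi}}^{-i_p})$ (it merely highlights which agent $j_p$ is actively optimized in pipeline $p$), so both evaluate to the same value at a fixed $\tilde{\bm{\pi}}$; hence it suffices to analyze the original form.

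Next I would expand the two surrogate terms of a single pipeline using Definition~\ref{definition_surrogate} and separate them into an advantage part and a KL part. For the advantage part, I would substitute the multi-agent advantage definition in Equation~(\ref{equation_advantage_function}) together with the state-action value definition in Equation~(\ref{equation_state_action_value_function}). With an empty predecessor set, the first term reduces to $\mathbb{E}[Q_{\bm{\pi}}^{i_p}(s,a^{i_p})-V_{\bm{\pi}}(s)]$, while the second term, whose predecessor set is $\{i_p\}$ and whose contribution set is the complement $-i_p$, reduces to $\mathbb{E}[Q_{\bm{\pi}}(s,\bm{a})-Q_{\bm{\pi}}^{i_p}(s,a^{i_p})]$. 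The crux is the telescoping cancellation: the $Q_{\bm{\pi}}^{i_p}(s,a^{i_p})$ contributions appear with opposite signs, and since $\{i_p\}\cup(-i_p)=\mathcal{N}$ the surviving terms are $\mathbb{E}[Q_{\bm{\pi}}(s,\bm{a})-V_{\bm{\pi}}(s)]=\mathbb{E}_{s\sim\rho_{\bm{\pi}},\bm{a}\sim\tilde{\bm{\pi}}}[A_{\bm{\pi}}(s,\bm{a})]$, which is free of $p$.

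For the KL part, I would observe that the penalty of the first term is $\mathcal{C}D_{\rm KL}^{\max}(\pi^{i_p},\tilde{\pi}^{i_p})$ and the penalty of the second term sums $\mathcal{C}D_{\rm KL}^{\max}(\pi^i,\tilde{\pi}^i)$ over $i\in -i_p$; since $\{i_p\}\cup(-i_p)=\mathcal{N}$ is a disjoint union, their total is $\sum_{i\in\mathcal{N}}\mathcal{C}D_{\rm KL}^{\max}(\pi^i,\tilde{\pi}^i)$, again independent of $p$. Adding back $\mathcal{J}(\bm{\pi})$ then yields the common target for every $p$, which proves the chain of equalities.

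I expect the main obstacle to be bookkeeping rather than deep structure: one must keep careful track of which sampling policies each expectation runs over (the first term only averages $a^{i_p}$, whereas the second averages the full $\bm{a}$), and verify that the $Q_{\bm{\pi}}^{i_p}$ term indeed cancels despite living under nominally different expectations. Establishing that the extra $\bm{a}^{-i_p}$-expectation is inert because $Q_{\bm{\pi}}^{i_p}$ is measurable w.r.t.\ $(s,a^{i_p})$ alone is the single step where care is required; everything else is routine substitution together with the observation that both the advantage sum and the KL sum are symmetric in the choice of separated agent $i_p$.
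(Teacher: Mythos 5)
Your proposal is correct and follows essentially the same route as the paper: both reduce each pipeline's bound to a $p$-independent expression by observing that $A_{\bm{\pi}}^{i_p}(s,a^{\varnothing},a^{i_p})+A_{\bm{\pi}}^{-i_p}(s,a^{i_p},\bm{a}^{-i_p})=A_{\bm{\pi}}(s,\bm{a})$ (the telescoping of $Q_{\bm{\pi}}^{i_p}$, which is the paper's Lemma~\ref{lemma_fp3o_decomposition}) and that the KL penalties sum over the disjoint union $\{i_p\}\cup(-i_p)=\mathcal{N}$. The only cosmetic difference is that the paper passes through the importance-sampling rewrites of Appendix~\ref{appendix_proof_equation67} to land on the ratio-product form $\mathbb{E}_{s,\bm{a}\sim\bm{\pi}}\big[\prod_{i}r(\tilde{\pi}^{i})A_{\bm{\pi}}(s,\bm{a})\big]$ (which it reuses later in Theorem~\ref{theorem_imporved_lower_bound_fp3o}), whereas you stop at the equivalent $\mathbb{E}_{s\sim\rho_{\bm{\pi}},\bm{a}\sim\tilde{\bm{\pi}}}[A_{\bm{\pi}}(s,\bm{a})]$ form.
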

The proof is in Appendix~\ref{appendix_shared_lower_bound}. 
The conclusion of Lemma~\ref{lemma_shared_lower_bound} suggests that, while $n$ pipelines employ distinct decomposition ways, \emph{a shared lower bound persists among them}, i.e., the lower bounds of all pipelines are equal. This means that any policy update has the equivalent influence on all pipelines' lower bounds. We introduce the notation $\mathcal{L}^{S}_{\bm{\pi}}(\tilde{\pi}^{1},\dots,\tilde{\pi}^{n})$ as the shared lower bound. Thus, instead of examining the lower bounds of all pipelines, we need only to prove that the shared lower bound is monotonically increasing in the full-pipeline optimization.

Given that the core process of the full-pipeline paradigm is the successive maximization of the modified objectives $\widetilde{M}_{\bm{\pi}_k}^{i_p}$ and $\widetilde{M}_{\bm{\pi}_k}^{-i_p}$ at all pipelines, the following lemma next provides the resulting inequality drawn from the independent (Equation~(\ref{equation_full_step1})) and dependent (Equation~(\ref{equation_full_step2})) steps.

\begin{lemma}
  \label{lemma_KL_expectation}
  At $k$-the policy iteration, the intermediate policies $\{\pi_{k+0.5}^{i_p}\}_{p=1}^n$ and the new policies $\{\pi_{k+1}^{j_p}\}_{p=1}^n$ are produced by Equations~(\ref{equation_full_step1}) and (\ref{equation_full_step2}) respectively. Regardless of whether parameter sharing is used or not, the following inequality holds:
  \begin{small}
  \begin{align}
   &\sum_{i=1}^n \mathcal{C}D_{\rm KL}^{\max}({\pi}^{i}_{k}, {\pi}_{k+1}^{i}) \leq \sum_{i=1}^n \mathbb{E}_{s,\bm{a}} \Big[\nonumber\\ &\qquad \Big(r(\bm{\pi}_{k+1})-r(\pi_{k+0.5}^{i})\cdot r(\bm{\pi}_{k+1}^{-i})+r(\pi_{k+0.5}^{i})-1\Big) A^{i} \Big].\nonumber
\end{align}
\end{small}
\end{lemma}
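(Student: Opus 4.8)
The plan is to derive the claim as the sum of two one-sided inequalities, one extracted from the independent step and one from the dependent step, arranged so that an intermediate KL term cancels. Throughout I treat the split $\{A^i\}$ (with $\sum_i A^i = A_{\bm{\pi}_k}$), the sampling measure ($s\sim\rho_{\bm{\pi}_k}$, $\bm{a}\sim\bm{\pi}_k$), and every ratio built from policies other than the one currently being optimized as fixed quantities. I also use Assumption~\ref{assumption_policy} to factorize joint ratios whenever needed, in particular $r(\bm{\pi}_{k+1}) = r(\pi_{k+1}^{i})\,r(\bm{\pi}_{k+1}^{-i})$ and $r(\bm{\pi}_{k+1}^{-\{i_p,j_p\}})\,r(\pi_{k+1}^{i_p}) = r(\bm{\pi}_{k+1}^{-j_p})$.

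First I would treat the independent step. Since $\pi_{k+0.5}^{i_p}$ maximizes the objective in Equation~(\ref{equation_full_step1}), its value is at least the value at the competitor $\pi^{i_p}=\pi_k^{i_p}$, where the ratio equals $1$ and the KL term vanishes. Rearranging and summing over all pipelines (using $i_{1:n}=\mathcal{N}$ to relabel) gives $\sum_{i} \mathcal{C}D_{\rm KL}^{\max}(\pi_k^{i},\pi_{k+0.5}^{i})\le \sum_i \mathbb{E}_{s,\bm{a}}[(r(\pi_{k+0.5}^{i})-1)A^{i}]$, a bound on the \emph{intermediate} KL only.

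Next comes the dependent step, where the crucial move is to compare the maximizer $\pi_{k+1}^{j_p}$ of Equation~(\ref{equation_full_step2}) against the \emph{intermediate} policy $\pi_{k+0.5}^{j_p}$ rather than the old one (this policy is well-defined, being the independent-step output of the pipeline that separates $j_p$). The constant term and the frozen factor $r(\pi_{k+1}^{i_p})$ cancel between the two evaluations, and after the factorizations above the optimality inequality rearranges to $\mathcal{C}D_{\rm KL}^{\max}(\pi_k^{j_p},\pi_{k+1}^{j_p}) - \mathcal{C}D_{\rm KL}^{\max}(\pi_k^{j_p},\pi_{k+0.5}^{j_p}) \le \mathbb{E}_{s,\bm{a}}[(r(\bm{\pi}_{k+1}) - r(\pi_{k+0.5}^{j_p})\,r(\bm{\pi}_{k+1}^{-j_p}))A^{j_p}]$. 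Because $f$ is a bijection onto $j_{1:n}=\mathcal{N}$, summing over pipelines lets the index run over all agents. Adding this to the independent-step inequality, the intermediate terms $\sum_i \mathcal{C}D_{\rm KL}^{\max}(\pi_k^{i},\pi_{k+0.5}^{i})$ cancel exactly, leaving precisely the stated inequality on the right-hand side.

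I expect the main obstacle to be enforcing uniformity across parameter-sharing regimes, as the statement requires. When parameters are shared, per-pipeline optimality is no longer available, so the individual inequalities above cannot be asserted; instead the whole argument must be run at the level of the averaged objectives $\frac{1}{n}\sum_p \widetilde{M}_{\bm{\pi}_k}^{i_p}$ and $\frac{1}{n}\sum_p \widetilde{M}_{\bm{\pi}_k}^{-i_p}$, whose joint maximizers dominate the all-old and all-intermediate competitors, so that exactly the summed bounds I need survive while the per-agent ones may not. A secondary point is justifying $\pi_{k+0.5}^{j_p}$ as an admissible competitor at the dependent step: this comparison relies only on optimality over the policy space, with the $\mu$-constraint of Equation~(\ref{equation_full_step2}) set aside for the later monotonicity argument rather than used in this KL bound.
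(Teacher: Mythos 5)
Your proposal is correct and follows essentially the same route as the paper's proof: optimality of the independent step against the old-policy competitor, optimality of the dependent step against the \emph{intermediate} policy $\pi_{k+0.5}^{j_p}$ so that the intermediate KL terms cancel upon combining, relabeling via the bijection $f$, and — exactly as in the paper — passing to the averaged objectives in the parameter-sharing case so that only the summed inequalities are asserted. You also correctly observe that the $\mu$-constraint of Equation~(\ref{equation_full_step2}) plays no role here and is reserved for Theorem~\ref{theorem_imporved_lower_bound_fp3o}.
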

The proof is outlined in Appendix~\ref{appendix_KL_expectation}. It is worth noting that during the derivation of Lemma~\ref{lemma_KL_expectation}, these parallel pipelines are not treated as independent of each other, but rather as exhibiting complex interconnections. It is these interconnections that help to reach a \emph{unified} inequality between the KL divergence term and expectation term for different parameter-sharing types. It forms a fundamental foundation for the generality of the algorithm as well as our following main theorem.
\begin{theorem}
   \label{theorem_imporved_lower_bound_fp3o}
   Whether parameter sharing is used or not, given Assumption~\ref{assumption_policy}, the new policies $\{\pi_{k+1}^{j_p}\}_{p=1}^n$ of full-pipeline optimization enable the shared lower bound to be monotonically improved:
   $\mathcal{L}^{S}_{\bm{\pi}_k}(\pi_{k+1}^{1},...,\pi_{k+1}^{n}) \geq \mathcal{L}^{S}_{\bm{\pi}_k}(\pi_{k}^{1},...,\pi_{k}^{n}),$
   which guarantees $\mathcal{J}(\bm{\pi}_{k+1})\geq \mathcal{J}(\bm{\pi}_{k})$.
\end{theorem}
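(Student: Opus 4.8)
The plan is to reduce the claim to showing that the \emph{shared} lower bound is non-decreasing, and then chain this with the fact that it is a genuine lower bound on $\mathcal{J}$. By Lemma~\ref{lemma_shared_lower_bound} every pipeline evaluates to the same value $\mathcal{L}^S_{\bm{\pi}_k}$, so I may work with this single quantity. Two observations pin down the endpoints of the chain. First, taking $\tilde{\bm{\pi}}=\bm{\pi}_k$ makes every ratio $r(\cdot)=1$ and every KL term vanish, so $\mathcal{L}^S_{\bm{\pi}_k}(\pi_k^1,\dots,\pi_k^n)=\mathcal{J}(\bm{\pi}_k)$. Second, the single-pipeline bound~(\ref{equation_multibound}) together with Lemma~\ref{lemma_shared_lower_bound} gives $\mathcal{J}(\bm{\pi}_{k+1})\ge \mathcal{L}^S_{\bm{\pi}_k}(\pi_{k+1}^1,\dots,\pi_{k+1}^n)$. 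Hence it suffices to prove $\mathcal{L}^S_{\bm{\pi}_k}(\pi_{k+1}^1,\dots,\pi_{k+1}^n)\ge \mathcal{J}(\bm{\pi}_k)$, after which $\mathcal{J}(\bm{\pi}_{k+1})\ge\mathcal{J}(\bm{\pi}_k)$ follows immediately.

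Next I would put the shared bound into closed form. Expanding $M_{\bm{\pi}_k}^{i_p}+M_{\bm{\pi}_k}^{-i_p}$ through Definition~\ref{definition_surrogate}, the two advantage terms telescope via the decomposition~(\ref{equation_multi_agent_advantage_decomposition}) to $\mathbb{E}_{s\sim\rho_{\bm{\pi}_k},\bm{a}\sim\tilde{\bm{\pi}}}[A_{\bm{\pi}_k}(s,\bm{a})]$, which after importance sampling equals $\mathbb{E}_{s,\bm{a}}[(r(\tilde{\bm{\pi}})-1)A_{\bm{\pi}_k}(s,\bm{a})]$, while the KL terms collect to $\sum_{i=1}^n\mathcal{C}D_{\rm KL}^{\max}(\pi_k^i,\tilde{\pi}^i)$; this form is manifestly $p$-independent, reconfirming Lemma~\ref{lemma_shared_lower_bound}. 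Subtracting $\mathcal{J}(\bm{\pi}_k)$ and evaluating at $\bm{\pi}_{k+1}$, the target reduces to $\mathbb{E}_{s,\bm{a}}[(r(\bm{\pi}_{k+1})-1)A_{\bm{\pi}_k}]\ge\sum_{i=1}^n\mathcal{C}D_{\rm KL}^{\max}(\pi_k^i,\pi_{k+1}^i)$, where Assumption~\ref{assumption_policy} is what lets the joint ratios factorize into per-agent ratios throughout.

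I would then discharge the KL sum using Lemma~\ref{lemma_KL_expectation}: since its right-hand side upper-bounds $\sum_{i=1}^n\mathcal{C}D_{\rm KL}^{\max}(\pi_k^i,\pi_{k+1}^i)$, it suffices to show that $\mathbb{E}_{s,\bm{a}}[(r(\bm{\pi}_{k+1})-1)A_{\bm{\pi}_k}]$ dominates that right-hand side. Writing the left-hand side as $\sum_{i=1}^n\mathbb{E}_{s,\bm{a}}[(r(\bm{\pi}_{k+1})-1)A^i]$ via the scalar split $\sum_{i=1}^n A^i=A_{\bm{\pi}_k}$ and cancelling the common $r(\bm{\pi}_{k+1})$ and constant contributions, the difference collapses to $\sum_{i=1}^n\mathbb{E}_{s,\bm{a}}[r(\pi_{k+0.5}^i)(r(\bm{\pi}_{k+1}^{-i})-1)A^i]$. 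The key recognition is that, after rewriting $\mu(\tilde{\bm{\pi}}^{-i})$ by importance sampling onto $\bm{\pi}_k$ as $\mathbb{E}_{s,\bm{a}}[r(\pi_{k+0.5}^i)r(\tilde{\bm{\pi}}^{-i})A^i]$ (so that $\mu(\bm{\pi}_k^{-i})=\mathbb{E}_{s,\bm{a}}[r(\pi_{k+0.5}^i)A^i]$), this difference is exactly $\sum_{i\in\mathcal{N}}\mu(\bm{\pi}_{k+1}^{-i})-\sum_{i\in\mathcal{N}}\mu(\bm{\pi}_k^{-i})$, which is non-negative by the constraint imposed in~(\ref{equation_full_step2}). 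Chaining the inequalities establishes the monotonicity of the shared lower bound, and with it the objective guarantee.

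The routine parts are the telescoping and the algebraic cancellation; the step that needs care is the importance-sampling bookkeeping that matches the leftover term to the difference of $\mu$'s, since $A^i$ may depend on the full joint action and the sampling distribution mixes $\pi_{k+0.5}^i$ with $\bm{\pi}_{k+1}^{-i}$. I also expect the genuinely load-bearing content to sit inside Lemma~\ref{lemma_KL_expectation}: it is there that the interconnections among pipelines are exploited to obtain a single KL-versus-expectation inequality valid for every parameter-sharing type, so that the theorem itself can treat the shared and independent cases uniformly.
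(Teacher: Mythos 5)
Your proposal is correct and follows essentially the same route as the paper's own proof: both reduce the claim to the chain $\mathcal{J}(\bm{\pi}_{k+1})\geq\mathcal{L}^{S}_{\bm{\pi}_k}(\pi_{k+1}^{1},\dots,\pi_{k+1}^{n})\geq\mathcal{L}^{S}_{\bm{\pi}_k}(\pi_{k}^{1},\dots,\pi_{k}^{n})=\mathcal{J}(\bm{\pi}_k)$, invoke Lemma~\ref{lemma_KL_expectation} as the load-bearing inequality, and identify the leftover term $\sum_{i}\mathbb{E}_{s,\bm{a}}[r(\pi_{k+0.5}^{i})(r(\bm{\pi}_{k+1}^{-i})-1)A^{i}]$ with $\sum_{i}\mu(\bm{\pi}_{k+1}^{-i})-\sum_{i}\mu(\bm{\pi}_{k}^{-i})$, discharged by the constraint in Equation~(\ref{equation_full_step2}). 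The only difference is presentational (you read the algebra in the opposite direction from the paper), and your importance-sampling bookkeeping for $\mu(\cdot)$ matches the paper's.
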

% \begin{proof}
%    The proof of the improved shared lower bound is unfolded by incorporating the constraint in Equation~(\ref{equation_full_step2}) with the inequality in Lemma~\ref{lemma_KL_expectation}. More details are outlined in Appendix~\ref{appendix_imporved_lower_bound_fp3o}.
% \end{proof}
The proof unfolds based on the inequality in Lemma~\ref{lemma_KL_expectation} and the detailed derivation is in Appendix~\ref{appendix_imporved_lower_bound_fp3o}. This theorem demonstrates that the shared lower bound consistently exhibits a non-decreasing trend for different parameter-sharing types as the policy iteration progresses. In other words, it guarantees that the lower bounds $\{\mathcal{L}_{\bm{\pi}_k}(\pi_{k+1}^{i_p},\pi_{k+1}^{j_p})\}_{p=1}^n$ of all pipelines, as defined by Lemma~\ref{lemma_shared_lower_bound}, are non-decreasing. This ensures that the true objective $\mathcal{J}$ can enjoy the monotonic improvement property in the full-pipeline paradigm. 

\subsection{Practical Implementation}\label{section_fp3o}
\begin{algorithm}[tb]
   \caption{Policy iteration of FP3O}
   \label{algorithm_policyiteration_fp3o}
 \begin{algorithmic}[1]
      \STATE Initialize the parameters of policy networks, $\bm{\theta}_{0}=\{\theta_{0}^1,\dots,\theta_{0}^n\}$.
      \FOR{$k=1,2,...$ until convergence}
      \STATE Estimate the advantage functions $\hat{A}_{\pi}$ and $\{\hat{A}^i\}_{i=1}^n$.
      % \STATE Solve the optimization of independent and dependent steps for all pipelines:
      \STATE Solve the independent step at all pipelines: \\
      % $\left\{
      %    \begin{array}{c}
      %       \theta_{k+0.5}^{i_1}=\arg\max\limits_{\theta^{i_1}}L^{\rm FP3O}(\theta^{i_1}|\bm{\theta}_{k}) \\
      %       \vdots                                                                                            \\
      %       \theta_{k+0.5}^{i_n}=\arg\max\limits_{\theta^{i_n}}L^{\rm FP3O}(\theta^{i_n}|\bm{\theta}_{k})    \\
      %    \end{array}
      % \right.$  
      $\Big\{\theta_{k+0.5}^{i_p}=\arg\max\limits_{\theta^{i_p}}L^{\rm FP3O}(\theta^{i_p}|\bm{\theta}_{k})\Big\}_{p=1}^{n}$
      \STATE Calculate the probabilities $\big\{\pi^i_{\theta_{k+0.5}^i}(a^i|s)\big\}_{i,a^i,s}$ with respect to all agents, actions and states.
      \STATE Select the candidate policies according to non-overlapping selection.
      \IF{$\sum_{i}\mu(\bm{\pi}^{-i}_{\bm{\theta}^{-i}_{k+0.5}})\geq \sum_{i}\mu(\bm{\pi}^{-i}_{\bm{\theta}^{-i}_{k}})$}
      \STATE Solve the dependent step at all pipelines:
      % $\left\{
      %    \begin{array}{c}
      %       \theta_{k+1}^{j_1}=\arg\max\limits_{\theta^{j_1}}L^{\rm FP3O}(\theta^{j_1}|\bm{\theta}_{k+0.5}) \\
      %       \vdots                                                                                            \\
      %       \theta_{k+1}^{j_n}=\arg\max\limits_{\theta^{j_n}}L^{\rm FP3O}(\theta^{j_n}|\bm{\theta}_{k+0.5})    \\
      %    \end{array}
      % \right.$ 
      \\
      $\Big\{\theta_{k+1}^{j_p}=\arg\max\limits_{\theta^{j_p}}L^{\rm FP3O}(\theta^{j_p}|\bm{\theta}_{k+0.5})\Big\}_{p=1}^{n}$
      \ENDIF
      \ENDFOR
 \end{algorithmic}
 \end{algorithm}

% Given that the full-pipeline optimization consists of two steps, we denote the old policy parameter set that we aim to improve as $\theta_{\rm old}$, the intermediate policy parameter set generated by the independent step as $\theta_{\rm int}$, and the new policy parameter set generated by the dependent step as $\theta_{\rm new}$.
Next, we leverage full-pipeline optimization into practice with the parameterized policies. At dependent step (Equation~(\ref{equation_full_step2})) of $k$-th iteration, the policies $\bm{\pi}^{-\{i_p,j_p\}}_{k+1}$ and $\pi_{k+1}^{i_p}$ are dynamically changed by other pipelines at each mini-batch update. One way is to directly use the dynamically changed policies like \citep{wu2021coordinated}. However, it entails a significant computational burden as it requires the calculation of the probabilities at every mini-batch update. Additionally, the dynamic probabilities also lead to non-stationary optimization objectives. Fortunately, the full-pipeline paradigm enables us to utilize the intermediate policies $\{\pi_{k+0.5}^1,...,\pi_{k+0.5}^n\}$ as a heuristic approximation, providing a more computationally efficient and stable alternative for the dynamic policies. This, consequently, simplifies the constraint in Equation~(\ref{equation_full_step2}) as a condition.

Then, we parameterize the agent's policy $\pi_{\theta^{i}}^{i}$ by parameter vector $\theta^{i}$, and $\bm{\pi}_{\bm{\theta}}$ is the joint parameterized policy with $\bm{\theta}=\{\theta^{1},...,\theta^{n}\}$. Here, these parameters can be in any form of sharing. Following \cite{schulman2017ppo}, we replace the KL divergence penalty with the clipping mechanism, incorporating an outer minimization to ensure that they are the lower bounds to the modified objectives. To simplify the notation, we integrate the modified objectives of the independent and dependent steps into a main objective and yield the FP3O algorithm, which is a programming-friendly implementation of full-pipeline optimization (see Appendix~\ref{appendix_reproducibility}). The main objective $L^{\rm FP3O}(\theta^i|\bar{\bm{\theta}})$ is formulated as
\begin{small}
\begin{align}
   % \label{equation_main_objective_fp3o}
   &\mathbb{E}_{s,\bm{a}}\bigg[\min\bigg(\Big(r(\pi_{\theta^{i}}^{i})\cdot r\big(\bm{\pi}^{-\{j,i\}}_{\bar{\bm{\theta}}^{-\{j,i\}}}\big)-1\Big) r(\pi_{\bar{\theta}^{j}}^{j})\cdot\hat{A}^{i}, \nonumber\\ 
   &\qquad \Big({\rm clip}\big(r(\pi_{\theta^{i}}^{i}),1\pm\epsilon\big)\cdot r\big(\bm{\pi}^{-\{j,i\}}_{\bar{\bm{\theta}}^{-\{j,i\}}}\big)-1\Big) r(\pi_{\bar{\theta}^{j}}^{j})\cdot \hat{A}^{i}\bigg)\bigg],  \nonumber
\end{align}
\end{small}
where $j=f^{-1}(i)$, $f^{-1}(\cdot)$ is the inverse function of Equation~(\ref{equation_mapping}), and $\hat{A}^{i}$ is generated by $\hat{A}_{\bm{\pi}}$ (e.g., average split), which is an estimator of the joint advantage function (e.g., Generalized Advantage Estimation \citep{schulman2015gae}). $\{L^{\rm FP3O}(\theta^{i_p}|\bm{\theta}_{k})\}_{p=1}^n$ and $\{L^{\rm FP3O}(\theta^{j_p}|\bm{\theta}_{k+0.5})\}_{p=1}^n$ serve the independent and dependent steps respectively (see Appendix~\ref{appendix_main_objective_fp3o}). We present its policy iteration in Algorithm~\ref{algorithm_policyiteration_fp3o}.

\section{Experiments}\label{section_experiment}
\paragraph{Environments.}
We evaluate our algorithm on two challenging cooperative benchmarks: Multi-Agent MuJoCo (MAMuJoCo)~\citep{de2020mamujoco} and StarCraftII Multi-Agent Challenge (SMAC)~\citep{samvelyan2019starcraft}. MAMuJoCo is a continuous, partially observable task that groups different joints of a robot in MuJoCo simulator~\citep{todorov2012mujoco} and models them as different agents. The agents are regarded as heterogeneous, given the diversification of body part control. SMAC is a discrete, partially observable task aimed at training a team of ally units to defeat an opponent team, with scenarios categorized into homogeneous or heterogeneous tasks based on the types of ally units. See Appendix~\ref{appendix_environments} for the details of environments.

\paragraph{Network types and baselines.} In assessing the versatility of our algorithm, we adopt three network types including full, partial, and non-parameter sharing. Full parameter sharing means that different agents' policy networks share the same set of parameters, which is widely used to enhance learning efficiency and reduce parameters. For non-parameter sharing, each agent has an individual set of parameters to make personalized decisions, particularly for heterogeneous agents. Partial parameter sharing combines the properties of full and non-parameter sharing. In our experiments, we let the last layers of different agents' policy networks have individual parameters, while other layers share the same set of parameters. 
We extend HAPPO \citep{kuba2022happo}, MAPPO \citep{yu2021mappo}, and IPPO \citep{de2020ippo} to all three parameter-sharing types, using them as baselines. We also include CoPPO \citep{wu2021coordinated} as a baseline with full parameter sharing in Appendix~\ref{appendix_CoPPO_Baseline} due to its limitation in adaptability to partial/non-parameter sharing. For fairness, the settings of all algorithms are identical including all hyperparameters and network inputs. More details are provided in Appendix~\ref{appendix_hyperparameters}. 

% \subsection{Results}
\paragraph{Analysis on the condition.}
\begin{table*}[t]
   \small
   \centering
   \begin{tabular}{ @{\,}c @{\,}| @{\,\,} c @{\,\,} | c c c c | @{\,\,}c@{\,\,}}
      \toprule
 
      Network & Task                            & FP3O                             & HAPPO              & MAPPO              & IPPO               & Agent \\ \midrule
      % \multirow{6}{*}{$\begin{array}{c}\rotatebox{90}{\text{Shared}} \end{array}$}
      \multirow{6}{*}{$\begin{array}{@{\,}c@{\,}}{\text{Full Param. Sharing}} \end{array}$}
        & 2-Agent Reacher [2$\times$1]     & \phantom{0}\textbf{-32.7}\tiny{$\pm$5.8} \phantom{0}  & \phantom{0}\underline{-33.2}\tiny{$\pm$2.2} \phantom{0}   & \phantom{0}-45.4\tiny{$\pm$11.3} \                        & \phantom{0}-51.5\tiny{$\pm$4.5} \phantom{0} & HE    \\
        & 2-Agent Ant [2$\times$4]         & \textbf{2617.4}\tiny{$\pm$190.9}                      & \underline{2296.2}\tiny{$\pm$181.3}                       & 1939.3\tiny{$\pm$52.1}\phantom{0}                         & 1584.8\tiny{$\pm$192.3} & HE    \\
        & 2-Agent Walker [2$\times$3]      & \textbf{2517.2}\tiny{$\pm$393.7}                      & \phantom{0}907.0\tiny{$\pm$280.8}                       & \underline{1684.3}\tiny{$\pm$858.7}                         & \phantom{0}523.4\tiny{$\pm$208.4} & HE    \\
        & 3-Agent Hopper [3$\times$1]      & \underline{3514.1}\tiny{$\pm$133.4}                   & \textbf{{3554.7}}\tiny{$\pm$118.3}                       & 3373.3\tiny{$\pm$238.3}                                    & 2203.7\tiny{$\pm$855.0} & HE    \\
        & 6-Agent HalfCheetah [6$\times$1] & \textbf{4128.3}\tiny{$\pm$366.9}                      & \underline{4058.8}\tiny{$\pm$129.7}                       & 3170.5\tiny{$\pm$116.6}                                   & 2683.5\tiny{$\pm$431.3} & HE    \\
        & Manyagent Swimmer [8$\times$2]   & \phantom{0}\textbf{419.1}\tiny{$\pm$33.8}\phantom{0}     & \phantom{0}404.9\tiny{$\pm$38.9}\phantom{0}                 & \phantom{0}\underline{409.2}\tiny{$\pm$48.3}\phantom{0} & \phantom{0}\phantom{0}63.6\tiny{$\pm$21.8}\phantom{0} & HE    \\
      %   & 17-Agent Humanoid [17$\times$1]  & xxx\tiny{$\pm$xxx}               & xxx\tiny{$\pm$xxx} & xxx\tiny{$\pm$xxx} & xxx\tiny{$\pm$xxx} & HE    \\ 
      \midrule
      %   \multirow{6}{*}{$\begin{array}{c}\rotatebox{90}{\text{Partially Shared}} \end{array}$}
      \multirow{6}{*}{$\begin{array}{@{\,}c@{\,}}{\text{Partial Param. Sharing}} \end{array}$}
        & 2-Agent Reacher [2$\times$1]     & \phantom{0}\underline{-34.6}\tiny{$\pm$3.2}\phantom{0}\phantom{0} & \phantom{0}-36.7\tiny{$\pm$4.1} \phantom{0}    & \phantom{0}\textbf{-33.4}\tiny{$\pm$4.8} \phantom{0}    & \phantom{0}-36.1\tiny{$\pm$1.9} \phantom{0} & HE    \\
        & 2-Agent Ant [2$\times$4]         & \textbf{3115.2}\tiny{$\pm$145.5}                                   & \underline{2809.5}\tiny{$\pm$61.9}\phantom{0} & {2306.7}\tiny{$\pm$255.4}                               & {2026.6}\tiny{$\pm$207.2} & HE    \\
        & 2-Agent Walker [2$\times$3]      & \textbf{3132.3}\tiny{$\pm$774.6}                                   & \underline{2717.4}\tiny{$\pm$581.7}           & 2748.7\tiny{$\pm$734.1}                                & \phantom{0}{651.7}\tiny{$\pm$266.0} & HE    \\
        & 3-Agent Hopper [3$\times$1]      & \textbf{2986.3}\tiny{$\pm$617.6}                                   & 1547.0\tiny{$\pm$321.7}                      & \underline{2678.1}\tiny{$\pm$997.8}                     & {2486.8}\tiny{$\pm$712.3} & HE    \\
        & 6-Agent HalfCheetah [6$\times$1] & \textbf{4879.3}\tiny{$\pm$845.1}                                    & 4413.7\tiny{$\pm$840.0}                     & \underline{4432.6}\tiny{$\pm$196.1}                     & {4336.1}\tiny{$\pm$768.7} & HE    \\
        & Manyagent Swimmer [8$\times$2]   & \phantom{0}\textbf{402.7}\tiny{$\pm$21.7}\phantom{0}                  & \phantom{0}353.9\tiny{$\pm$15.4}\phantom{0}  & \phantom{0}\underline{366.0}\tiny{$\pm$27.0}\phantom{0} & \phantom{0}\phantom{0}76.1\tiny{$\pm$2.8}\phantom{0}\phantom{0} & HE    \\
      %   & 17-Agent Humanoid [17$\times$1]  & xxx\tiny{$\pm$xxx}               & xxx\tiny{$\pm$xxx} & xxx\tiny{$\pm$xxx} & xxx\tiny{$\pm$xxx} & HE    \\ 
      \midrule
      % \multirow{6}{*}{$\begin{array}{c}\rotatebox{90}{\text{Separated}} \end{array}$}
      \multirow{6}{*}{$\begin{array}{@{\,}c@{\,}}{\text{Non-Param. Sharing}} \end{array}$}
        & 2-Agent Reacher [2$\times$1]     & \phantom{0}\textbf{-26.5}\tiny{$\pm$2.6}\phantom{0}\phantom{0}    & \phantom{0}-34.9\tiny{$\pm$4.3} \phantom{0}                  & \phantom{0}\underline{-31.4}\tiny{$\pm$6.4} \phantom{0} & \phantom{0}{-31.7}\tiny{$\pm$5.4} \phantom{0} & HE    \\
        & 2-Agent Ant [2$\times$4]         & \textbf{2793.8}\tiny{$\pm$385.1}                                  & \underline{2043.0}\tiny{$\pm$187.7}                  & 1825.4\tiny{$\pm$157.0}                                  & 1469.9\tiny{$\pm$36.1}\phantom{0} & HE    \\
        & 2-Agent Walker [2$\times$3]      & \textbf{3562.4}\tiny{$\pm$666.4}                                    & \underline{2461.8}\tiny{$\pm$368.3}                  & {2303.2}\tiny{$\pm$815.9}                                & \phantom{0}405.6\tiny{$\pm$107.9} & HE    \\
        & 3-Agent Hopper [3$\times$1]      & \underline{3560.2}\tiny{$\pm$107.7}                                    & \textbf{3597.1}\tiny{$\pm$87.8}\phantom{0}        & {{3390.1}}\tiny{$\pm$225.9}                            & 1851.4\tiny{$\pm$683.1} & HE    \\
        & 6-Agent HalfCheetah [6$\times$1] & \textbf{4792.7}\tiny{$\pm$263.9}                                    & \underline{4408.7}\tiny{$\pm$170.2}                  & 4103.8\tiny{$\pm$233.9}                                & 3831.3\tiny{$\pm$131.5} & HE    \\
        & Manyagent Swimmer [8$\times$2]   & \phantom{0}\textbf{379.1}\tiny{$\pm$48.3}\phantom{0}                 & \phantom{0}\underline{317.0}\tiny{$\pm$12.2}\phantom{0} & \phantom{0}303.3\tiny{$\pm$44.0}\phantom{0}            & \phantom{0}\phantom{0}97.2\tiny{$\pm$42.0}\phantom{0} & HE    \\
      %   & 17-Agent Humanoid [17$\times$1]  & xxx\tiny{$\pm$xxx}               & xxx\tiny{$\pm$xxx} & xxx\tiny{$\pm$xxx} & xxx\tiny{$\pm$xxx} & HE    \\ 
      \midrule
   \end{tabular}
   \caption{The average evaluation rewards and standard deviations on MAMuJoCo. We bold the best performances and underline the second-best performances. HE denotes heterogeneous agents.}
   \label{table_MAMuJoCo_results}
\end{table*}
\begin{figure}[t]
   \centering
   \includegraphics[width=0.9\linewidth]{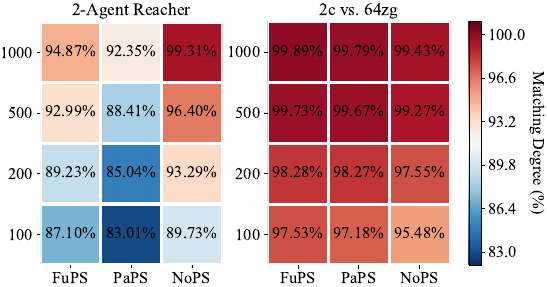}
   %\framebox[4.0in]{$\;$}
   \caption{Matching degree between the condition and constraint concerning rollout sizes and network types. FuPS, PaPS, and NoPS denote full, partial, and non-parameter sharing. Rollout size is set to 100, 200, 500, and 1000.}
   \label{fig_matching_degree}
\end{figure}
FP3O uses an approximation to substitute the dynamically changed policies, thus transforming the constraint into a more computationally efficient condition (line 7 in Algorithm~\ref{algorithm_policyiteration_fp3o}). 
To evaluate how well the condition aligns with the original constraint, we conduct the experiments on two distinct scenarios: 2-Agent Reacher (heterogeneous-agent task) and 2c vs. 64zg (homogeneous-agent task). We introduce the metric of \emph{matching degree} defined as $\frac{N_m}{N_t}\times 100\%$ to evaluate this alignment, where $N_m$ denotes the number of iterations in which the condition aligns with the original constraint, $N_t$ denotes the total number of iterations. Besides, given the rollout set $(s_0,\bm{a}_0,r_0),...,(s_N,\bm{a}_N,r_N)$, we define $N\times n$ as the \emph{rollout size}. As indicated in Figure~\ref{fig_matching_degree}, the 2c vs. 64zg task exhibits a higher overall matching degree compared to the 2-Agent Reacher task. This can be attributed to the homogeneity of agents in 2c vs. 64zg, where the policies $\bm{\pi}^{-i}$ of agent $-i$ and $A^i_{\bm{\pi}}$ of agent $i$ tend to be more harmonious, making it more probable for the condition and constraint to be simultaneously satisfied. Moreover, as parameter sharing is more proficient in representing homogeneity but less effective in portraying heterogeneity, as shown in Figure~\ref{fig_matching_degree}, the 2c vs. 64zg task displays better matching degree in parameter sharing cases while the 2-Agent Reacher task performs better with non-parameter sharing. Nevertheless, regardless of the agent and parameter-sharing types, an increased rollout size, leading to a more accurate Monte Carlo estimation for $\mu(\cdot)$, can improve the matching degree. This result holds the \emph{practical significance}, given that the rollout size in practice is typically on the order of $10^3$ or $10^4$, thus ensuring a reliable alignment between the condition and the original constraint.
\begin{table*}[t]
  \small
  \centering
  \begin{tabular}{ @{\,\,}c @{\,\,}| c | c | c c c c | c}
     \toprule

     Network & Scenario    & Difficulty   & FP3O                                      & HAPPO                                   & MAPPO                                     & IPPO               & Agent \\ \midrule
     % \multirow{6}{*}{$\begin{array}{c}\rotatebox{90}{\text{Shared}} \end{array}$}
     \multirow{6}{*}{$\begin{array}{@{\,}c@{\,}}{\text{Full Param. Sharing}} \end{array}$}
              % & MMM          & Easy       & xxx\tiny{$\pm$xxx}                        & xxx\tiny{$\pm$xxx}                      & xxx\tiny{$\pm$xxx}                            & xxx\tiny{$\pm$xxx} & HE    \\
              & bane vs. bane & Easy       & \textbf{100.0}\tiny{$\pm$1.4}\phantom{0}             & \textbf{100.0}\tiny{$\pm$1.9}\phantom{0}           & \textbf{100.0}\tiny{$\pm$3.5}\phantom{0}               & \textbf{100.0}\tiny{$\pm$2.3}\phantom{0} & HE    \\
              & 2c vs. 64zg   & Hard       & \textbf{100.0}\tiny{$\pm$2.5}\phantom{0}             & \phantom{0}96.9\tiny{$\pm$2.4}\phantom{0}          & \textbf{100.0}\tiny{$\pm$2.5}\phantom{0}               & \phantom{0}96.9\tiny{$\pm$2.3}\phantom{0} & HO    \\
              & 3s5z         & Hard       & \textbf{100.0}\tiny{$\pm$3.7}\phantom{0}               & \phantom{0}87.5\tiny{$\pm$24.7}                   & \phantom{0}\underline{93.8}\tiny{$\pm$3.6}\phantom{0}  & \phantom{0}87.5\tiny{$\pm$34.0} & HE \\
              & 5m vs. 6m     & Hard       & \phantom{0}\textbf{93.8}\tiny{$\pm$6.1}\phantom{0}    & \phantom{0}18.8\tiny{$\pm$12.8}                   & \phantom{0}\underline{81.3}\tiny{$\pm$10.0}            & \phantom{0}62.5\tiny{$\pm$19.1} & HO    \\
              & corridor     & Super Hard & \textbf{100.0}\tiny{$\pm$1.6}\phantom{0}               & \phantom{0}96.9\tiny{$\pm$35.3}                   & \textbf{100.0}\tiny{$\pm$2.9}\phantom{0}               & \phantom{0}96.9\tiny{$\pm$4.3}\phantom{0} & HO    \\
              & 6h vs. 8z     & Super Hard & \phantom{0}\textbf{90.6}\tiny{$\pm$6.6}\phantom{0}     & \phantom{0}53.1\tiny{$\pm$20.2}                  & \phantom{0}\underline{87.5}\tiny{$\pm$9.2}\phantom{0}  & \phantom{0}\underline{87.5}\tiny{$\pm$10.4} & HO    \\
              \midrule
     % \multirow{6}{*}{$\begin{array}{c}\rotatebox{90}{\text{Partially Shared}} \end{array}$}
     \multirow{6}{*}{$\begin{array}{@{\,}c@{\,}}{\text{Partial Param. Sharing}} \end{array}$}
              % & MMM          & Easy       & xxx\tiny{$\pm$xxx}                        & xxx\tiny{$\pm$xxx}                      & xxx\tiny{$\pm$xxx}                            & xxx\tiny{$\pm$xxx} & HE    \\
              & bane vs. bane & Easy       & \textbf{100.0}\tiny{$\pm$0.0}\phantom{0}             & \phantom{0}65.6\tiny{$\pm$26.4}               & \textbf{100.0}\tiny{$\pm$0.0}\phantom{0}              & \textbf{100.0}\tiny{$\pm$0.0}\phantom{0} & HE    \\
              & 2c vs. 64zg   & Hard       & \textbf{100.0}\tiny{$\pm$3.1}\phantom{0}             & \phantom{0}96.9\tiny{$\pm$3.3}\phantom{0}     & \textbf{100.0}\tiny{$\pm$3.1}\phantom{0}              & \textbf{100.0}\tiny{$\pm$4.3}\phantom{0} & HO    \\
              & 3s5z         & Hard       & \phantom{0}\textbf{96.9}\tiny{$\pm$3.9}\phantom{0}    & \phantom{0}\phantom{0}0.0\tiny{$\pm$31.0}     & \phantom{0}\underline{93.8}\tiny{$\pm$4.6}\phantom{0} & \phantom{0}90.6\tiny{$\pm$6.5}\phantom{0} & HE \\
              & 5m vs. 6m     & Hard       & \phantom{0}\textbf{93.8}\tiny{$\pm$4.2}\phantom{0}    & \phantom{0}31.3\tiny{$\pm$15.4}              & \phantom{0}{87.5}\tiny{$\pm$7.7}\phantom{0}            & \phantom{0}\underline{90.6}\tiny{$\pm$11.1} & HO    \\
              & corridor     & Super Hard & \phantom{0}\underline{90.6}\tiny{$\pm$9.2}\phantom{0}  & \phantom{0}78.1\tiny{$\pm$22.7}              & \phantom{0}\textbf{93.8}\tiny{$\pm$10.9}               & \phantom{0}\underline{90.6}\tiny{$\pm$3.6}\phantom{0} & HO    \\
              & 6h vs. 8z     & Super Hard & \phantom{0}\textbf{81.3}\tiny{$\pm$18.3}              & \phantom{0}\phantom{0}9.4\tiny{$\pm$18.8}    & \phantom{0}15.6\tiny{$\pm$21.1}                        & \phantom{0}\underline{46.9}\tiny{$\pm$23.0} & HO    \\
              \midrule
     % \multirow{6}{*}{$\begin{array}{c}\rotatebox{90}{\text{Separated}} \end{array}$}
     \multirow{6}{*}{$\begin{array}{@{\,}c@{\,}}{\text{Non-Param. Sharing}} \end{array}$}
              % & MMM          & Easy       & xxx\tiny{$\pm$xxx}                        & xxx\tiny{$\pm$xxx}                & xxx\tiny{$\pm$xxx}                            & xxx\tiny{$\pm$xxx} & HE    \\
              & bane vs. bane & Easy       & \textbf{100.0}\tiny{$\pm$0.0}\phantom{0}              & \textbf{100.0}\tiny{$\pm$0.0}\phantom{0}             & \textbf{100.0}\tiny{$\pm$0.0}\phantom{0}              & \textbf{100.0}\tiny{$\pm$0.0}\phantom{0} & HE    \\
              & 2c vs. 64zg   & Hard       & \textbf{100.0}\tiny{$\pm$2.8}\phantom{0}             & \textbf{100.0}\tiny{$\pm$1.9}\phantom{0}              & \textbf{100.0}\tiny{$\pm$1.9}\phantom{0}             & \textbf{100.0}\tiny{$\pm$1.3}\phantom{0} & HO    \\
              & 3s5z         & Hard       & \textbf{100.0}\tiny{$\pm$2.5}\phantom{0}              & \phantom{0}90.6\tiny{$\pm$4.8}\phantom{0}              & \phantom{0}93.8\tiny{$\pm$5.3}\phantom{0}           & \phantom{0}\underline{96.9}\tiny{$\pm$8.1}\phantom{0} & HE \\
              & 5m vs. 6m     & Hard       & \phantom{0}\textbf{90.6}\tiny{$\pm$6.4}\phantom{0}   & \phantom{0}\underline{68.8}\tiny{$\pm$16.8}              & \phantom{0}62.5\tiny{$\pm$10.0}                   & \phantom{0}62.5\tiny{$\pm$11.3} & HO    \\
              & corridor     & Super Hard & \phantom{0}\textbf{96.9}\tiny{$\pm$7.2}\phantom{0}     & \phantom{0}\textbf{96.9}\tiny{$\pm$3.0}\phantom{0}    & \phantom{0}93.8\tiny{$\pm$7.2}\phantom{0}            & \phantom{0}93.8\tiny{$\pm$27.2} & HO    \\
              & 6h vs. 8z     & Super Hard & \phantom{0}\textbf{87.5}\tiny{$\pm$15.3}               & \phantom{0}43.8\tiny{$\pm$10.2}                      & \phantom{0}43.8\tiny{$\pm$15.8}                      & \phantom{0}\underline{46.9}\tiny{$\pm$20.5} & HO    \\
              \midrule
  \end{tabular}
  \caption{The median evaluation win rates and standard deviations on SMAC. We bold the best performances and underline the second-best performances. HO and HE denote homogeneous and heterogeneous agents respectively.}
  \label{table_SMAC_results}
\end{table*}
\paragraph{Performances on MAMuJoCo.}
We select six representative tasks within MAMuJoCo that encompass a broad range of robotic control scenarios. We evaluate our algorithm on these tasks using full, partial, and non-parameter sharing and present the results in Table~\ref{table_MAMuJoCo_results}. Notably, we observe that the optimal performance of each algorithm is primarily attained through partial or non-parameter sharing, highlighting the heterogeneity of MAMuJoCo agents and the necessity of partial/non-parameter sharing for such tasks. IPPO fails to learn effective policies in several tasks (e.g., 2-agent Walker and Manyagent Swimmer), whereas MAPPO achieves more stable performances by leveraging the CTDE framework. Nevertheless, both algorithms fell short in handling these complex tasks, even using a more suitable non-parameter sharing scheme. HAPPO, utilizing non-parameter sharing, establishes the strongest baseline on these heterogeneous-agent tasks in MAMuJoCo, as indicated by the underlined values of non-parameter sharing in Table~\ref{table_MAMuJoCo_results}. However, it cannot maintain such advantages under full/partial parameter-sharing settings. This issue is further exacerbated in SMAC tasks, as we will discuss later. Our FP3O consistently outperforms the baselines on all scenarios and parameter-sharing types, demonstrating its advantages in versatility. 
\paragraph{Performances on SMAC.} \label{section_Performances_on_SMAC}
We subsequently evaluate our algorithm on various types of maps in SMAC with full, partial, and non-parameter sharing. Six different maps were chosen, incorporating three levels of difficulty, seven types of units, and both homogeneous-agent and heterogeneous-agent tasks. The results are shown in Table~\ref{table_SMAC_results}. We can notice that MAPPO with full parameter sharing achieves the strongest baseline on SMAC. This shows the homogeneity of SMAC agents and the high learning efficiency of parameter sharing while dealing with such tasks. However, MAPPO cannot maintain its superiority in other scenarios. Moreover, due to the problems of the broken theoretical support and excessive KL divergence, we can see that HAPPO with full/partial parameter sharing will suffer from serious performance degradation in most maps, especially in a large number of agent cases (e.g., HAPPO with partial parameter sharing has only 65.6\% win rate in \emph{easy} map bane vs. bane, and 0.0\% win rate in map 3s5z). Moreover, in the super hard task of 6h vs. 8z, both MAPPO and HAPPO with partial parameter sharing struggle to converge and occasionally reach a 0.0\% win rate on some seeds (see Appendix~\ref{appendix_training_monitoring}), resulting in poor overall performance. This suggests they may be falling into local optimal traps from which they are unable to escape. In contrast, our FP3O algorithm effectively overcomes local optima under all parameter-sharing configurations, consistently delivering outstanding performance. As indicated in Table~\ref{table_SMAC_results}, our FP3O algorithm again achieves excellent performances on all tasks, demonstrating its superior compatibility with all types of parameter sharing.

\section{Related Work}
% \paragraph{Parameter form.}
In MARL, full, partial, and non-parameter sharing are three prevalent network types utilized to address various multi-agent tasks. Full parameter sharing \citep{rashid2018qmix,wang2020qplex,christianos2020shared} has been extensively applied in the scenarios of homogeneous agents or a large number of agents, as it can considerably decrease the number of parameters and enhance learning efficiency. Non-parameter sharing \citep{lowe2017maddpg,kuba2022happo} is instrumental in managing heterogeneous agents by equipping each agent with an individual policy network to make diverse decisions. More recently, partial parameter sharing \citep{christianos2021scaling,kim2023parameter} has been used to mitigate the explosion of the parameter space while still permitting diverse decision-making.

Regarding PPO-based algorithms in cooperative MARL, IPPO \citep{de2020ippo} and MAPPO \citep{yu2021mappo} directly applied the PPO algorithm to each agent in the multi-agent systems with full parameter sharing. Although they can be technically extended to other parameter-sharing configurations, they overlook the interaction among agents and lack the theoretical guarantee. \citet{sun2023trust} demonstrated that enforcing the centralized trust region constraint by bounding independent ratios can ensure monotonic improvement for IPPO and MAPPO. However, the influence of network sharing among agents was not considered. \citet{wu2021coordinated} proposed theoretically-justified CoPPO, but it requires the impractical repetitive error backpropagation of the policy networks when enforcing partial/non-parameter sharing. Recently, HAPPO \citep{kuba2022happo} and A2PO \citep{wang2023order} adopted sequential updates but with the assumption that each agent's update does not impact the policies of others. Unfortunately, this assumption becomes invalid when the non-parameter sharing condition is deactivated, making them not generalizable to other scenarios.

\section{Conclusion}
We propose a versatile multi-agent PPO algorithm for cooperative MARL. 
It is achieved through a novel full-pipeline paradigm, which leverages various decompositions to establish multiple equivalent optimization pipelines. Its versatility relies on the pipelines' interconnections, which allows for a unified outcome across diverse parameter-sharing types. We provide a theoretical foundation and develop a practical FP3O algorithm by several approximations. Empirical evaluations on MAMuJoCo and SMAC demonstrate that FP3O outperforms other strong baselines, exhibiting remarkable compatibility across different parameter-sharing configurations. This inspires us to further explore its application on elaborately designed networks \citep{vaswani2017attention} in the future. Besides, the mechanism of the full-pipeline paradigm also provides valuable insight for improving the versatility of other algorithms, such as SAC-based \cite{haarnoja2018soft} and DDPG-based \cite{lillicrap2015continuous} algorithms.

% \onecolumn
\bibliography{aaai24}

\newpage
\onecolumn
\appendix
\numberwithin{equation}{section}
\numberwithin{figure}{section}
\numberwithin{table}{section}
\numberwithin{algorithm}{section}
\numberwithin{listing}{section}
\numberwithin{lemma}{section}
\numberwithin{theorem}{section}
% \chapter{附录}
% \addcontentsline{toc}{chapter}{附录}
\addcontentsline{toc}{section}{Appendix}

% Start the appendix part
\part{Appendix}

% Insert the appendix TOC
\parttoc

% \tableofcontents
\section{Limitations}
Similar to \cite{yu2021mappo,wu2021coordinated,kuba2022happo,wang2023order}, our method is also established on the cooperative multi-agent Markov game, where all agents have a shared reward and aim to maximize a shared objective: $\mathcal{J}(\bm{\pi})$. Therefore, its direct application to non-cooperative MARL settings may not yield optimal results, even though the non-cooperative task is an important setting. The main challenge in generalizing our framework to non-cooperative games is how to establish a private reward objective function for each agent. MADDPG~\cite{lowe2017maddpg} is working in both cooperative and competitive scenarios, and the latter is known to be a much harder problem. As the same CTDE-based algorithm, MADDPG can provide potential insight into this problem and we will leave it for future work. Other the other hand, the monotonic improvement guarantee of the full-pipeline paradigm holds given that Assumption~\ref{assumption_policy} is satisfied, which is typically valid in the CTDE framework or fully decentralized framework. However, in other frameworks (e.g., auto-regressive policy representation $\bm{\pi}(\bm{a}|s)=\prod_{m=1}^{n}\pi^{i_m}(a^{i_m}|s, a^{i_{1:m-1}})$), its monotonic improvement guarantee still need further examination.

\section{Reproducibility}\label{appendix_reproducibility}
In Section~\ref{section_fp3o}, we have developed full-pipeline optimization to a programming-friendly FP3O algorithm. The main implementation differences between MAPPO, HAPPO, and FP3O lie in the update process of the policy networks. We provide the pseudo code of three algorithms with parameter sharing in Listings~\ref{listing_mappo}, \ref{listing_happo}, and \ref{listing_fp3o}, which contains the main step of the update of policy networks. 
% We have provided the source code in the supplementary materials.
% Please note that the demo code is not identical to the actual source code, as they are designed for intuitive demonstration purposes. 
% For the complete source code, which is anonymously provided, you can access it at \url{https://github.com/an-anony/fp3o-neurips23}. 
Moreover, we specify all training details (e.g., hyperparameter settings), repetitive experiments (e.g., the number of random seeds), and the type of computing infrastructure (e.g. the type of GPUs) in Appendix~\ref{appendix_hyperparameters}.
\begin{listing}[H]
\caption{Actor update of MAPPO}
\label{listing_mappo}%
\begin{lstlisting}[]
    def mappo_update(self, samples):
       # Update all agents. The dimension of actions_batch is [episode_length * num_rollout_threads * num_agent, 1]
       share_obs_batch, obs_batch, rnn_states_batch, rnn_states_critic_batch, \
       actions_batch, value_preds_batch, return_batch, masks_batch, \
       active_masks_batch, old_action_log_probs_batch, adv_targ, \
       available_actions_batch = samples
       
       # Regenerate in a single forward process for all steps
       values, action_log_probs, dist_entropy = \
             self.policy.evaluate_actions(share_obs_batch, obs_batch, 
                                        rnn_states_batch, rnn_states_critic_batch, 
                                        actions_batch, active_masks_batch, 
                                        available_actions_batch, masks_batch)
    
       # Calculate the main PPO-clip objective
       imp_weights = torch.prod(torch.exp(action_log_probs - old_action_log_probs_batch),dim=-1,keepdim=True)
    
       surr1 = imp_weights * adv_targ
       surr2 = torch.clamp(imp_weights, 1.0 - self.clip_param, 1.0 + self.clip_param) * adv_targ
       
       policy_loss = (-torch.sum(torch.min(surr1, surr2), dim=-1, keepdim=True) * active_masks_batch).sum() / active_masks_batch.sum()
    
       # Actor update
       self.policy.actor_optimizer.zero_grad()
       (policy_loss - dist_entropy * self.entropy_coef).backward()
       self.policy.actor_optimizer.step()
\end{lstlisting}
\end{listing}
\newpage

\begin{listing}[H]
\caption{Actor update of HAPPO}
\label{listing_happo}%
\begin{lstlisting}[]
    def happo_update(self, samples):
       # Update one agent. The dimension of actions_batch is [episode_length * num_rollout_threads * 1, 1]
       share_obs_batch, obs_batch, rnn_states_batch, rnn_states_critic_batch, \
       actions_batch, value_preds_batch, return_batch, masks_batch, \
       active_masks_batch, old_action_log_probs_batch, adv_targ, \
       available_actions_batch, factor_1_to_m_batch = samples
       
       # Regenerate in a single forward process for the steps of one agent
       values, action_log_probs, dist_entropy = \
             self.policy.evaluate_actions(share_obs_batch, obs_batch, 
                                        rnn_states_batch, rnn_states_critic_batch, 
                                        actions_batch, active_masks_batch, 
                                        available_actions_batch, masks_batch)
    
       # Calculate the main PPO-clip objective
       imp_weights = torch.prod(torch.exp(action_log_probs - old_action_log_probs_batch),dim=-1,keepdim=True)
    
       surr1 = imp_weights * adv_targ * factor_1_to_m_batch
       surr2 = torch.clamp(imp_weights, 1.0 - self.clip_param, 1.0 + self.clip_param) * adv_targ * factor_1_to_m_batch
       
       policy_loss = (-torch.sum(torch.min(surr1, surr2), dim=-1, keepdim=True) * active_masks_batch).sum() / active_masks_batch.sum()
    
       # Actor update
       self.policy.actor_optimizer.zero_grad()
       (policy_loss - dist_entropy * self.entropy_coef).backward()
       self.policy.actor_optimizer.step()
\end{lstlisting}
\end{listing}

\begin{listing}[H]
\caption{Actor update of FP3O}
\label{listing_fp3o}%
    \begin{lstlisting}[]
    def fp3o_update(self, samples):
       # Check the condition
       if (self.is_dependent_step and self.is_satisfy_st()) or not self.is_dependent_step:
          # Update all agents. The dimension of actions_batch is [episode_length * num_rollout_threads * num_agent, 1]
          share_obs_batch, obs_batch, rnn_states_batch, rnn_states_critic_batch, \
          actions_batch, value_preds_batch, return_batch, masks_batch, \
          active_masks_batch, old_action_log_probs_batch, adv_targ, \
          available_actions_batch, factor_j_batch, factor_ij_batch = samples
          
          # Regenerate in a single forward process for all steps
          values, action_log_probs, dist_entropy = \
                self.policy.evaluate_actions(share_obs_batch, obs_batch, 
                                           rnn_states_batch, rnn_states_critic_batch, 
                                           actions_batch, active_masks_batch, 
                                           available_actions_batch, masks_batch)
    
          # Calculate the main PPO-clip objective
          imp_weights = torch.prod(torch.exp(action_log_probs - old_action_log_probs_batch),dim=-1,keepdim=True)
    
          surr1 = (imp_weights * factor_ij_batch - 1) * factor_j_batch * adv_targ
          surr2 = (torch.clamp(imp_weights, 1.0 - self.clip_param, 1.0 + self.clip_param) * factor_ij_batch - 1) * factor_j_batch * adv_targ
          
          policy_loss = (-torch.sum(torch.min(surr1, surr2), dim=-1, keepdim=True) * active_masks_batch).sum() / active_masks_batch.sum()
    
          # Actor update
          self.policy.actor_optimizer.zero_grad()
          (policy_loss - dist_entropy * self.entropy_coef).backward()
          self.policy.actor_optimizer.step()
\end{lstlisting}
\end{listing}

\newpage
\section{Potential Limitation of Sequential Update Scheme}\label{appendix_sequential_update_scheme_bad}
When the policy networks of agents in HAPPO or A2PO are interconnected (for instance, through full parameter sharing, partial parameter sharing, or other associative designs), a significant issue can emerge: each agent's update can lead to changes across the policy networks of all other agents. As the sequential update scheme advances, these changes accumulate, resulting in excessive KL divergence. To demonstrate this, we train HAPPO with full parameter sharing on the 3s5z map of the StarCraftII Multi-Agent Challenge. Upon completion of each agent's update round during a policy iteration, we record the distribution of KL divergence between its updated policy and the old one, as shown in Figure \ref{fig_happo_kl}. It can be observed that the KL divergence between the updated and old policy increases as the sequential update scheme progresses, particularly for agents 6 and 7. This excessive KL divergence could negatively impact performance in cases of full or partial parameter sharing.
\begin{figure}[H]
   \centering
   \includegraphics[width=0.7\linewidth]{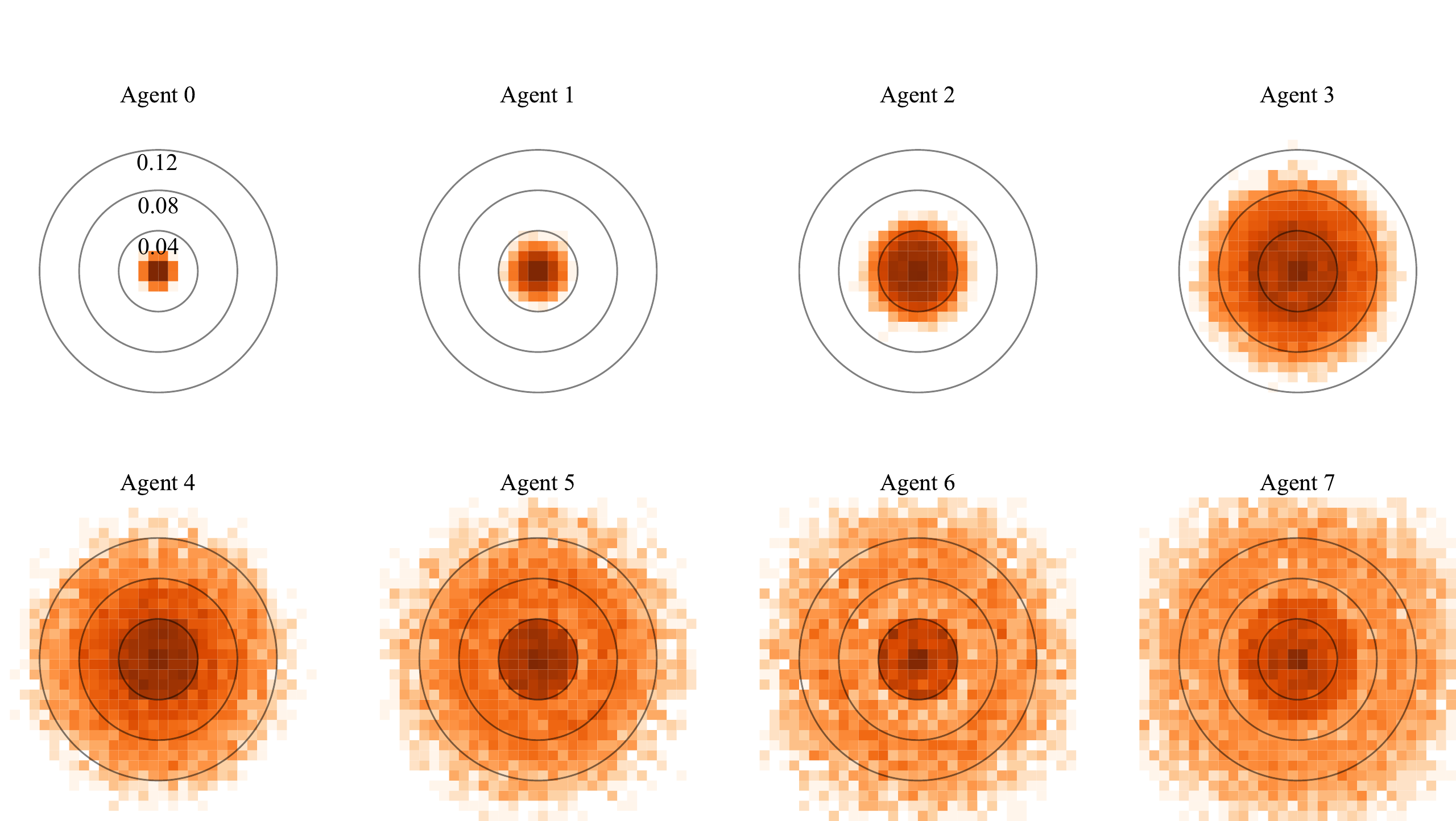}
   %\framebox[4.0in]{$\;$}
   \caption{The distribution of KL divergence between the updated policy and the old policy for each agent in HAPPO. Darker colors indicate a higher data distribution. The center represents that KL divergence is equal to 0, and the outward direction represents that KL divergence gets greater. The locations where KL divergence equals 0.04, 0.08, and 0.12 are marked by black circles}
   \label{fig_happo_kl}
\end{figure}

\section{Proofs, Diagram and Analysis}
\subsection{Proof of Equation~(\ref{equation_multibound})}\label{appendix_proof_equation5}
We separate one expected agent's contribution independently of other agents from the team through Equation~(\ref{equation_multibound}). Now we give its derivation.
\begin{lemma}
   \label{lemma_fp3o_decomposition}
   Let $\varnothing$ denote the empty set. In the cooperative multi-agent tasks, given the complete set $i_{1:n}=\mathcal{N}$ of agents and $\forall i_{p} \in i_{1:n}$, the following equation holds:
   \begin{equation}
      \sum_{m=1}^n A_{\bm{\pi}}^{i_{m}}(s, \bm{a}^{i_{1:m-1}}, \bm{a}^{i_{m}})=A_{\bm{\pi}}^{i_p}(s,a^{\varnothing },a^{i_p})+A_{\bm{\pi}}^{-i_{p}}(s,a^{i_p},\bm{a}^{-i_{p}}). \nonumber
   \end{equation}
\end{lemma}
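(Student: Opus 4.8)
The plan is to show that both sides of the claimed identity reduce to the joint advantage $A_{\bm{\pi}}(s,\bm{a})$, after which the lemma is immediate. The left-hand side is already exactly the multi-agent advantage decomposition of Equation~(\ref{equation_multi_agent_advantage_decomposition}), so $\sum_{m=1}^{n} A_{\bm{\pi}}^{i_m}(s,\bm{a}^{i_{1:m-1}},a^{i_m}) = A_{\bm{\pi}}(s,\bm{a})$ with no further work. The substance of the argument therefore lies entirely in expanding the two terms on the right-hand side and checking that they telescope to this same quantity.

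First I would expand $A_{\bm{\pi}}^{i_p}(s,a^{\varnothing},a^{i_p})$ using the definition in Equation~(\ref{equation_advantage_function}) with the choices $j_{1:k}=\varnothing$ and $i_{1:m}=\{i_p\}$, obtaining $A_{\bm{\pi}}^{i_p}(s,a^{\varnothing},a^{i_p}) = Q_{\bm{\pi}}^{i_p}(s,a^{i_p}) - Q_{\bm{\pi}}^{\varnothing}(s)$. Here I would invoke the averaging convention of Equation~(\ref{equation_state_action_value_function}) to identify the boundary term $Q_{\bm{\pi}}^{\varnothing}(s) = \mathbb{E}_{\bm{a}\sim\bm{\pi}}[Q_{\bm{\pi}}(s,\bm{a})] = V_{\bm{\pi}}(s)$. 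Next I would expand $A_{\bm{\pi}}^{-i_p}(s,a^{i_p},\bm{a}^{-i_p})$ with $j_{1:k}=\{i_p\}$ and $i_{1:m}=-i_p$, giving $A_{\bm{\pi}}^{-i_p}(s,a^{i_p},\bm{a}^{-i_p}) = Q_{\bm{\pi}}^{i_p,-i_p}(s,\bm{a}) - Q_{\bm{\pi}}^{i_p}(s,a^{i_p})$; since $\{i_p\}\cup(-i_p)=\mathcal{N}$, the expectation in Equation~(\ref{equation_state_action_value_function}) is taken over the empty complement, so $Q_{\bm{\pi}}^{i_p,-i_p}(s,\bm{a}) = Q_{\bm{\pi}}(s,\bm{a})$.

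Summing the two expanded terms, the shared intermediate quantity $Q_{\bm{\pi}}^{i_p}(s,a^{i_p})$ cancels, leaving $Q_{\bm{\pi}}(s,\bm{a}) - V_{\bm{\pi}}(s) = A_{\bm{\pi}}(s,\bm{a})$, which matches the left-hand side and closes the argument. I expect the only delicate points — rather than genuine obstacles — to be the two boundary identifications $Q_{\bm{\pi}}^{\varnothing} = V_{\bm{\pi}}$ and $Q_{\bm{\pi}}^{\mathcal{N}} = Q_{\bm{\pi}}$, which must be read off correctly from the averaging definition, and the observation that the derivation is valid for an \emph{arbitrary} separated agent $i_p$ precisely because both sides are independent of any agent ordering. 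Conceptually, the lemma is simply the statement that the advantage decomposition may be telescoped with the coarse two-block partition $\{i_p\},\,-i_p$ rather than agent-by-agent, and that either partition recovers the same joint advantage.
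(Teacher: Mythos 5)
Your proposal is correct and follows essentially the same route as the paper's own proof: both reduce everything to the identities $Q_{\bm{\pi}}^{\varnothing}=V_{\bm{\pi}}$ and $Q_{\bm{\pi}}^{i_p,-i_p}=Q_{\bm{\pi}}$ and then telescope $(Q_{\bm{\pi}}^{i_p}-V_{\bm{\pi}})+(Q_{\bm{\pi}}-Q_{\bm{\pi}}^{i_p})=A_{\bm{\pi}}(s,\bm{a})$, the only cosmetic difference being that you expand the right-hand side toward $A_{\bm{\pi}}$ while the paper regroups $A_{\bm{\pi}}$ into the two terms. No gaps.
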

\begin{proof}
   Based on Equations (\ref{equation_state_action_value_function}), (\ref{equation_advantage_function}) and (\ref{equation_multi_agent_advantage_decomposition}), we have
   \begin{align}
      &\sum_{m=1}^n A_{\bm{\pi}}^{i_{m}}(s, \bm{a}^{i_{1:m-1}}, \bm{a}^{i_{m}})=A_{\bm{\pi}}(s,\bm{a})\nonumber   \\
      =&Q_{\bm{\pi}}(s,\bm{a})-V_{\bm{\pi}}(s)=Q_{\bm{\pi}}^{i_{p},-i_p}(s,\bm{a}^{i_p,-i_p})-V_{\bm{\pi}}(s)\nonumber                                 \\ 
      =&\left(Q_{\bm{\pi}}^{i_p}(s,\bm{a}^{i_{p}})-V_{\bm{\pi}}(s)\right)
      -\left(Q_{\bm{\pi}}^{i_{p},-i_p}(s,\bm{a}^{i_p,-i_p})-Q_{\bm{\pi}}^{i_p}(s,\bm{a}^{i_{p}})\right)\nonumber   \\
      =&A_{\bm{\pi}}^{i_p}(s,a^{\varnothing },a^{i_p})+A_{\bm{\pi}}^{-i_{p}}(s,a^{i_p},\bm{a}^{-i_{p}}).\nonumber 
  \end{align}
\end{proof}

With Equation~(\ref{equation_happo_bound}), we have
\begin{align}
   \mathcal{J}(\tilde{\bm{\pi}})\geq&\mathcal{J}(\bm{\pi})+\sum_{m=1}^n{M_{\bm{\pi}}^{i_{m}}(\tilde{\bm{\pi}}^{i_{1:m-1}}, \tilde{\pi}^{i_m})} \nonumber \\
   =&\mathcal{J}(\bm{\pi})+\sum_{m=1}^n\left[\mathbb{E} _{s\sim \rho_{\bm{\pi}},\bm{a}\sim \tilde{\bm{\pi}}}\left[A_{\bm{\pi}}^{i_{m}}(s, \bm{a}^{i_{1:m-1}}, \bm{a}^{i_{m}})\right]-\mathcal{C}D_{\rm KL}^{\max}({\pi}^{i_m}, {\tilde{\pi}}^{i_m})\right] \nonumber \\
   =&\mathcal{J}(\bm{\pi})+\mathbb{E} _{s\sim \rho_{\bm{\pi}},\bm{a}\sim \tilde{\bm{\pi}}}\left[\sum_{m=1}^n A_{\bm{\pi}}^{i_{m}}(s, \bm{a}^{i_{1:m-1}}, \bm{a}^{i_{m}})\right]-\sum_{m=1}^n \mathcal{C}D_{\rm KL}^{\max}({\pi}^{i_m}, {\tilde{\pi}}^{i_m}), \nonumber \\
    &\text{which, by introducing Lemma~\ref{lemma_fp3o_decomposition}, equals to}     \nonumber  \\                                       
    =&\mathcal{J}(\bm{\pi})+\mathbb{E}_{s\sim\rho_{\bm{\pi}}, \bm{a}\sim\bm{\tilde{\pi}}}\left[A_{\bm{\pi}}^{i_p}(s,a^{i_p})+A_{\bm{\pi}}^{-i_{p}}(s,a^{i_p},\bm{a}^{-i_{p}})\right]-\sum_{m=1}^n \mathcal{C}D_{\rm KL}^{\max}({\pi}^{i_m}, {\tilde{\pi}}^{i_m}) \nonumber \\
   = &\mathcal{J}(\bm{\pi})    +\mathbb{E}_{s\sim\rho_{\bm{\pi}}, \bm{a}\sim\bm{\tilde{\pi}}}[A_{\bm{\pi}}^{i_p}(s,a^{i_p})]-\mathcal{C}{D_{\rm KL}^{\max}{(\pi^{i_p}, \tilde{\pi}^{i_p})}} \nonumber                                                                \\
     & \qquad +\mathbb{E}_{s\sim\rho_{\bm{\pi}}, \bm{a}\sim\bm{\tilde{\pi}}}[A_{\bm{\pi}}^{-i_{p}}(s,a^{i_p},\bm{a}^{-i_{p}})]-\sum_{i\in -i_p}\mathcal{C}{D_{\rm KL}^{\max}{(\pi^{i}, \tilde{\pi}^{i})}}, \nonumber \\
     &\text{and by Definition \ref{definition_surrogate}, this is}\nonumber \\
   = &\mathcal{J}(\bm{\pi})+M_{\bm{\pi}}^{i_p}(\tilde{\pi}^\varnothing,{\tilde{\pi}}^{i_p})+M_{\bm{\pi}}^{-i_p}(\tilde{\pi}^{i_p}, \bm{\tilde{\pi}}^{-i_p}). \nonumber
\end{align}

\subsection{Proof of Equations~(\ref{equation_impsampling1}) and (\ref{equation_impsampling2})}\label{appendix_proof_equation67}
In the single pipeline optimization, we first execute $\mathop{\rm maximize}\nolimits_{\tilde{\pi}^{i_p}}M_{\bm{\pi}}^{i_p}(\tilde{\pi}^\varnothing,{\tilde{\pi}}^{i_p})$. We then execute $\mathop{\rm maximize}\nolimits_{\tilde{\bm{\pi}}^{-i_p}}M_{\bm{\pi}}^{-i_p}(\tilde{\pi}^{i_p}, \tilde{\bm{\pi}}^{-i_p})$ based on the result $\tilde{\pi}^{i_p}$ of first step. This is similar to the sequential update scheme of HAPPO. However, the dynamic action input of $A_{\pi}^{i_p}(s,a^{\varnothing }, a^{i_p})$ and $ A_{\bm{\pi}}^{-i_p}(s,a^{i_p},\bm{a}^{-i_p})$ leads to high cost for agents to maintain the advantage functions. Thus, we introduce the importance-sampling objectives, as shown in Equations~(\ref{equation_impsampling1}) and (\ref{equation_impsampling2}), to address this issue. The proof is as follows.

For the term $\mathbb{E}_{s\sim \rho_{\bm{\pi}},\bm{a}\sim \tilde{\bm{\pi}}} \left[A_{\bm{\pi}}^{i_{p}}(s, a^{\varnothing}, {a}^{i_{p}})\right] $ in surrogate objective $M_{\bm{\pi}}^{i_p}(\tilde{\pi}^\varnothing,{\tilde{\pi}}^{i_p})$, we adopt the importance sampling as follows.
\begin{align}
    & \mathbb{E}_{s\sim \rho_{\bm{\pi}},\bm{a}\sim \tilde{\bm{\pi}}} \left[A_{\bm{\pi}}^{i_{p}}(s, a^{\varnothing}, {a}^{i_{p}})\right] \nonumber                                                                      \\
    &\text{which, by Equations (\ref{equation_state_action_value_function}), (\ref{equation_advantage_function}), equals} \nonumber     \\
    =& \mathbb{E}_{s\sim \rho_{\bm{\pi}},\bm{a}\sim \tilde{\bm{\pi}}} \left[Q_{\bm{\pi}}^{i_{p}}(s, {a}^{i_{p}})-V_{\bm{\pi}}(s)\right] = \mathbb{E}_{s\sim \rho_{\bm{\pi}},\bm{a}\sim \tilde{\bm{\pi}}} \left[\mathbb{E}_{\bm{a}^{-i_p}\sim \bm{\pi}^{-i_p}}[Q_{\bm{\pi}}(s, \bm{a})]-V_{\bm{\pi}}(s)\right] \nonumber \\
     =& \mathbb{E}_{s\sim \rho_{\bm{\pi}},a^{i_p}\sim \tilde{\pi}^{i_p},\bm{a}^{-i_p}\sim \bm{\pi}^{-i_p}} \left[Q_{\bm{\pi}}(s, \bm{a})-V_{\bm{\pi}}(s)\right] \nonumber \\
     =&\mathbb{E}_{s\sim \rho_{\bm{\pi}},a^{i_p}\sim \tilde{\pi}^{i_p},\bm{a}^{-i_p}\sim \bm{\pi}^{-i_p}} \left[A_{\bm{\pi}}(s, \bm{a})\right], \nonumber \\
    &\text{then with importance sampling, this is} \nonumber     \\
     =& \mathbb{E}_{s\sim \rho_{\bm{\pi}},\bm{a}\sim \bm{\pi}} \left[\frac{\tilde{\pi}^{i_p}(a^{i_p}|s)}{\pi^{i_p}(a^{i_p}|s)}A_{\bm{\pi}}(s,\bm{a})\right]. \label{equation_appendix_sampling1}
\end{align}
Hence, at $k$-th iteration, given the old policy $\pi_{k}^{i_p}$, the optimization for $M_{\bm{\pi}_k}^{i_p}$ of agent $\pi^{i_p}$ is 
\begin{align}
   \label{equation_appendix_single_step1}
   \pi_{k+1}^{i_p} =& \arg\max\limits_{\pi^{i_p}}M_{\bm{\pi}_k}^{i_p}(\pi_{k+1}^\varnothing,{\pi}^{i_p})\nonumber \\
   =&\arg\max\limits_{\pi^{i_p}}\left[\mathbb{E}_{s\sim \rho_{\bm{\pi}_k},\bm{a}\sim \bm{\pi}_k} \left[\frac{{\pi}^{i_p}(a^{i_p}|s)}{\pi_k^{i_p}(a^{i_p}|s)}A_{\bm{\pi}_k}(s,\bm{a})\right]-\mathcal{C}D_{\rm KL}^{\max}({\pi}^{i_p}_{k}, {\pi}^{i_p})\right]\nonumber \\
   =&\arg\max\limits_{\pi^{i_p}}\left[\mathbb{E}_{s\sim \rho_{\bm{\pi}_k},\bm{a}\sim \bm{\pi}_k} \left[r(\pi^{i_p})\cdot A_{\bm{\pi}_k}(s,\bm{a})\right]-\mathcal{C}D_{\rm KL}^{\max}({\pi}^{i_p}_{k}, {\pi}^{i_p})\right]. 
\end{align}  

For the advantage function term $\mathbb{E}_{s\sim \rho_{\bm{\pi}}, \bm{a}\sim \bm{\tilde{\pi}}} \left[A_{\bm{\pi}}^{-i_p}(s, a^{i_p}, \bm{a}^{-i_p})\right]$ in surrogate objective $M_{\bm{\pi}}^{-i_p}(\tilde{\pi}^{i_p}, \bm{\tilde{\pi}}^{-i_p})$, we have
% For the advantage function in $M_{\bm{\pi}}^{-i_p}(\tilde{\pi}^{i_p}, \bm{\tilde{\pi}}^{-i_p})$
\begin{align}
    & \mathbb{E}_{s\sim \rho_{\bm{\pi}},\bm{a}\sim \bm{\tilde{\pi}}} \left[A_{\bm{\pi}}^{-i_p}(s, a^{i_p}, \bm{a}^{-i_p})\right]\nonumber                                                                                                                                                                                          \\
    &\text{which, by Equations (\ref{equation_state_action_value_function}), (\ref{equation_advantage_function}), equals} \nonumber     \\
     =&\mathbb{E}_{s\sim \rho_{\bm{\pi}},\bm{a}\sim \bm{\tilde{\pi}}} \left[Q_{\bm{\pi}}^{i_p,-i_p}(s, \bm{a}^{i_p,-i_p})-Q_{\bm{\pi}}^{i_p}(s, \bm{a}^{i_p})\right]\nonumber                                                                                                                                                      \\
     =&\mathbb{E}_{s\sim \rho_{\bm{\pi}},\bm{a}\sim \bm{\tilde{\pi}}} \left[Q_{\bm{\pi}}(s, \bm{a})-V_{\bm{\pi}}(s)\right]-\mathbb{E}_{\bm{a}\sim \bm{\tilde{\pi}}}\left[Q_{\bm{\pi}}^{i_p}(s, \bm{a}^{i_p})-V_{\bm{\pi}}(s)\right]\nonumber                                                                                       \\
     =&\mathbb{E}_{s\sim \rho_{\bm{\pi}},\bm{a}\sim \bm{\tilde{\pi}}} \left[A_{\bm{\pi}}(s, \bm{a})\right]-\mathbb{E}_{\bm{a}\sim \bm{\tilde{\pi}}}\left[\mathbb{E}_{\bm{a}^{-i_p}\sim \bm{\pi}^{-i_p}}[Q_{\bm{\pi}}(s, \bm{a})]-V_{\bm{\pi}}(s)\right]\nonumber                                                                   \\
     =&\mathbb{E}_{s\sim \rho_{\bm{\pi}},\bm{a}\sim \bm{\tilde{\pi}}} \left[A_{\bm{\pi}}(s, \bm{a})\right]-\mathbb{E}_{{a}^{i_p}\sim \tilde{\pi}^{i_p},\bm{a}^{-i_p}\sim \bm{\pi}^{-i_p}}\left[Q_{\bm{\pi}}(s, \bm{a})-V_{\bm{\pi}}(s)\right]\nonumber                                                                             \\
     =&\mathbb{E}_{s\sim \rho_{\bm{\pi}},\bm{a}\sim \bm{\tilde{\pi}}} \left[A_{\bm{\pi}}(s, \bm{a})\right]-\mathbb{E}_{{a}^{i_p}\sim \tilde{\pi}^{i_p},\bm{a}^{-i_p}\sim \bm{\pi}^{-i_p}}\left[A_{\bm{\pi}}(s, \bm{a})\right]\nonumber                                                                                             \\
     =&\mathbb{E}_{s\sim \rho_{\bm{\pi}},\bm{a}\sim \bm{\pi}} \left[\frac{\bm{\tilde{\pi}}(\bm{a}|s)}{\bm{\pi}(\bm{a}|s)}A_{\bm{\pi}}(s, \bm{a})\right] \nonumber 
   -\mathbb{E}_{s\sim \rho_{\bm{\pi}},\bm{a}\sim \bm{\pi}} \left[\frac{\tilde{\pi}^{i_p}(a^{i_p}|s)}{\pi^{i_p}(a^{i_p}|s)}A_{\bm{\pi}}(s, \bm{a})\right]                                                                                                                                                                 \\
     =&\mathbb{E}_{s\sim \rho_{\bm{\pi}},\bm{a}\sim \bm{\pi}} \left[\left(\frac{\bm{\tilde{\pi}}^{-i_p}(\bm{a}^{-i_p}|s)}{\bm{\pi}^{-i_p}(\bm{a}^{-i_p}|s)}-1\right)\cdot \frac{\tilde{\pi}^{i_p}(a^{i_p}|s)}{\pi^{i_p}(a^{i_p}|s)}\cdot A_{\bm{\pi}}(s,\bm{a})\right].\label{equation_appendix_sampling2}
\end{align}
Hence, at $k$-th iteration, given the old policy $\bm{\pi}_{k}^{-i_p}$ and the new policy $\pi^{i_p}_{k+1}$ generated by Equation~(\ref{equation_appendix_single_step1}), the optimization for $M_{\bm{\pi}_k}^{-i_p}$ of agents $\bm{\pi}^{-i_p}$ is 
\begin{align}
   \bm{\pi}_{k+1}^{-i_p} &= \arg\max\limits_{\bm{\pi}^{-i_p}} M_{\bm{\pi}_k}^{-i_p}(\pi_{k+1}^{i_p}, \bm{\pi}^{-i_p}) \nonumber \\
   &=\arg\max\limits_{\bm{\pi}^{-i_p}}\left[\mathbb{E}_{s\sim \rho_{\bm{\pi}_k},\bm{a}\sim \bm{\pi}_k} \left[\left(\frac{\bm{{\pi}}^{-i_p}(\bm{a}^{-i_p}|s)}{\bm{\pi}_k^{-i_p}(\bm{a}^{-i_p}|s)} - 1\right)\cdot \frac{{\pi}_{k+1}^{i_p}(a^{i_p}|s)}{\pi_k^{i_p}(a^{i_p}|s)} \cdot A_{\bm{\pi}_k}(s,\bm{a})\right] -\sum_{i \in -i_p} \mathcal{C}D_{\rm KL}^{\max}({\pi}_k^{i}, \pi^{i})\right] \nonumber \\
   &=\arg\max\limits_{\bm{\pi}^{-i_p}}\left[\mathbb{E}_{s\sim \rho_{\bm{\pi}_k},\bm{a}\sim \bm{\pi}_k} \big[\big(r(\bm{\pi}^{-i_p}) - 1\big)\cdot r(\pi_{k+1}^{i_p}) \cdot A_{\bm{\pi}_k}(s,\bm{a})\big] -\sum\nolimits_{i \in -i_p} \mathcal{C}D_{\rm KL}^{\max}({\pi}_k^{i}, \pi^{i})\right]. \nonumber
\end{align}
End of the proof. 

\subsection{Proof of Lemma~\ref{lemma_shared_lower_bound}}\label{appendix_shared_lower_bound}
\textbf{Lemma~\ref{lemma_shared_lower_bound}} (Shared Lower Bound)\textbf{.} \emph{Let $\mathcal{L}_{\bm{\pi}}(\tilde{\pi}^{i_p},\tilde{\pi}^{j_p})=\mathcal{J}(\bm{\pi})+M_{\bm{\pi}}^{i_p}(\tilde{\pi}^\varnothing,{\tilde{\pi}}^{i_p})+M_{\bm{\pi}}^{-i_p}(\tilde{\bm{\pi}}^{-j_p}, \tilde{\pi}^{j_p})$ be the lower bound of pipeline $p$. For all pipelines $p=1,...,n$, their lower bounds satisfies
\begin{equation}
   \mathcal{L}_{\bm{\pi}}(\tilde{\pi}^{i_1},\tilde{\pi}^{j_1}) = ,\dots, = \mathcal{L}_{\bm{\pi}}(\tilde{\pi}^{i_n},\tilde{\pi}^{j_n}).\nonumber
\end{equation}}

\begin{proof}
Considering an \emph{arbitrary} pipeline $p$, its lower bound $\mathcal{L}_{\bm{\pi}}(\tilde{\pi}^{i_p},\tilde{\pi}^{j_p})$ satisfies
\begin{align}
   &\mathcal{L}_{\bm{\pi}}(\tilde{\pi}^{i_p},\tilde{\pi}^{j_p})=\mathcal{J}(\bm{\pi})+M_{\bm{\pi}}^{i_p}(\tilde{\pi}^\varnothing,{\tilde{\pi}}^{i_p})+M_{\bm{\pi}}^{-i_p}(\tilde{\bm{\pi}}^{-j_p}, \tilde{\pi}^{j_p}) \nonumber \\
   =&\mathcal{J}(\bm{\pi})+\mathbb{E}_{s\sim\rho_{\bm{\pi}}, \bm{a}\sim\bm{\tilde{\pi}}}\left[A_{\bm{\pi}}^{i_p}(s,a^\varnothing,a^{i_p})\right]+\mathbb{E}_{s\sim\rho_{\bm{\pi}}, \bm{a}\sim\bm{\tilde{\pi}}}\left[A_{\bm{\pi}}^{-i_{p}}(s,a^{i_p},\bm{a}^{-i_{p}})\right]-\sum_{i=1}^n \mathcal{C}D_{\rm KL}^{\max}({\pi}^{i}, {\tilde{\pi}}^{i}), \nonumber \\
   & \text{which, by Equations~(\ref{equation_appendix_sampling1}) and (\ref{equation_appendix_sampling2}), equals}\nonumber \\
    =& \mathcal{J}(\bm{\pi}) + \mathbb{E}_{s\sim\rho_{\bm{\pi}}, \bm{a}\sim\bm{{\pi}}}\left[\frac{\tilde{\pi}^{i_p}(a^{i_p}|s)}{\pi^{i_p}(a^{i_p}|s)}A_{\bm{\pi}}(s,\bm{a})\right]  \nonumber \\
    & \qquad \ +\mathbb{E}_{s\sim \rho_{\bm{\pi}},\bm{a}\sim \bm{\pi}} \left[\left(\frac{\tilde{\bm{\pi}}^{-i_p}(\bm{a}^{-i_p}|s)}{\bm{\pi}^{-i_p}(\bm{a}^{-i_p}|s)}-1\right)\cdot \frac{\tilde{\pi}^{i_p}(a^{i_p}|s)}{\pi^{i_p}(a^{i_p}|s)}\cdot A_{\bm{\pi}}(s,\bm{a})\right]-\sum_{i=1}^n{\mathcal{C}D_{\rm KL}^{\max}{(\pi^{i}, \tilde{\pi}^{i})}}\nonumber\\
    =&\mathcal{J}(\bm{\pi}) + \mathbb{E}_{s\sim\rho_{\bm{\pi}}, \bm{a}\sim\bm{{\pi}}}\left[\prod_{i=1}^{n}\frac{\tilde{\pi}^{i}(a^{i}|s)}{\pi^{i}(a^{i}|s)}A_{\bm{\pi}}(s,\bm{a})\right]-\sum_{i=1}^n{\mathcal{C} D_{\rm KL}^{\max}{(\pi^{i}, \tilde{\pi}^{i})}}.\label{equation_appendix_shared_lower_bound}
\end{align}
Note that Equation~(\ref{equation_appendix_shared_lower_bound}) does not depend on the variable $p$. Thus, by generalizing the above equation to cover the cases of $p=1,\dots,n$, we can obtain a series of equalities: $\mathcal{L}_{\bm{\pi}}(\tilde{\pi}^{i_1},\tilde{\pi}^{j_1})=,\dots,=\mathcal{L}_{\bm{\pi}}(\tilde{\pi}^{i_n},\tilde{\pi}^{j_n})$. To capture this unified perspective, we introduce a shared lower bound $\mathcal{L}^{S}_{\bm{\pi}}(\tilde{\pi}^{1},\dots,\tilde{\pi}^{n})$ that is equal to Equation~(\ref{equation_appendix_shared_lower_bound}) for all pipelines.
\end{proof}

\subsection{Proof of Lemma~\ref{lemma_KL_expectation}}\label{appendix_KL_expectation}
\textbf{Lemma~\ref{lemma_KL_expectation}.} \emph{At $k$-the policy iteration, the intermediate policies $\{\pi_{k+0.5}^{i_p}\}_{p=1}^n$ and the new policies $\{\pi_{k+1}^{j_p}\}_{p=1}^n$ are produced by Equations~(\ref{equation_full_step1}) and (\ref{equation_full_step2}) respectively. Regardless of whether parameter sharing is used or not, the following inequality holds:
\begin{equation}
 \sum_{i=1}^n \mathcal{C}D_{\rm KL}^{\max}({\pi}^{i}_{k}, {\pi}_{k+1}^{i}) \leq \sum_{i=1}^n \mathbb{E}_{s,\bm{a}} \Big[\big(r(\bm{\pi}_{k+1})-r(\pi_{k+0.5}^{i})\cdot r(\bm{\pi}_{k+1}^{-i})+r(\pi_{k+0.5}^{i})-1\big)\cdot A^{i} \Big].\nonumber
\end{equation}}

% \begin{lemma}[3.2]
%    At $k$-the policy iteration, given the intermediate policies $\{\pi_{k+0.5}^{i_p}\}_{p=1}^n$ produced by \cref{equation_full_step1} and the new policies $\{\pi_{k+1}^{j_p}\}_{p=1}^n$ produced by \cref{equation_full_step2}, the following inequalities hold across all pipelines, $p=1,\dots,n$:
%    \begin{align}
%       \bigg\{\mathcal{C}D_{\rm KL}^{\max}({\pi}^{j_p}_{k}, {\pi}_{k+1}^{j_p}) \leq \mathbb{E}_{s,\bm{a}} \big[ (r(\bm{\pi}_{k+1})-r(\pi_{k+0.5}^{j_p})\cdot r(\bm{\pi}_{k+1}^{-j_p})+r(\pi_{k+0.5}^{j_p}))A^{j_p} \big]\bigg\}_{p=1}^{n}.\nonumber
%    \end{align}
% \end{lemma}
\begin{proof}
\textbf{Non-parameter sharing case:}

At the independent step of $k$-th iteration, we perform the maximization of the modified objective $\widetilde{M}_{\bm{\pi}_k}^{i_p}(\pi_{k+1}^\varnothing,\pi^{i_p})$ w.r.t $\pi^{i_p}$ of all pipelines, as shown in Equation~(\ref{equation_full_step1}). For an arbitrary pipeline $p$, this step leads to
\begin{align}
& \mathbb{E}_{s,\bm{a}} \left[r(\pi_{k+0.5}^{i_p})\cdot A^{i_p}\right]-\mathcal{C}D_{\rm KL}^{\max}\left({\pi}^{i_p}_{k}, {\pi}_{k+0.5}^{i_p}\right)\geq \mathbb{E}_{s,\bm{a}} \left[r(\pi^{i_p}_{k})\cdot A^{i_p}\right]-\mathcal{C}D_{\rm KL}^{\max}({\pi}^{i_p}_{k}, {\pi}_k^{i_p}) \nonumber\\
\Leftrightarrow \quad & \mathbb{E}_{s\sim \rho_{\bm{\pi}_k},\bm{a}\sim \bm{\pi}_k} \left[r(\pi_{k+0.5}^{i_p})\cdot A^{i_p}\right]-\mathbb{E}_{s\sim \rho_{\bm{\pi}_k},\bm{a}\sim \bm{\pi}_k}\left[r(\pi_{k}^{i_p})\cdot A^{i_p}\right]\geq \mathcal{C}D_{\rm KL}^{\max}({\pi}^{i_p}_{k}, {\pi}_{k+0.5}^{i_p})-0 \nonumber\\
\Leftrightarrow \quad & \mathbb{E}_{s\sim \rho_{\bm{\pi}_k},\bm{a}\sim \bm{\pi}_k} \left[\left(r(\pi_{k+0.5}^{i_p})-1\right)\cdot A^{i_p}\right]\geq \mathcal{C}D_{\rm KL}^{\max}({\pi}^{i_p}_{k}, {\pi}_{k+0.5}^{i_p}) \nonumber
\end{align}
Therefore, by generalizing the above inequality to cover the cases of $p=1,\dots,n$ at the \emph{independent step}, we have 
\begin{align}
\label{equation_step1_results_proof}
\bigg\{\mathcal{C}D_{\rm KL}^{\max}({\pi}^{i_p}_{k}, {\pi}_{k+0.5}^{i_p})\leq \mathbb{E}_{s\sim \rho_{\bm{\pi}_k},\bm{a}\sim \bm{\pi}_k} \left[\left(r(\pi_{k+0.5}^{i_p})\cdot-1\right)\cdot A^{i_p}\right]\bigg\}_{p=1}^n
\end{align}
In full-pipeline paradigm, the result $\pi_{k+0.5}^{i_p}$ generated by pipeline $p$ at the independent step will be further optimized by one of the complementary pipelines $-p$ at the dependent step due to the non-overlapping selection. These interconnections among pipelines are essential for making full-pipeline optimization a high general update scheme. Thus, when initiating the \emph{dependent step} and executing Equation~(\ref{equation_full_step2}), it is important to take these interconnections into account. Specifically, for an arbitrary pipeline $p$, the dependent step will perform $\pi_{k+1}^{j_p} = \arg\max_{\pi^{j_p}}\widetilde{M}_{\bm{\pi}_k}^{-i_p}(\bm{\pi}_{k+1}^{-j_p}, \pi^{j_p})$ based on the update result $\pi_{k+0.5}^{j_p}$ obtained from another pipeline at the independent step:
\begin{align}
 & \mathbb{E}_{s,\bm{a}} \left[\left(r(\pi^{j_p}_{k+1})\cdot r(\bm{\pi}_{k+1}^{-\{i_p,j_p\}}) - 1\right)\cdot r(\pi_{k+1}^{i_p}) \cdot A^{j_p}\right] - \mathcal{C}D_{\rm KL}^{\max}({\pi}_k^{j_p}, \pi^{j_p}_{k+1})\nonumber \\
 &\qquad \qquad\qquad\qquad\geq \mathbb{E}_{s,\bm{a}} \left[\left(r(\pi^{j_p}_{k+0.5})\cdot r(\bm{\pi}_{k+1}^{-\{i_p,j_p\}}) - 1\right)\cdot r(\pi_{k+1}^{i_p}) \cdot A^{j_p}\right] - \mathcal{C}D_{\rm KL}^{\max}({\pi}_k^{j_p}, \pi_{k+0.5}^{j_p}) \nonumber\\
   \Leftrightarrow \qquad & \mathbb{E}_{s\sim \rho_{\bm{\pi}_k},\bm{a}\sim \bm{\pi}_k} \left[\left(r(\pi^{j_p}_{k+1})\cdot r(\bm{\pi}_{k+1}^{-j_p})-r(\pi^{j_p}_{k+0.5})\cdot r(\bm{\pi}_{k+1}^{-j_p})\right)A^{j_p}\right]\geq \mathcal{C}D_{\rm KL}^{\max}({\pi}^{j_p}_{k}, {\pi}_{k+1}^{j_p})-\mathcal{C}D_{\rm KL}^{\max}({\pi}^{j_p}_{k}, {\pi}_{k+0.5}^{j_p}) \nonumber
\end{align}
Therefore, by generalizing the above inequality to cover the cases of $p=1,\dots,n$ at the \emph{dependent step}, we have 
\begin{align}
   \label{equation_step2_results_proof}
   \bigg\{\mathbb{E}_{s\sim \rho_{\bm{\pi}_k},\bm{a}\sim \bm{\pi}_k} \left[\left(r(\bm{\pi}_{k+1})-r(\pi^{j_p}_{k+0.5})\cdot r(\bm{\pi}_{k+1}^{-j_p})\right)A^{j_p}\right]\geq \mathcal{C}D_{\rm KL}^{\max}({\pi}^{j_p}_{k}, {\pi}_{k+1}^{j_p})-\mathcal{C}D_{\rm KL}^{\max}({\pi}^{j_p}_{k}, {\pi}_{k+0.5}^{j_p})\bigg\}_{p=1}^n
\end{align}
For each pipeline $p$, by replacing $\mathcal{C}D_{\rm KL}^{\max}({\pi}^{j_p}_{k}, {\pi}_{k+0.5}^{j_p})$ term in Equation~(\ref{equation_step2_results_proof}) with the inequality in Equation~(\ref{equation_step1_results_proof}), we have
\begin{align}
   &\mathbb{E}_{s\sim \rho_{\bm{\pi}_k},\bm{a}\sim \bm{\pi}_k} \left[\left(r(\bm{\pi}_{k+1})-r(\pi^{j_p}_{k+0.5})\cdot r(\bm{\pi}_{k+1}^{-j_p})\right)A^{j_p}\right]\geq \mathcal{C}D_{\rm KL}^{\max}({\pi}^{j_p}_{k}, {\pi}_{k+1}^{j_p})-\mathbb{E}_{s\sim \rho_{\bm{\pi}_k},\bm{a}\sim \bm{\pi}_k} \left[\left(r(\pi_{k+0.5}^{j_p})-1\right)A^{j_p}\right]  \nonumber \\
   \Leftrightarrow \qquad &\mathcal{C}D_{\rm KL}^{\max}({\pi}^{j_p}_{k}, {\pi}_{k+1}^{j_p})\leq \mathbb{E}_{s\sim \rho_{\bm{\pi}_k},\bm{a}\sim \bm{\pi}_k} \left[\left(r(\bm{\pi}_{k+1})-r(\pi^{j_p}_{k+0.5})\cdot r(\bm{\pi}_{k+1}^{-j_p})\right)A^{j_p}\right]+\mathbb{E}_{s\sim \rho_{\bm{\pi}_k},\bm{a}\sim \bm{\pi}_k} \left[\left(r(\pi_{k+0.5}^{j_p})-1\right)A^{j_p}\right]  \nonumber \\
   \Leftrightarrow \qquad &\mathcal{C}D_{\rm KL}^{\max}({\pi}^{j_p}_{k}, {\pi}_{k+1}^{j_p})\leq \mathbb{E}_{s\sim \rho_{\bm{\pi}_k},\bm{a}\sim \bm{\pi}_k} \left[\left(r(\bm{\pi}_{k+1})-r(\pi^{j_p}_{k+0.5})\cdot r(\bm{\pi}_{k+1}^{-j_p})+r(\pi_{k+0.5}^{j_p})-1\right)A^{j_p}\right]. \nonumber
\end{align}
By generalizing the above inequality to cover the cases of $p=1,\dots,n$, we have
\begin{equation}
   \bigg\{\mathcal{C}D_{\rm KL}^{\max}({\pi}^{j_p}_{k}, {\pi}_{k+1}^{j_p}) \leq \mathbb{E}_{s,\bm{a}} \left[\left(r(\bm{\pi}_{k+1})-r(\pi_{k+0.5}^{j_p})\cdot r(\bm{\pi}_{k+1}^{-j_p})+r(\pi_{k+0.5}^{j_p})-1\right)A^{j_p} \right]\bigg\}_{p=1}^{n}. \nonumber
\end{equation}
Note that each inequality is not purely derived from the optimization of the individual pipeline. Specifically, for each pipeline $p$, $\mathcal{C}D_{\rm KL}^{\max}({\pi}^{j_p}_{k}, {\pi}_{k+0.5}^{j_p})$ term in Equation~(\ref{equation_step2_results_proof}) cannot utilize its own inequality in Equation~(\ref{equation_step1_results_proof}) produced at independent step since $i_p\neq j_p$. Nevertheless, it can be substituted with the inequality produced by another pipeline, highlighting the advantageous interconnections among pipelines. Therefore, we finally arrive at 
\begin{equation}
   \sum_{i=1}^n \mathcal{C}D_{\rm KL}^{\max}({\pi}^{i}_{k}, {\pi}_{k+1}^{i}) \leq \sum_{i=1}^n \mathbb{E}_{s,\bm{a}} \big[\big(r(\bm{\pi}_{k+1})-r(\pi_{k+0.5}^{i})\cdot r(\bm{\pi}_{k+1}^{-i})+r(\pi_{k+0.5}^{i})-1\big)A^{i} \big]. \nonumber
\end{equation}

\textbf{Parameter sharing case:}

Now, we briefly present the parameter sharing case where Equations~(\ref{equation_full_step1}) and (\ref{equation_full_step2}) are implemented by maximizing the expectation along pipelines. At the \emph{independent step}, we have
\begin{align}
& \frac{1}{n}\sum_{p=1}^{n}\left[\mathbb{E}_{s,\bm{a}} \left[r(\pi_{k+0.5}^{i_p})\cdot A^{i_p}\right]-\mathcal{C}D_{\rm KL}^{\max}\left({\pi}^{i_p}_{k}, {\pi}_{k+0.5}^{i_p}\right)\right] \geq \frac{1}{n}\sum_{p=1}^{n}\left[\mathbb{E}_{s,\bm{a}} \left[r(\pi^{i_p}_{k})\cdot A^{i_p}\right]-\mathcal{C}D_{\rm KL}^{\max}({\pi}^{i_p}_{k}, {\pi}_k^{i_p})\right] \nonumber\\
\Leftrightarrow \qquad & \sum_{p=1}^{n}\left[\mathbb{E}_{s\sim \rho_{\bm{\pi}_k},\bm{a}\sim \bm{\pi}_k} \left[\left(r(\pi_{k+0.5}^{i_p})-1\right)\cdot A^{i_p}\right]\right]\geq \sum_{p=1}^{n}\left[\mathcal{C}D_{\rm KL}^{\max}({\pi}^{i_p}_{k}, {\pi}_{k+0.5}^{i_p})\right]. \label{equation_step1_results_proof_sha}
\end{align}
At the \emph{dependent step}, we have
\begin{align}
& \frac{1}{n}\sum_{p=1}^{n}\left[\mathbb{E}_{s,\bm{a}} \left[\left(r(\pi^{j_p}_{k+1})\cdot r(\bm{\pi}_{k+1}^{-\{i_p,j_p\}}) - 1\right)\cdot r(\pi_{k+1}^{i_p}) \cdot A^{j_p}\right] - \mathcal{C}D_{\rm KL}^{\max}({\pi}_k^{j_p}, \pi^{j_p}_{k+1})\right]\nonumber \\
\geq & \frac{1}{n}\sum_{p=1}^{n}\left[\mathbb{E}_{s,\bm{a}} \left[\left(r(\pi^{j_p}_{k+0.5})\cdot r(\bm{\pi}_{k+1}^{-\{i_p,j_p\}}) - 1\right)\cdot r(\pi_{k+1}^{i_p}) \cdot A^{j_p}\right] - \mathcal{C}D_{\rm KL}^{\max}({\pi}_k^{j_p}, \pi_{k+0.5}^{j_p})\right] \nonumber\\
   \Leftrightarrow \qquad & \sum_{p=1}^{n}\left[\mathbb{E}_{s\sim \rho_{\bm{\pi}_k},\bm{a}\sim \bm{\pi}_k} \left[\left(r(\pi^{j_p}_{k+1})\cdot r(\bm{\pi}_{k+1}^{-j_p})-r(\pi^{j_p}_{k+0.5})\cdot r(\bm{\pi}_{k+1}^{-j_p})\right)A^{j_p}\right]\right]\nonumber \\
   \geq & \sum_{p=1}^{n}\left[\mathcal{C}D_{\rm KL}^{\max}({\pi}^{j_p}_{k}, {\pi}_{k+1}^{j_p})-\mathcal{C}D_{\rm KL}^{\max}({\pi}^{j_p}_{k}, {\pi}_{k+0.5}^{j_p})\right]. \label{equation_step2_results_proof_sha}
\end{align}
Combining Equations~(\ref{equation_step1_results_proof_sha}) and (\ref{equation_step2_results_proof_sha}) by $\sum_{i=1}^{n}\left[\mathcal{C}D_{\rm KL}^{\max}({\pi}^{i}_{k}, {\pi}_{k+0.5}^{i})\right]$ term, we arrive at 
\begin{equation}
   \sum_{i=1}^n \mathcal{C}D_{\rm KL}^{\max}({\pi}^{i}_{k}, {\pi}_{k+1}^{i}) \leq \sum_{i=1}^n \mathbb{E}_{s,\bm{a}} \big[\big(r(\bm{\pi}_{k+1})-r(\pi_{k+0.5}^{i})\cdot r(\bm{\pi}_{k+1}^{-i})+r(\pi_{k+0.5}^{i})-1\big)\cdot A^{i} \big]. \nonumber
\end{equation}
Consequently, Lemma~\ref{lemma_KL_expectation} helps to reach a unified outcome for different parameter-sharing types. 

\end{proof}

\subsection{Proof of Theorem~\ref{theorem_imporved_lower_bound_fp3o}}\label{appendix_imporved_lower_bound_fp3o}
\textbf{Theorem~\ref{theorem_imporved_lower_bound_fp3o}.} \emph{Whether parameter sharing is used or not, given Assumption~\ref{assumption_policy}, the new policies $\{\pi_{k+1}^{j_p}\}_{p=1}^n$ of full-pipeline optimization enable the shared lower bound to be monotonically improved:
$\mathcal{L}^{S}_{\bm{\pi}_k}(\pi_{k+1}^{1},\dots,\pi_{k+1}^{n}) \geq \mathcal{L}^{S}_{\bm{\pi}_k}(\pi_{k}^{1},\dots,\pi_{k}^{n}),$
which can directly guarantee $\mathcal{J}(\bm{\pi}_{k+1})\geq \mathcal{J}(\bm{\pi}_{k})$.}

\begin{proof}
   With Lemma~\ref{lemma_KL_expectation}, we have
   \begin{align}
      &\sum_{p=1}^n\mathcal{C}D_{\rm KL}^{\max}({\pi}^{j_p}_{k}, {\pi}_{k+1}^{j_p}) \nonumber \\
      \leq&\sum_{p=1}^n \mathbb{E}_{s,\bm{a}}\left[\left(r(\bm{\pi}_{k+1})-r(\pi_{k+0.5}^{j_p})\cdot r(\bm{\pi}_{k+1}^{-j_p})+r(\pi_{k+0.5}^{j_p})-1\right)A^{j_p} \right]\nonumber \\
      =&\mathbb{E}_{s,\bm{a}}\left[\sum_{p=1}^nr(\bm{\pi}_{k+1})A^{j_p}\right]-\mathbb{E}_{s,\bm{a}}\left[\sum_{p=1}^nA^{j_p}\right]-\sum_{p=1}^n\mathbb{E}_{s,\bm{a}}\left[\left(r(\pi_{k+0.5}^{j_p})\cdot r(\bm{\pi}_{k+1}^{-j_p})\right)A^{j_p}\right]+\sum_{p=1}^n\mathbb{E}_{s,\bm{a}}\left[r(\pi_{k+0.5}^{j_p})A^{j_p}\right]\nonumber \\
      =&\mathbb{E}_{s\sim \rho_{\bm{\pi}_k},\bm{a}\sim \bm{\pi}_k}\left[\frac{\bm{\pi}_{k+1}(\bm{a}|s)}{\bm{\pi}_{k}(\bm{a}|s)}\sum_{p=1}^nA^{j_p}\right]-\mathbb{E}_{s\sim \rho_{\bm{\pi}_k},\bm{a}\sim \bm{\pi}_k}\left[\sum_{p=1}^nA^{j_p}\right] \nonumber \\
      &\qquad -\sum_{p=1}^n\mathbb{E}_{s\sim \rho_{\bm{\pi}_k},\bm{a}\sim \bm{\pi}_k}\left[\left(\frac{\pi_{k+0.5}^{j_p}(a^{j_p}|s)}{\pi_{k}^{j_p}(a^{j_p}|s)}\cdot \frac{\bm{\pi}_{k+1}^{-j_p}(\bm{a}^{-j_p}|s)}{\bm{\pi}_{k}^{-j_p}(\bm{a}^{-j_p}|s)}\right)A^{j_p}\right]+\sum_{p=1}^n\mathbb{E}_{s\sim \rho_{\bm{\pi}_k},\bm{a}\sim \bm{\pi}_k}\left[\frac{\pi_{k+0.5}^{j_p}(a^{j_p}|s)}{\pi_{k}^{j_p}(a^{j_p}|s)}A^{j_p}\right]\nonumber \\
      =&\mathbb{E}_{s\sim \rho_{\bm{\pi}_k},\bm{a}\sim \bm{\pi}_k}\left[\prod_{i=1}^{n}\frac{\pi^i_{k+1}(a^i|s)}{\pi^i_{k}(a^i|s)}A_{\bm{\pi}_k}(s, \bm{a})\right]-\mathbb{E}_{s\sim \rho_{\bm{\pi}_k},\bm{a}\sim \bm{\pi}_k}\left[A_{\bm{\pi}_k}(s,\bm{a})\right]\nonumber \\
      &\qquad -\sum_{p=1}^n\mathbb{E}_{s\sim \rho_{\bm{\pi}_k},a^{j_p}\sim \pi^{j_p}_{k+0.5},\bm{a}^{-j_p}\sim \bm{\pi}^{-j_p}_{k+1}}\left[A^{j_p}\right]+\sum_{p=1}^n\mathbb{E}_{s\sim \rho_{\bm{\pi}_k},a^{j_p}\sim \pi^{j_p}_{k+0.5},\bm{a}^{-j_p}\sim \bm{\pi}^{-j_p}_k}\left[A^{j_p}\right] \nonumber 
      \end{align}
      \begin{align}
      =&\mathbb{E}_{s\sim \rho_{\bm{\pi}_k},\bm{a}\sim \bm{\pi}_k}\left[\prod_{i=1}^{n}\frac{\pi^i_{k+1}(a^i|s)}{\pi^i_{k}(a^i|s)}A_{\bm{\pi}_k}(s, \bm{a})\right]-\sum_{p=1}^n{\mu(\bm{\pi}^{-j_p}_{k+1})} +\sum_{p=1}^n{\mu(\bm{\pi}^{-j_p}_{k})}\qquad\qquad\qquad\qquad\qquad\qquad\qquad\qquad\nonumber \\
      &\text{which, by introducing the constraint in Equation~(\ref{equation_full_step2}), is}\nonumber \\
      \leq&\mathbb{E}_{s\sim \rho_{\bm{\pi}_k},\bm{a}\sim \bm{\pi}_k}\left[\prod_{i=1}^{n}\frac{\pi^i_{k+1}(a^i|s)}{\pi^i_{k}(a^i|s)}A_{\bm{\pi}_k}(s, \bm{a})\right]\label{equation_appendix_stacking_result}
   \end{align}
   % From the above derivation, we can also find that as long as $\sum_{p=1}^nA^{j_p}=A_{\bm{\pi}_{k}}(s,\bm{a})$ hold, we can obtain the outcome of Equation~\ref{equation_appendix_stacking_result}. Therefore, when revisiting the modified objectives in Section~\ref{section_full_pipeline_optimization}, we only need to split $A_{\bm{\pi}}(s, \bm{a})$ into $n$ arbitrary scalar values $A^1,...,A^n$, i.e., $\sum_{i=1}^n{A^{i}}=A_{\bm{\pi}}(s, \bm{a})$. There are no strict standards for specific spilt methods (e.g., we can execute average split). More Importantly, the \emph{scalar values} $A^1,...,A^n$ do not mean the individual advantage \emph{functions} and do not involve the credit assignment issue like~\cite{sunehag2017value}.
   Based on the above derivation, it becomes evident that Equation~(\ref{equation_appendix_stacking_result}) can be obtained as long as the condition $\sum_{p=1}^nA^{j_p}=A_{\bm{\pi}_{k}}(s,\bm{a})$ holds. Therefore, when we revisit the modified objectives in Section~\ref{section_full_pipeline_optimization}, it is understandable to split $A_{\bm{\pi}}(s, \bm{a})$ into $n$ arbitrary scalar values $A^1,...,A^n$, i.e., $\sum_{i=1}^n{A^{i}}=A_{\bm{\pi}}(s, \bm{a})$. How to split $A_{\bm{\pi}}(s, \bm{a})$ will not affect the theoretical outcome; hence, there is no strict criterion for the splitting way (e.g., we can use an average split). Based on the above analysis, it is important to note that the \emph{scalar values} $A^1,...,A^n$ should not be interpreted as individual advantage \emph{functions} and they do not involve the credit assignment issue like~\cite{sunehag2017value}.

   Previously, we have established that the shared lower bound $\mathcal{L}^{S}_{\bm{\pi}}(\tilde{\pi}^{1},\dots,\tilde{\pi}^{n})$ is equal to Equation~(\ref{equation_appendix_shared_lower_bound}). Thus, at the $k$-th iteration, we evaluate the difference in shared lower bounds before and after the full-pipeline optimization as follows:
   \begin{align}
      &\mathcal{L}^{S}_{\bm{\pi}_k}({\pi}_{k+1}^{1},...,{\pi}_{k+1}^{n})-\mathcal{L}^{S}_{\bm{\pi}_k}({\pi}_k^{1},...,{\pi}_k^{n})\nonumber \\
      =&\left(\mathbb{E}_{s\sim \rho_{\bm{\pi}_k},\bm{a}\sim \bm{\pi}_k}\left[\prod_{i=1}^{n}\frac{\pi^i_{k+1}(a^i|s)}{\pi^i_{k}(a^i|s)}A_{\bm{\pi}_k}(s, \bm{a})\right]-\sum_{i=1}^n\mathcal{C}D_{\rm KL}^{\max}({\pi}^{i}_{k}, {\pi}_{k+1}^{i})\right)\nonumber-\left(\mathbb{E}_{s\sim \rho_{\bm{\pi}_k},\bm{a}\sim \bm{\pi}_k}\left[A_{\bm{\pi}_k}(s, \bm{a})\right]-0\right)\nonumber \\
      &\text{which, by Equation~(\ref{equation_appendix_stacking_result}), satisfies}\nonumber \\
      \geq& 0\nonumber
   \end{align}
   With the improved shared lower bound, we thus have
   \begin{align}
      &\mathcal{J}(\bm{\pi}_{k+1}) \nonumber \\
      \geq&\mathcal{J}(\bm{\pi}_k)+M_{\bm{\pi}_k}^{i_p}({\pi}_{k+1}^\varnothing,\pi_{k+1}^{i_p})+M_{\bm{\pi}_k}^{-i_p}(\pi_{k+1}^{i_p}, \bm{\pi}_{k+1}^{-i_p})\nonumber \\
      =&\mathcal{L}^{S}_{\bm{\pi}_k}({\pi}_{k+1}^{1},...,{\pi}_{k+1}^{n}) \nonumber \\
      \geq&\mathcal{L}^{S}_{\bm{\pi}_k}({\pi}_k^{1},...,{\pi}_k^{n})\nonumber \\
      =&\mathcal{J}(\bm{\pi}_{k}). \nonumber
   \end{align}
\end{proof}

\subsection{Proof of the Main Objectives at the Independent Step}\label{appendix_main_objective_fp3o}
For arbitrary pipeline $p$, we simplify the main objective of the independent step as follows:
{\small
\begin{align}
   &L^{\rm FP3O}(\theta^{i_p}|\bm{\theta}_{k})\nonumber \\
   =&\mathbb{E}_{s,\bm{a}}\bigg[\min\bigg\{\left(r(\pi_{\theta^{i_p}}^{i_p})\cdot r\big(\bm{\pi}^{-\{f^{-1}(i_p),i_p\}}_{{\bm{\theta}_k}^{-\{f^{-1}(i_p),i_p\}}}\big)-1\right) r\big(\pi_{{\theta}_k^{f^{-1}(i_p)}}^{f^{-1}(i_p)}\big) \cdot\hat{A}^{i_p}, \Big({\rm clip}\big(r(\pi_{\theta^{i_p}}^{i_p}),1\pm\epsilon\big)\cdot r\big(\bm{\pi}^{-\{f^{-1}(i_p),i_p\}}_{{\bm{\theta}}_k^{-\{f^{-1}(i_p),i_p\}}}\big)-1\Big) r\big(\pi_{{\theta}_k^{f^{-1}(i_p)}}^{f^{-1}(i_p)}\big)\cdot \hat{A}^{i_p}\bigg\}\bigg]\nonumber \\
   &\text{which, with $r\big(\bm{\pi}^{-\{f^{-1}(i_p),i_p\}}_{{\bm{\theta}_k}^{-\{f^{-1}(i_p),i_p\}}}\big)=r\big(\pi_{{\theta}_k^{f^{-1}(i_p)}}^{f^{-1}(i_p)}\big)=1$, is}\nonumber \\
   =&\mathbb{E}_{s,\bm{a}}\bigg[\min\bigg(\Big(r(\pi_{\theta^{i_p}}^{i_p})-1\Big)\cdot \hat{A}^{i_p},\Big({\rm clip}\big(r(\pi_{\theta^{i_p}}^{i_p}),1\pm\epsilon\big)-1\Big)\cdot \hat{A}^{i_p}\bigg)\bigg]\nonumber \\
   =&\mathbb{E}_{s,\bm{a}}\left[\min\left(r(\pi_{\theta^{i_p}}^{i_p})\cdot \hat{A}^{i_p},{\rm clip}\big(r(\pi_{\theta^{i_p}}^{i_p}),1\pm\epsilon\big) \cdot \hat{A}^{i_p}\right)\right]-\mathbb{E}_{s,\bm{a}}\left[\hat{A}^{i_p}\right].\nonumber
\end{align}}
Clearly, it is the clipping objective of Equation~(\ref{equation_full_step1}) with an addition term $\mathbb{E}_{s,\bm{a}}\left[\hat{A}^{i_p}\right]$. However, it is important to note that $\mathbb{E}_{s,\bm{a}}\left[\hat{A}^{i_p}\right]$ has no contribution to the gradient $\frac{\partial L^{\rm FP3O}(\theta^{i_p}|\bm{\theta}_{k})}{\partial \theta^{i_p}}$ when optimizing $\theta^{i_p}$. Thus, $L^{\rm FP3O}(\theta^{i_p}|\bm{\theta}_{k})$ is completely equivalent to the clipping objective of Equation~(\ref{equation_full_step1}).

\subsection{Intuitive Diagram}
To be intuitive, we present Figure~\ref{fig_appendix_agent_demo} to illustrate the procedure of our FP3O algorithm. As shown in the Figure~\ref{fig_appendix_agent_demo}, HAPPO adopts a sequential update scheme where 5 agents are allocated one by one to train on a single pipeline. Therefore, this process entails a total of 5 update steps. During each step, the training agent can observe the update results of its preceding agents, while the remaining agents stay idle. In contrast, FP3O aims to establish multiple parallel pipelines, enabling these idle agents to be trained concurrently on these parallel pipelines. During the independent step, each agent is trained independently, while during the dependent step, agents have the opportunity to utilize the updated results of the independent step across all pipelines.

\begin{figure}[H]
    \centering
   \includegraphics[width=0.8\linewidth]{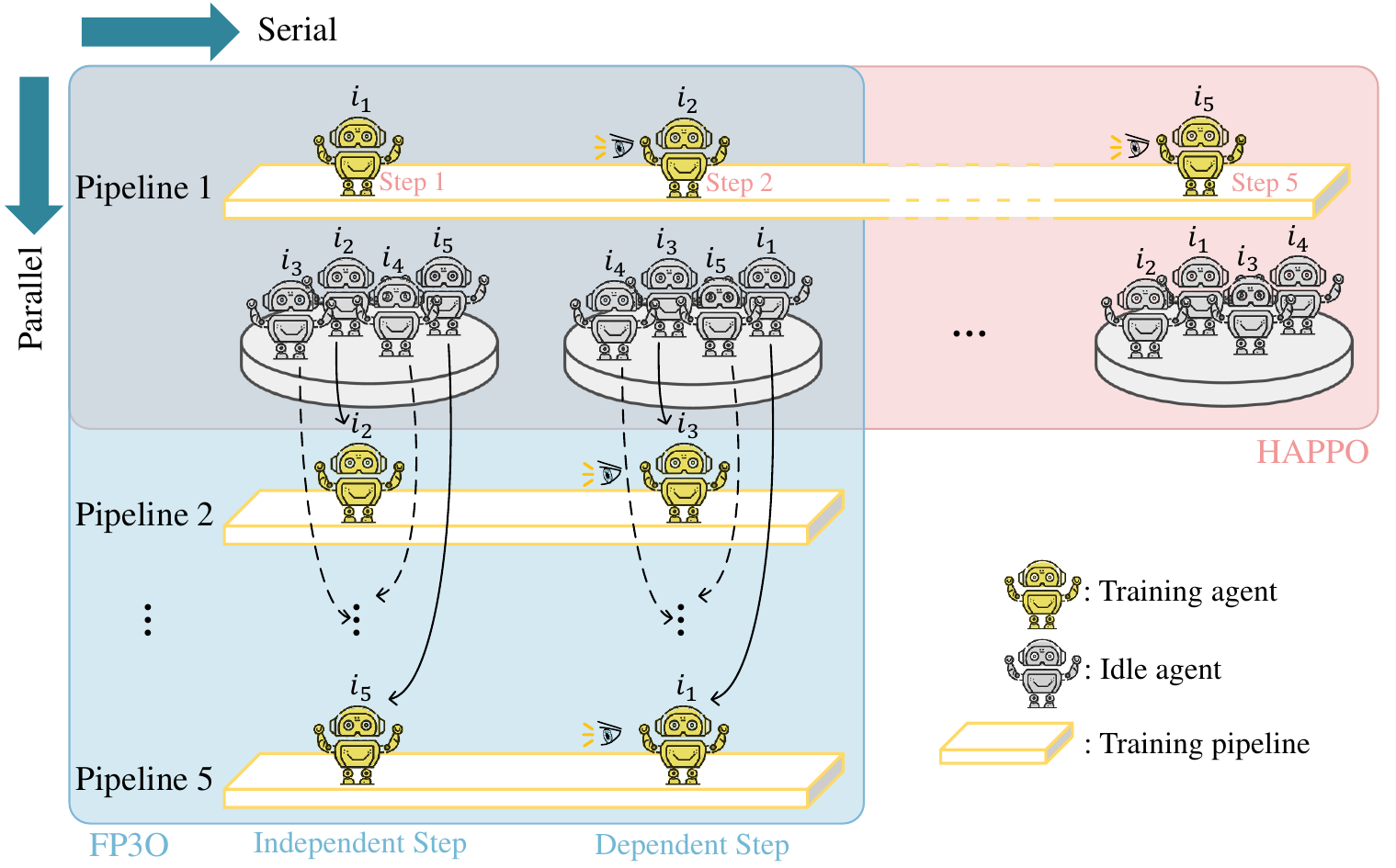}
   %\framebox[4.0in]{$\;$}
   \caption{Illustrative diagram of FP3O and HAPPO algorithms in the 5-agent scenario. The blue box and text encapsulate the main process of FP3O, while the red box and text encapsulate the main process of HAPPO. There is an overlapping part between two boxes in the upper left corner.}
   \label{fig_appendix_agent_demo}
\end{figure}

\subsection{Instability Analysis}
The instability arises from the cumulative product of ratios, e.g., $\prod_{i=1}^n(\frac{\pi^i}{\pi^i_{\rm old}})$. As the order $n$ increases, the instability intensifies. To provide a concise and unified representation, we present the main objectives of PPO-based algorithms in the following table:

\begin{table}[H]
\centering
\begin{tabular}{@{\,\,}c @{\,\,}|@{\,\,} c c c c c@{\,\,}}
   \toprule
   Method & IPPO/MAPPO & CoPPO & HAPPO & A2PO & Ours \\ \midrule
   Main Objective & \multirow{2}{*}{$\frac{\pi^i}{\pi^i_{\rm old}}A^i$} & \multirow{2}{*}{$\frac{\pi^i}{\pi^i_{\rm old}}\cdot\frac{\boldsymbol{\pi}^{-i}}{\boldsymbol{\pi}^{-i}_{\rm old}}A^i$} & \multirow{2}{*}{$\left(\frac{\pi^i}{\pi^i_{\rm old}}-1\right)\frac{\boldsymbol{\pi}^{<i}}{\boldsymbol{\pi}^{<i}_{\rm old}}A^i$} & \multirow{2}{*}{$\frac{\pi^i}{\pi^i_{\rm old}}\cdot\frac{\boldsymbol{\pi}^{<i}}{\boldsymbol{\pi}^{<i}_{\rm old}}A^i$} & \multirow{2}{*}{$\left(\frac{\pi^i}{\pi^i_{\rm old}}\cdot\frac{\boldsymbol{\pi}^{-\{j,i\}}}{\boldsymbol{\pi}^{-\{j,i\}}_{\rm old}}-1\right)\frac{\pi^j}{\pi^j_{\rm old}}A^i$} \\
   (Before Clipping) & & & & & \\ \midrule
   Order & $1$ & $n$ & $1,...,n$ & $1,...,n$ & $1, n$ \\
   Min Order & $1$ & $n$ & $1$ (first step) & $1$ (first step) & $1$ (independent step) \\
   Max Order & $1$ & $n$ & $n$ (last step) & $n$ (last step) & $n$ (dependent step) \\
   \bottomrule
\end{tabular}
\caption{Comparison of the main objectives and instability of various PPO-based algorithms including IPPO/MAPPO, CoPPO, HAPPO, A2PO, and our FP3O.}
\label{table_appendix_instability_analysis}
\end{table}
As indicated by the table, the instability of CoPPO remains constant at $n$ and the instability of HAPPO and A2PO progressively ranges from 1 to $n$. In contrast, our approach exhibits instability at 1 and $n$. Our approach does not introduce additional instability issues and may even reduce the instability challenge compared to its predecessors (CoPPO, HAPPO, A2PO), all of which are theoretically proven algorithms. Thus, the enhancement and efficiency of FP3O do not arise from compromising on instability. Nevertheless, when compared with IPPO/MAPPO, P3O still displays a higher degree of instability.

To mitigate the instability, ~\citep{wu2021coordinated} introduced the double-clip trick for CoPPO (an additional inner clip limits the cumulative product: $\frac{\boldsymbol{\pi}^{-i}}{\boldsymbol{\pi}^{-i}_{\rm old}}\rightarrow \text{clip}(\frac{\boldsymbol{\pi}^{-i}}{\boldsymbol{\pi}^{-i}_{\rm old}}, 1\pm \epsilon_2)$), and A2PO~\cite{wang2023order} follows this as well ($\frac{\boldsymbol{\pi}^{<i}}{\boldsymbol{\pi}^{<i}_{\rm old}}\rightarrow \text{clip}(\frac{\boldsymbol{\pi}^{<i}}{\boldsymbol{\pi}^{<i}_{\rm old}}, 1\pm \epsilon_2)$). Therefore, the double-clip technique is deemed a viable approach to alleviate the instability in estimating agent $i$'s policy gradient if necessary. Specifically, it is achieved by first clipping the joint policy ratio of other agents, followed by clipping the policy ratio of agent $i$. With the double-clip technique, the main objective $L^{\rm FP3O}(\theta^i|\bar{\bm{\theta}})$ of FP3O can be formulated as 
\begin{equation}
    \label{eq_double_clip}
   \mathbb{E}_{s,\bm{a}}\bigg[\min\bigg(\Big(r(\pi_{\theta^{i}}^{i})\cdot g(-i)-r(\pi_{\bar{\theta}^{j}}^{j})\Big) \cdot\hat{A}^{i}, \Big({\rm clip}\big(r(\pi_{\theta^{i}}^{i}),1\pm\epsilon_1\big)\cdot g(-i)-r(\pi_{\bar{\theta}^{j}}^{j})\Big) \cdot \hat{A}^{i}\bigg)\bigg], 
\end{equation}
where $g(-i)={\rm clip}\Big(r\big(\bm{\pi}^{-i}_{\bar{\bm{\theta}}^{-i}}\big),1\pm\epsilon_2\Big)$, $\epsilon_1$ is the clipping range of the policy ratio of agent $i$, and $\epsilon_2$ is the clipping range of the joint policy ratio of other agents. To validate the above analysis, we conduct experiments in Appendix~\ref{appendix_stability_test}.

\section{Experiments Settings}
\subsection{Details of Environments}\label{appendix_environments}
\paragraph{MAMuJoCo.}
MAMuJoCo benchmark \citep{de2020mamujoco} is a continuous and partially observable task~\footnote{Note that we employ a partial observability setting for MAMuJoCo tasks, which differs from the full observability setting in~\cite{wang2023order}. For instance, in the case of the 2-agent Walker task, the observation is an 11-dimensional vector under partial observability (ours), while it is a 19-dimensional vector under full observability~\cite{wang2023order}. The full observability setting provides more comprehensive information compared to our partial observability setting.}. MAMuJoCo groups different joints of a robot in the MuJoCo simulator \citep{todorov2012mujoco} and models them as different agents. For instance, as shown in Figure \ref{fig_mujoco} (a), MAMuJoCo regards six joints of HalfCheetah as six different agents and then lets each agent control its joints based on its local observation. Because of the diversification of body part control, the agents are regarded as heterogeneous. 
\begin{figure}[H]
    \centering
   \includegraphics[width=0.7\linewidth]{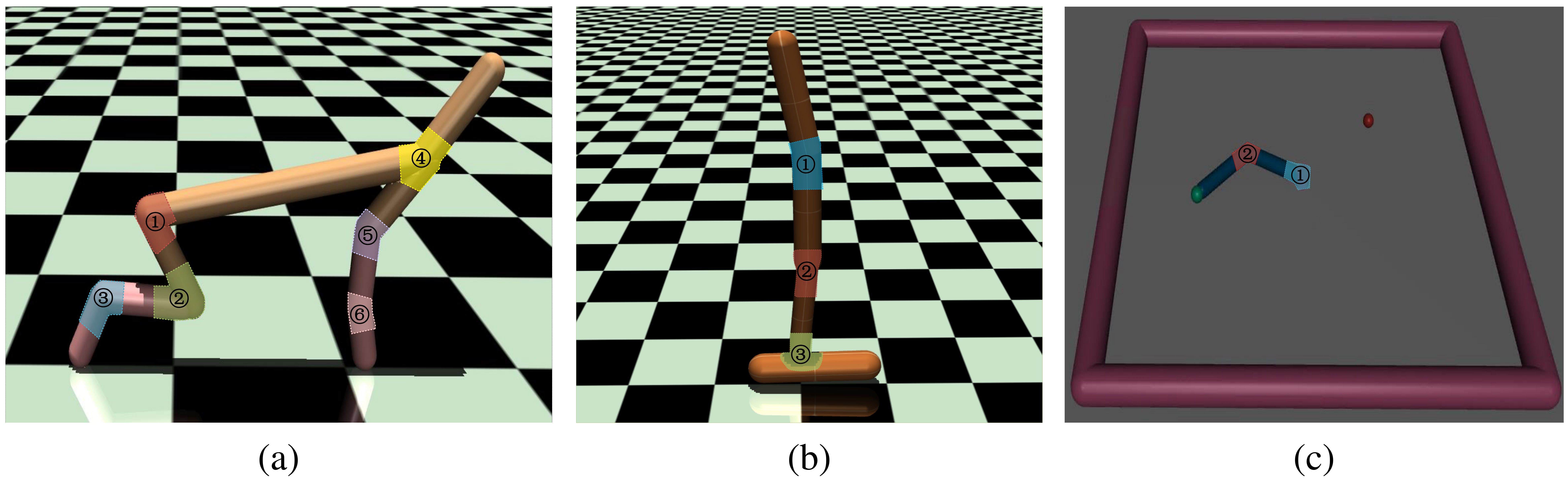}
   %\framebox[4.0in]{$\;$}
   \caption{Examples of MAMuJoCo. Different agents are depicted in different colors. (a) 6-Agent HalfCheetah [6$\times$1]. (b) 3-Agent Hopper [3$\times$1]. (c) 2-Agent Reacher [2$\times$1].}
   \label{fig_mujoco}
\end{figure}

\paragraph{SMAC.}
SMAC benchmark \citep{samvelyan2019starcraft} is a discrete and partially observable task. There are a variety of maps involving three difficulties: Easy, Hard, and Super-Hard. The goal is to train a team of ally units to defeat an opponent team of enemy units. Table \ref{table_smac_unit_type} lists the types of units in different scenarios. According to whether the types of ally units are the same or not, the scenarios be classified into homogeneous-agent or heterogeneous-agent tasks. For example, agents are regarded to be heterogeneous in scenarios bane vs. bane and 3s5z because the ally contains two different types of units. Agents are regarded to be homogeneous in the remaining scenarios (i.e., 2c vs. 64zg, 5m vs. 6m, corridor, 6h vs. 8z) because of the single type of units.
\begin{table}[H]
   \centering
   \begin{tabular}{@{\,} c @{\,\,\,\,} c  @{\,\,\,\,} c @{\,\,\,} c @{\,}}
      \toprule
      Scenarios    & Ally Units                          & Enemy Units                         & Type \\ \midrule
      bane vs bane & 20 Zerglings \& 4 Banelings         & 20 Zerglings \& 4 Banelings         & HE   \\
      2c vs 64zg   & 2 Colossi                           & 64 Zerglings                        & HO   \\
      3s5z         & 3 Stalkers \& 5 Zealots             & 3 Stalkers \& 5 Zealots             & HE   \\
      5m vs 6m     & 5 Marines                           & 6 Marines                           & HO   \\
      corridor     & 6 Zealots                           & 24 Zerglings                        & HO   \\
      6h vs 8z     & 6 Hydralisks                        & 8 Zealots                           & HO   \\
      \bottomrule
   \end{tabular}
   \caption{Unit types in SMAC benchmark.}
   \label{table_smac_unit_type}
\end{table}
\subsection{Baseline Selection}
Our baselines includes HAPPO~\citep{kuba2022happo}, MAPPO~\citep{yu2021mappo}, and IPPO~\citep{de2020ippo}. To ensure a comprehensive evaluation, we have extended HAPPO, MAPPO, and IPPO to encompass all three parameter-sharing types, i.e., full, partial, and non-parameter sharing. 

Additionally, we have included CoPPO~\citep{wu2021coordinated} with full parameter sharing in Appendix~\ref{appendix_CoPPO_Baseline} due to its limited adaptability to partial/non-parameter sharing. Given that A2PO~\cite{wang2023order} employs the same sequential update scheme as HAPPO and shares similar limitations in full/partial parameter sharing cases as demonstrated in Appendix~\ref{appendix_sequential_update_scheme_bad}, we have selected HAPPO as a representative. Nevertheless, we also add experiments of A2PO on Multi-Agent Particle Environment (MPE)~\cite{mordatch2018emergence} in Appendix~\ref{appendix_experiments_on_MPE}.

To maintain the consistency of our study, we have consciously excluded downstream works of these PPO-based algorithms from our baselines, as their objectives and contributions diverge from ours. For instance, \citet{li2022ace} proposed bidirectional action-dependent Q-learning to address the non-stationarity problem, which was later incorporated into MAPPO. Similarly, \citet{wen2022multiatt} integrated the Transformer model into HAPPO to enhance its performance.

Considering the following two reasons, we haven't run the comparisons to HATRPO~\cite{kuba2022happo}.
(1) HATRPO and HAPPO share the same theoretical foundation and sequential update scheme, both of which exhibit the same limitations in full or partial parameter sharing as demonstrated in Appendix~\ref{appendix_sequential_update_scheme_bad}. By evaluating HAPPO, we can sufficiently demonstrate our contribution. Second, TRPO-based algorithms, such as HATRPO, differ significantly from PPO-based algorithms in their implementation, including aspects like the utilization of the conjugate gradient, the Hessian of the average KL-divergence, and notably, distinct actor update methods (TRPO-based algorithms use an estimated maximal step size, while PPO-based algorithms use a learning rate). Thus, conducting a comparison and drawing meaningful observations would be challenging due to the presence of these unrelated factors.

\begin{figure}[H]
    \centering
   \includegraphics[width=0.8\linewidth]{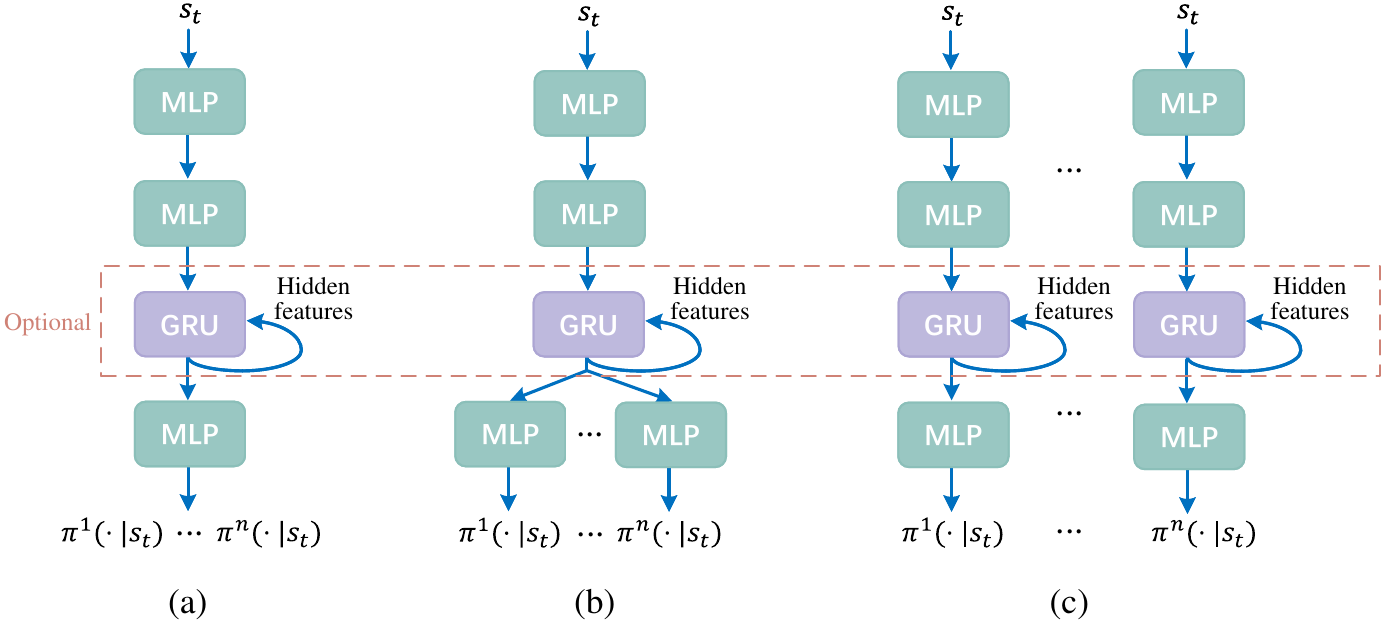}
   %\framebox[4.0in]{$\;$}
   \caption{Illustration of different network types. The GRU layer is optional in some tasks. (a) Full parameter sharing. (b) Partial parameter sharing. (c) Non-parameter sharing.}
   \label{fig_FuPS_PaPS_NoPS}
\end{figure}

\subsection{Details of Training}\label{appendix_hyperparameters}
\paragraph{Network type.}
Figure~\ref{fig_FuPS_PaPS_NoPS} depicts the different network configurations used in our experiments. Full parameter sharing: different agents' policy networks share the same set of parameters. Partial parameter sharing: the last layers of different agents' policy networks have individual parameters, while other layers share the same set of parameters. Non-parameter sharing: each agent's policy network has an individual set of parameters.

\paragraph{Non-overlapping selection setting.} In our experiment, we perform non-overlapping selection with an easily-implemented method, i.e., cyclic shift. For example, in a 5-agent scenario, $i_{1:n}=\{2, 4, 1, 3, 5\}$ executes cyclic left shift, then yielding $j_{1:n}=\{4, 1, 3, 5, 2\}$.

\paragraph{Repetitive experiments.} In our experiments on MAMuJoCo, the score is averaged over 32 test episodes after each training iteration. The final evaluation score is further averaged over 5 different seeds. In SMAC, We follow the median win rate evaluation metric adopted in \cite{wang2020rode} and \cite{yu2021mappo}. The win rate is computed over 32 test episodes after each training iteration and the final evaluation result is averaged over 5 different seeds.

\paragraph{Computing infrastructure.} We implement our algorithm with Python 3.7 and PyTorch 1.9.0 using GPU acceleration. All of the experiments are trained on TITAN X GPU, running 64-bit Linux 4.4.0. All dependencies and their version have been listed in \verb+requirements.txt+ of our source code.

% \subsection{Hyperparameter Settings in MAMuJoCo Domain}\label{appendix_hyperparameters_mamujoco}
\paragraph{Hyperparameter settings.} Most of the hyperparameters in our MAMuJoCo and SMAC experiments are kept the same as the original hyperparameters in the HAPPO paper~\cite{kuba2022happo} and the MAPPO paper~\cite{yu2021mappo}. However, in the MAMuJoCo experiments, we increased the learning rate to 1e-5 from the original 5e-6 used in HAPPO to prevent too-slow convergence. Additionally, we introduced partial observability to MAMuJoCo experiments by setting the range of observation to 5, meaning that each agent observes the nearest 5 agents or joints. This is different from HAPPO, which assumes full observability. Furthermore, we used a 0.05 ppo clip for 2-Agent Walker and 3-Agent Hopper because of the high instability and large variance observed in their training process. The detailed hyperparameters in MAMuJoCo are listed in  Table \ref{table_hyperparameter_mujoco1} and Table \ref{table_hyperparameter_mujoco2}. The detailed hyperparameters in SMAC are listed in Table \ref{table_hyperparameter_smac1} and Table \ref{table_hyperparameter_smac2}.
% The hyperparameter settings in MAMuJoCo are shown in Table \ref{table_hyperparameter_mujoco1} and Table \ref{table_hyperparameter_mujoco2}.
\begin{table}[H]
   \centering
   \begin{tabular}{c c || c c}
      \toprule
      Hyperparameter         & Value      & Hyperparameter     & Value      \\ \midrule
      actor lr               & 1e-5       & gain               & 0.01       \\
      critic lr              & 3e-4       & episode length     & 1000       \\
      gamma                  & 0.99       & rollout threads    & 4          \\
      gae lamda              & 0.95       & num mini-batch     & 32       \\
      optimizer              & Adam       & gradient clip norm & 10         \\
      optimizer epsilon      & 1e-5       & fc layer dim   & 64         \\
      weight decay           & 0          & num fc   & 2         \\
      activation             & ReLU       & num layer after    & 1          \\
      network initialization & Orthogonal & value loss         & huber loss          \\
      training threads       & 8          & huber delta        & 10.0 \\
      ppo epochs             & 5          & entropy coef       & 0.001       \\
      stacked frames         & 1          & std x coef             & 1       \\
     agent observation       & 5          & std y coef         & 0.5     \\
      \bottomrule
   \end{tabular}
   \caption{Common hyperparameters in MAMuJoCo.}
   \label{table_hyperparameter_mujoco1}
\end{table}

\begin{table}[H]
   \centering
   \begin{tabular}{c c c c c c}
      \toprule
      \multirow{2}{*}{Task}  & \multirow{2}{*}{PPO Clip} & \multirow{2}{*}{Actor Network} & \multicolumn{3}{c}{Steps} \\ \cmidrule{4-6}
      ~                     &           &        & FuPS & PaPS & NoPS \\ \midrule
      2-Agent Reacher [2$\times$1]     & 0.2      & mlp     & 10e6  & 10e6 & 10e6 \\
      2-Agent Ant [2$\times$4]         & 0.2      & mlp     & 10e6  & 10e6 & 10e6 \\
      2-Agent Walker [2$\times$3]      & 0.05      & mlp     & 10e6  & 10e6 & 10e6 \\
      3-Agent Hopper [3$\times$1]      & 0.05      & mlp     & 10e6  & 10e6 & 10e6 \\
      6-Agent HalfCheetah [6$\times$1] & 0.2      & mlp     & 10e6  & 10e6 & 10e6\\
      Manyagent Swimmer [8$\times$2]   & 0.2      & mlp     & 10e6 & 10e6 & 10e6 \\
      \bottomrule
   \end{tabular}
   \caption{Adopted hyperparameters for different tasks in MAMuJoCo. FuPS, PaPS, and NoPS denote full parameter sharing, partial parameter sharing, and non-parameter sharing respectively.}
   \label{table_hyperparameter_mujoco2}
\end{table}

\begin{table}[H]
    \centering
    \begin{tabular}{c c || c c}
       \toprule
       Hyperparameters        & Value      & Hyperparameters    & Value      \\ \midrule
       actor lr               & 5e-4       & gain               & 0.01       \\
       critic lr              & 5e-4       & episode length     & 400        \\
       gamma                  & 0.99       & rollout threads    & 8          \\
       gae lamda              & 0.95       & batch size         & 3200       \\
       optimizer              & Adam       & num mini-batch     & 1          \\
       optimizer epsilon      & 1e-5       & gradient clip norm & 10         \\
       weight decay           & 0          & num fc   & 2         \\
       activation             & ReLU       & fc layer dim   & 64          \\
       network initialization & Orthogonal & num GRU layers    & 1          \\
       training threads       & 32         & RNN hidden dim    & 64 \\
       ppo epochs             & 5          & num layer after    & 1       \\
       stacked frames         & 1          & value       & huber loss      \\
       entropy coef           & 0.01       & huber delta        & 10.0       \\

       \bottomrule
    \end{tabular}
    \caption{Common hyperparameters in SMAC.}
    \label{table_hyperparameter_smac1}
\end{table}

\begin{table}[H]
\centering
\begin{tabular}{c c c c c c}
   \toprule
   \multirow{2}{*}{Map}  & \multirow{2}{*}{PPO Clip} & \multirow{2}{*}{Actor Network} & \multicolumn{3}{c}{Steps} \\ \cmidrule{4-6}
   ~                     &           &        & FuPS & PaPS & NoPS \\ \midrule
   bane vs. bane          & 0.2      & rnn     & 2e6  & 2e6 & 2e6 \\
   2c vs. 64zg            & 0.2      & rnn     & 5e6  & 5e6 & 5e6 \\
   3s5z                  & 0.2      & rnn     & 5e6  & 5e6 & 5e6 \\
   5m vs. 6m               & 0.05     & rnn     & 8e6  & 10e6 & 10e6 \\
   corridor               & 0.2      & mlp     & 8e6  & 10e6 & 10e6\\
   6h vs. 8z               & 0.2      & mlp     & 10e6 & 15e6 & 20e6 \\
   10m vs. 11m       & 0.2      & rnn     & 8e6 & - & - \\
   3s5z vs. 3s6z     & 0.05      & rnn     & 12e6 & - & - \\
   \bottomrule
\end{tabular}
\caption{Adopted hyperparameters for different maps in SMAC. FuPS, PaPS, and NoPS denote full parameter sharing, partial parameter sharing, and non-parameter sharing respectively.}
\label{table_hyperparameter_smac2}
\end{table}

% \newpage
\section{Additional Experiments and Analysis}\label{appendix_additional_exp}
\subsection{Comparison with CoPPO Baseline}\label{appendix_CoPPO_Baseline}
We evaluate the FP3O algorithm against CoPPO baseline \citep{wu2021coordinated} on more SMAC scenarios with full parameter sharing. As shown in Figure~\ref{fig_coppo}, we can observe that the training process of CoPPO is less stable, even in relatively easy scenarios (i.e., bane vs. bane and 2c vs. 64zg). This instability is a result of CoPPO's direct update to the joint policy during optimization, causing the optimization objective of each agent to constantly change after every mini-batch update. On the contrary, FP3O demonstrates a more stable performance across all tasks and achieves better overall results.
\begin{figure}[h]
\centering
\includegraphics[width=0.8\linewidth]{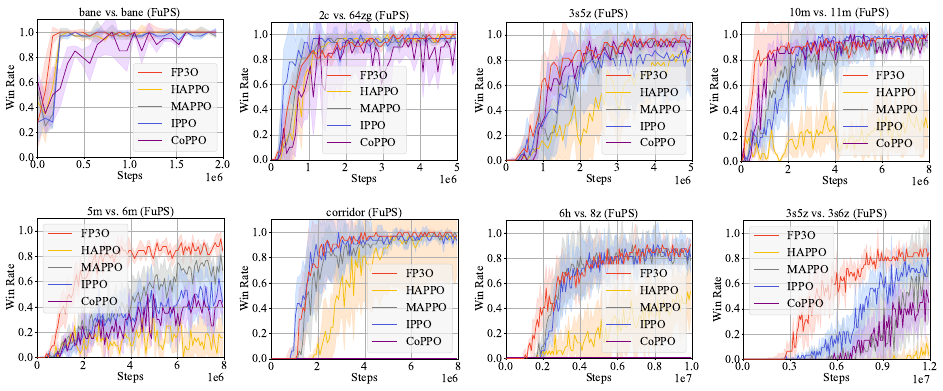}
%\framebox[4.0in]{$\;$}
\caption{Median evaluation win rate in the SMAC domain. FuPS denotes full parameter sharing.}
\label{fig_coppo}
\end{figure}

\subsection{Convergence Monitoring in 6h vs. 8z (PaPS)}\label{appendix_training_monitoring}
We depict the learning curves of MAPPO, HAPPO, and FP3O on 5 random seeds to monitor their convergence in the 6h vs. 8z (PaPS) task. As shown in Figure~\ref{fig_appendix_convergence_monitoring}, both MAPPO and HAPPO struggle to converge and occasionally reach a 0.0\% win rate on some seeds. HAPPO's struggle can be attributed to its limitation in the parameter-sharing case. Regarding MAPPO, it can be observed that seed 3 and seed 4 demonstrated convergence trends after only $2.5\times10^6$ steps, whereas the training of the 0\% seeds did not show any convergence trend even after 6$\times$ sampling efficiency ($1.5\times10^7$ steps). 
\begin{figure}[h]
\centering
\includegraphics[width=0.6\linewidth]{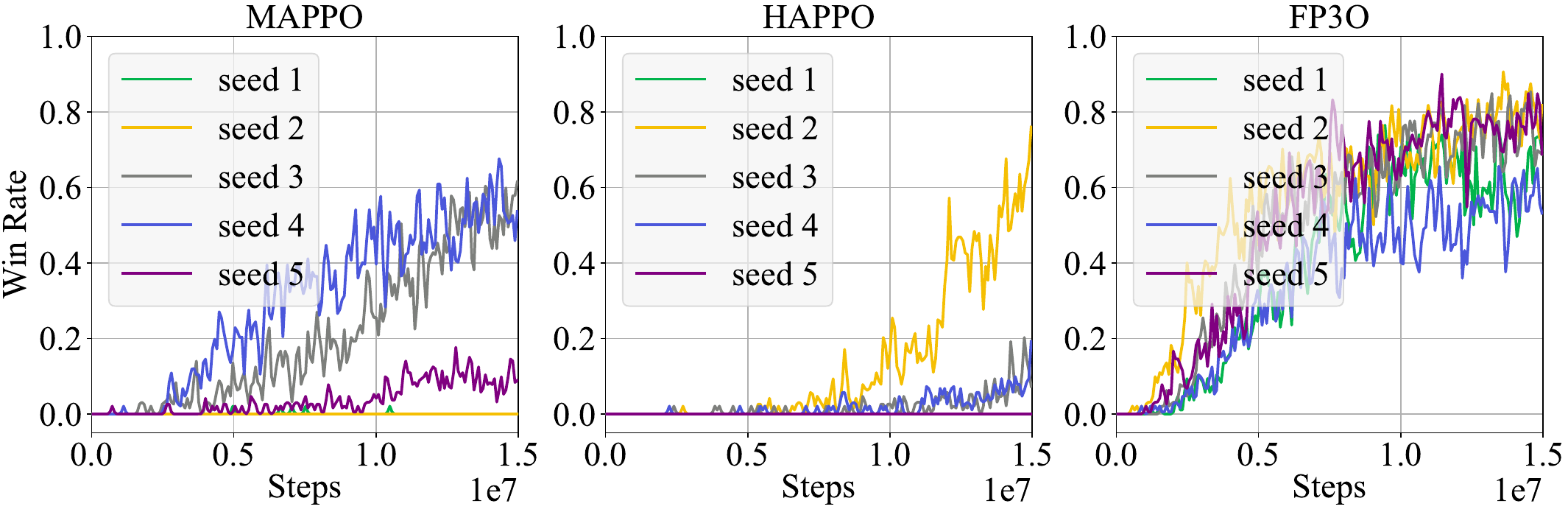}
%\framebox[4.0in]{$\;$}
\caption{The learning curves of MAPPO, HAPPO, and FP3O on 5 random seeds in the 6h vs. 8z (PaPS) task.}
\label{fig_appendix_convergence_monitoring}
\end{figure}
This suggests that MAPPO may have encountered some local optima from which it struggled to escape. 
However, our FP3O algorithm effectively overcomes local optima on all seeds, consistently delivering outstanding performance. Therefore, convergence is a valuable metric to evaluate an algorithm's capability to escape local optima, particularly under identical hyperparameter settings (the same amount of sampled data at each iteration).

\subsection{Experiments on More Algorithms and Domains}\label{appendix_experiments_on_MPE}
We conduct experiments on Multi-Agent Particle Environment (MPE)~\cite{mordatch2018emergence} (all hyperparameters are set the same as MAPPO paper~\cite{yu2021mappo}). We also incorporate the performance of the A2PO baseline~\cite{wang2023order}\footnote{We adopt the original hyperparameters of A2PO from~\cite{wang2023order}.} in the 3s5z task (FuPS). The results of these experiments are presented in Table~\ref{table_appendix_MPE} and Figure~\ref{fig_appendix_a2po_3s5z}. As can be discerned from the results, A2PO underperforms when using full parameter sharing. This shortfall is similar to HAPPO and can be attributed to the sequential updates, as explained in Appendix~\ref{appendix_sequential_update_scheme_bad}. Consequently, the sequential update scheme of A2PO also exhibits a lack of versatility across different parameter-sharing configurations. Although block updates\footnote{A2PO selectively partitions the agents into blocks for tasks involving numerous agents. For instance, in the 3s5z task, 8 agents are divided into 3 blocks, each containing 3, 3, and 2 agents, respectively. Consequently, the number of sequential updates decreases from 8 to 3. We denote the use of block updates as ``A2PO w/ Block'' and without as ``A2PO w/o Block''.} (A2PO w/ Block) can reduce the number of sequential updates to alleviate performance degradation, they do not address the fundamental issues of broken improvement guarantee and excessive KL divergence.
\begin{table}[h]
\begin{minipage}{0.65\textwidth}
\centering
\begin{tabular}{@{\,}c@{\,}|@{\,}c@{\,} c@{\,} c@{\,} c@{\,}}
\toprule
Task & FP3O & HAPPO & MAPPO & A2PO \\ \midrule
Simple reference (FuPS) & $-4.9_{\pm4.0}$ & $-9.8_{\pm6.4}$ & $-7.2_{\pm5.6}$ & $-9.2_{\pm7.1}$ \\
Cooperative communication (FuPS) & $-7.3_{\pm8.6}$ & $-13.7_{\pm9.5}$ & $-12.1_{\pm9.0}$ & $-12.9_{\pm10.6}$\\
Simple reference (NoPS) & $-4.6_{\pm2.1}$ & $-6.8_{\pm3.5}$ & $-6.5_{\pm4.8}$  & $-5.1_{\pm2.9}$ \\
Cooperative communication (NoPS) & $-4.8_{\pm5.0}$ & $-6.7_{\pm5.2}$ & $-9.1_{\pm6.8}$& $-6.9_{\pm5.7}$ \\
\bottomrule
\end{tabular}
\caption{The average evaluation rewards and standard deviations on MPE. FuPS and NoPS denote full parameter sharing and non-parameter sharing respectively.}
\label{table_appendix_MPE}
\end{minipage}%
\hspace{0.2cm}
\begin{minipage}{0.35\textwidth}
\centering
\includegraphics[width=1\textwidth]{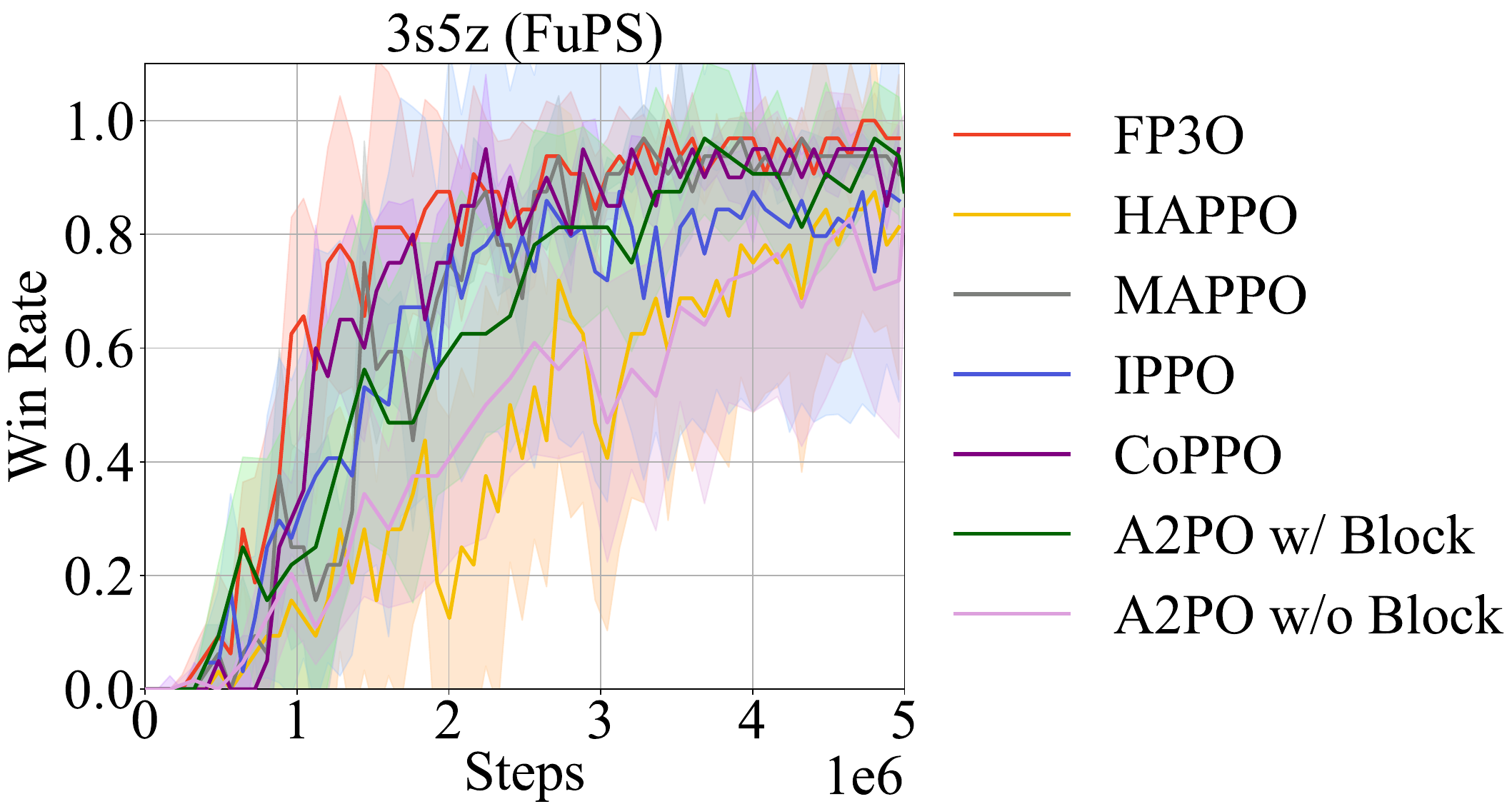}
%\framebox[4.0in]{$\;$}
\caption{Median evaluation win rate on 3s5z. FuPS denotes full parameter sharing.}
\label{fig_appendix_a2po_3s5z}
\end{minipage}
\end{table}

\subsection{Experiments on $\{A^i\}_{i=1}^n$}
In Section~\ref{section_full_pipeline_optimization}, we split $A_{\bm{\pi}}(s, \bm{a})$ into $n$ arbitrary scalar values $A^1,...,A^n$, i.e., $\sum_{i=1}^n{A^{i}}=A_{\bm{\pi}}(s, \bm{a})$. In Appendix~\ref{appendix_imporved_lower_bound_fp3o}, we prove that how to split $A_{\bm{\pi}}(s, \bm{a})$ does not impact the theoretical outcome; hence, there is no strict criterion for the splitting way. These insights are derived at a theoretical level, and we are currently investigating the practical implications of different splitting ways. In Figure~\ref{fig_appendix_random_average}, we compare the performances of FP3O using two splitting ways: average split (i.e., $\hat{A}^{i}=\frac{1}{n}\hat{A}_{\bm{\pi}}$), and random split (i.e., randomly generating $n$ scalar values $\{\hat{A}^{i}\}_{i=1}^n$ that satisfies $\sum_{i=1}^n{\hat{A}^{i}}=\hat{A}_{\bm{\pi}}$), where $\hat{A}_{\bm{\pi}}$ is an estimator of the joint advantage function. Here, we adopt the Generalized Advantage Estimation \citep{schulman2015gae} for $\hat{A}_{\bm{\pi}}$. The results reveal that there is no significant difference in performance between the average split and the random split.
\begin{figure}[h]
\centering
\includegraphics[width=0.8\linewidth]{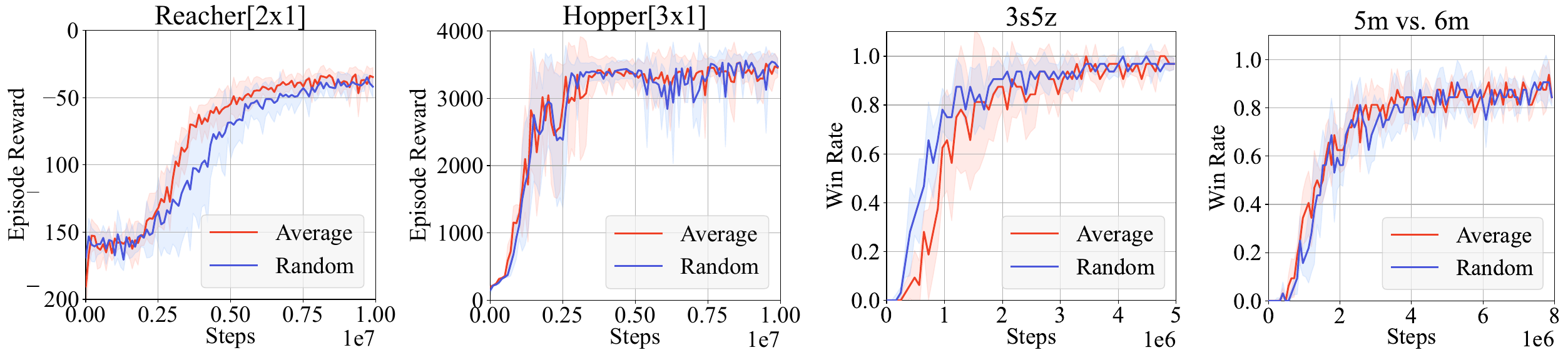}
%\framebox[4.0in]{$\;$}
\caption{Performance comparison of FP3O with FuPS: average split vs. random split.}
\label{fig_appendix_random_average}
\end{figure}

\subsection{Ablation Study on Dependent Step}
To demonstrate how the dependent step affects the performance and what the performance of the independent step alone looks like, we perform an ablation study on the dependent step. We compare the result of the independent step (InStep) against that of the independent step + dependent step (InStep+DeStep) on 3-Agent Hopper (heterogeneous-agent task) and 5m vs. 6m (homogeneous-agent task). As shown in Figure~\ref{fig_appendix_ablation}, it can be observed that InStep+DeStep outperforms InStep on all tasks. To answer the above results, we first revisit Equation~(\ref{equation_full_step1}). When replacing the KL divergence term with PPO-clip in Equation~(\ref{equation_full_step1}), it can be observed that the objective of the independent step will closely resemble that of MAPPO/IPPO, differing only in the reward part $A^{i_p}$. Thus, the independent step suffers from the same limitation as MAPPO/IPPO, i.e., ignores the impact of other agents' updates on an agent's objective, and is insufficient to guarantee the monotonic improvement of the performance.
\begin{figure}[H]
\centering
\includegraphics[width=1\linewidth]{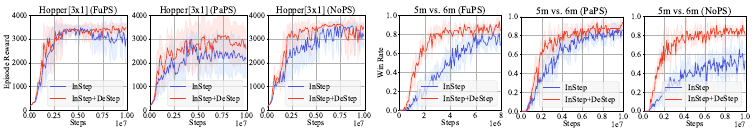}
%\framebox[4.0in]{$\;$}
\caption{Performance comparison between InStep and InStep+DeStep in different tasks.}
\label{fig_appendix_ablation}
\end{figure}

\subsection{Computational Analysis}
% We provide detailed wall-clock time and GPU memory usage in Tables~\ref{table_appendix_computational1} and \ref{table_appendix_computational2} respectively. It is worthing to notice that the wall-clock time is mainly comprised of two parts: 1) \emph{the rollout time} when agents interact with the environment to generate trajectories. For this part, FP3O, HAPPO, MAPPO and IPPO have the same rollout time. 2) \emph{the update time} when training the neural networks. The former accounts for the majority of the wall-clock time (e.g., In bane vs. bane of MAPPO, rollout time occupies approximately 85\% while update time occupies approximately 15\%).
In this part, we provide a comprehensive computational analysis through the wall-clock time and GPU memory usage, which are presented in Tables~\ref{table_appendix_computational1} and \ref{table_appendix_computational2} respectively. It is worth noting that the wall-clock time is mainly composed of two parts (1) \emph{the rollout time} and (2) \emph{the update time}. The former refers to the time agents spend interacting with the environment to generate trajectories. Importantly, FP3O, HAPPO, MAPPO, and IPPO have the same rollout time. On the other hand, the latter refers to the time taken to train neural networks, which accounts for a small portion of the total wall-clock time. For example, in bane vs. bane of MAPPO, rollout time occupies approximately 85\% while update time occupies approximately 15\%.
\begin{table}[H]
\centering
\begin{tabular}{c | c | c c c c}
   \toprule
   Network & Tasks & FP3O & HAPPO & MAPPO & IPPO \\ \midrule
   FuPS & bane vs. bane & 710 & 764 & 697 & 692 \\ 
   FuPS & 6-AgentHalfCheetah & 1108 & 1517 & 1032 & 1022 \\ \midrule
   PaPS & bane vs. bane & 754 & 794 & 714 & 714 \\ 
   PaPS & 6-Agent HalfCheetah & 1722 & 1825 & 1594 & 1547 \\ \midrule
   NoPS & bane vs. bane & 818 & 813 & 769 & 766 \\
   NoPS & 6-AgentHalfCheetah & 2041 & 2037 & 1750 & 1743 \\ 
   \bottomrule
\end{tabular}
\caption{The wall-clock time (min). Wall-clock time is associated with serial computing overhead. FuPS, PaPS, and NoPS denote full, partial, and non-parameter sharing respectively.}
\label{table_appendix_computational1}
\end{table}
\begin{table}[H]
\centering
\begin{tabular}{c | c | c c c c}
   \toprule
   Network & Tasks & FP3O & HAPPO & MAPPO & IPPO \\ \midrule
   FuPS & bane vs. bane & 3679 & 1009 & 3687 & 3241 \\ 
   FuPS & 6-AgentHalfCheetah & 969 & 879 & 967& 967 \\ \midrule
   PaPS & bane vs. bane & 3013 & 1001 & 3011 & 2901 \\ 
   PaPS & 6-Agent HalfCheetah & 948 & 879 & 947 & 945 \\ \midrule
   NoPS & bane vs. bane & 1082 & 1081 & 1081 & 1059 \\
   NoPS & 6-AgentHalfCheetah & 879&879&879&877   \\ 
   \bottomrule
\end{tabular}
\caption{The GPU memory usage (MiB). GPU memory usage is associated with parallel computing overhead. FuPS, PaPS, and NoPS denote full, partial, and non-parameter sharing respectively.}
\label{table_appendix_computational2}
\end{table}

Regarding serial computing (wall-clock time), we can draw the following conclusion from Table~\ref{table_appendix_computational1}.
Full parameter sharing: HAPPO $>$ FP3O $>$ MAPPO $\approx$ IPPO; 
Partial parameter sharing: HAPPO $>$ FP3O $>$ MAPPO $\approx$ IPPO; 
Non-parameter sharing: HAPPO $\approx$ FP3O $>$ MAPPO $\approx$ IPPO.

Regarding parallel computing (GPU memory usage), we can draw the following conclusion from Table~\ref{table_appendix_computational2}.
Full parameter sharing: FP3O $\approx$ MAPPO $>$ IPPO $>$ HAPPO;
Partial parameter sharing: FP3O $\approx$ MAPPO $>$ IPPO $>$ HAPPO;
Non-parameter sharing: FP3O $\approx$ MAPPO $\approx$ HAPPO $>$ IPPO.

In summary, the parallel computing overheads of FP3O demonstrate the same trend as MAPPO due to the parallel manner. However, FP3O has higher serial computing overheads because of the necessity to calculate intermediate policies and the condition. On the other hand, HAPPO has higher serial consumption but lower parallel consumption in the parameter-sharing cases as opposed to FP3O, MAPPO, and IPPO. In the NoPS case, all algorithms exhibit similar GPU memory usage, and FP3O and HAPPO exhibit similar wall-clock times. This is because our codes for all algorithms are forked and developed from MAPPO repository\footnote{\url{https://github.com/marlbenchmark/on-policy}} and HAPPO repository\footnote{\url{https://github.com/cyanrain7/TRPO-in-MARL}}, inheriting the original serial implementation (using a FOR loop) for non-parameter sharing case. However, similar to MAPPO, FP3O can be further implemented in parallel to enhance the performance on running time in the NoPS case, for example, through multi-threaded training.

\subsection{Analysis of Approximation}
FP3O employs two approximations: 1. clipping, and 2. intermediate policy approximation. The clip (PPO-based) or KL constraint (TRPO-based) is \emph{indispensable} for approximating $D_{KL}^{max}$ term. The necessity of this arises from practical considerations as explained in the TRPO paper~\cite{schulman2015trpo}. The intermediate policy is an \emph{optional} approximation, selected to improve computational efficiency. In Figure~\ref{fig_matching_degree}, we validated this approximation and demonstrated its practical significance.

\begin{table}[H]
\centering
\begin{tabular}{c | c c}
   \toprule
   Approximation & Clipping & Clipping+Intermediate Policy  \\ \midrule
   2-Agent Reacher [2$\times$1] (NoPS) & $-25.9_{\pm5.4}$ & $-26.5_{\pm2.6}$ \\ 
   3-Agent Hopper [3$\times$1] (NoPS) & $3623.4_{\pm196.8}$ & $3560.2_{\pm107.7}$ \\ 
   3s5z (FuPS) & $100_{\pm4.6}$ & $100.0_{\pm3.7}$ \\ 
   6h vs. 8z (FuPS) & $90.6_{\pm6.4}$ & $93.8_{\pm6.1}$ \\ 
   \bottomrule
\end{tabular}
\caption{Comparative performance evaluation of the clipping alone and the clipping+intermediate policy (FP3O) approximation on different tasks.}
\label{table_appendix_approximation_analysis}
\end{table}
To further scrutinize the latter approximation, we conducted experiments using clipping alone, which is a one-step approximation similar to what PPO/TRPO has done. The results are provided in Table~\ref{table_appendix_approximation_analysis}.
As observed, FP3O performs comparably, consistent with the analysis in Figure~\ref{fig_matching_degree}. In addition, FP3O reduces the running time by approximately 20\%. 

\subsection{Stability Test}\label{appendix_stability_test}
The stability of each algorithm is examined through performance fluctuation experiments. Under identical environmental conditions and parameter settings, each algorithm is executed multiple times, with the standard deviations of performance serving as the main focus. An algorithm is deemed unstable if it exhibits significant discrepancies in output across multiple runs. We conduct these experiments on the 3s5z (FuPS) task, with the results\footnote{The instability of CoPPO has been demonstrated and discussed in Appendix~\ref{appendix_CoPPO_Baseline}.} depicted in Figure~\ref{fig_appendix_stability_test}.

As we can see, HAPPO and A2PO display substantial deviations due to their incompatibility with full-parameter sharing. In contrast, MAPPO maintains more stable performance. FP3O experiences some fluctuations during the initial phases of training, but it stabilizes upon reaching convergence. The use of the double-clip technique can mitigate these early-stage fluctuations.

\begin{figure}[H]
\centering
\includegraphics[width=0.9\linewidth]{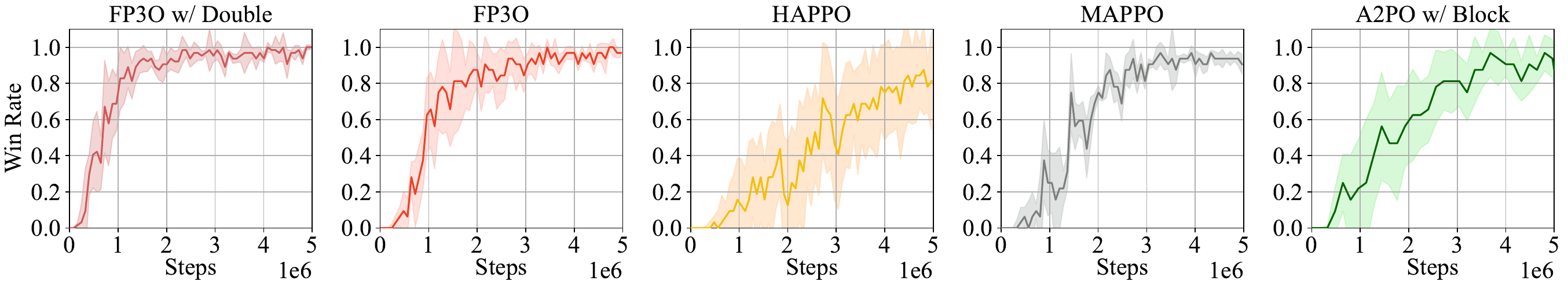}
%\framebox[4.0in]{$\;$}
\caption{Performance stability comparison across various algorithms on the 3s5z (FuPS) task. FP3O w/ Double denotes FP3O with double-clip objective, as shown in Equation~\ref{eq_double_clip}.}
\label{fig_appendix_stability_test}
\end{figure}

\newpage
\subsection{Extended Analysis of Tables~\ref{table_MAMuJoCo_results} and \ref{table_SMAC_results}}

\begin{figure}[H] 
\begin{minipage}{0.48\textwidth} 
\vspace{0.3cm}
  \centering 
  \includegraphics[width=1\linewidth,height=5.8in]{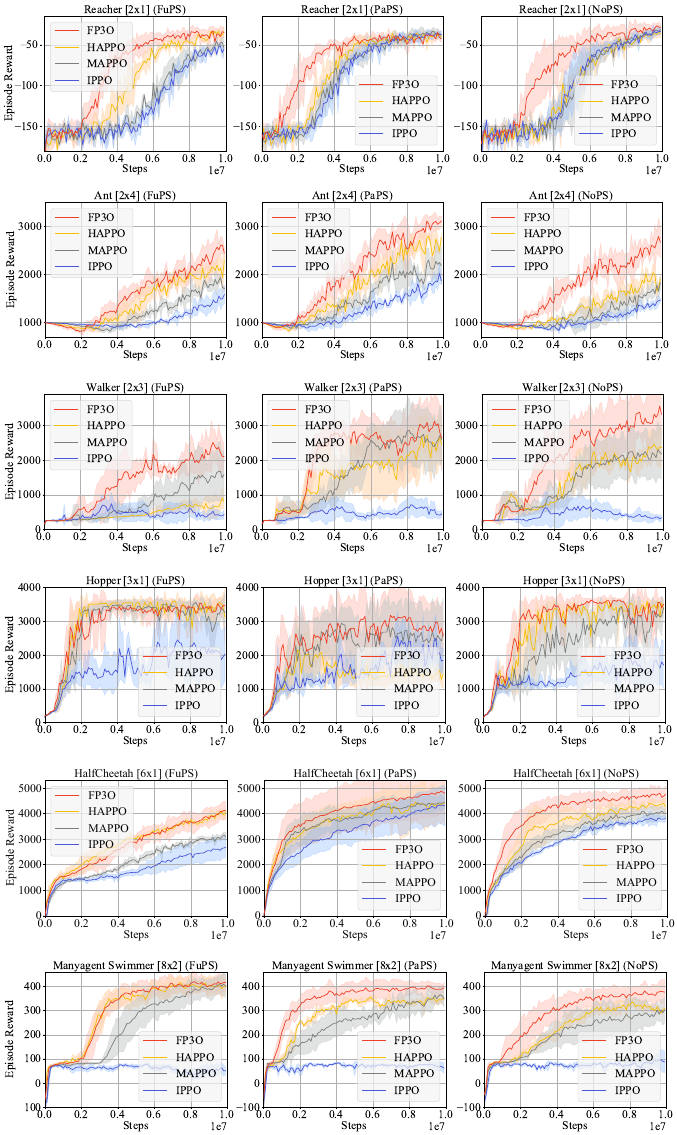} 
  \caption{Average evaluation reward in the MAMuJoCo domain. FuPS, PaPS, and NoPS denote full parameter sharing, partial parameter sharing, and non-parameter sharing respectively.} 
  \label{fig_result_mujoco} 
\end{minipage} 
\hspace{0.3cm}
\begin{minipage}{0.48\textwidth} 
  \centering 
  \includegraphics[width=1\linewidth,height=5.8in]{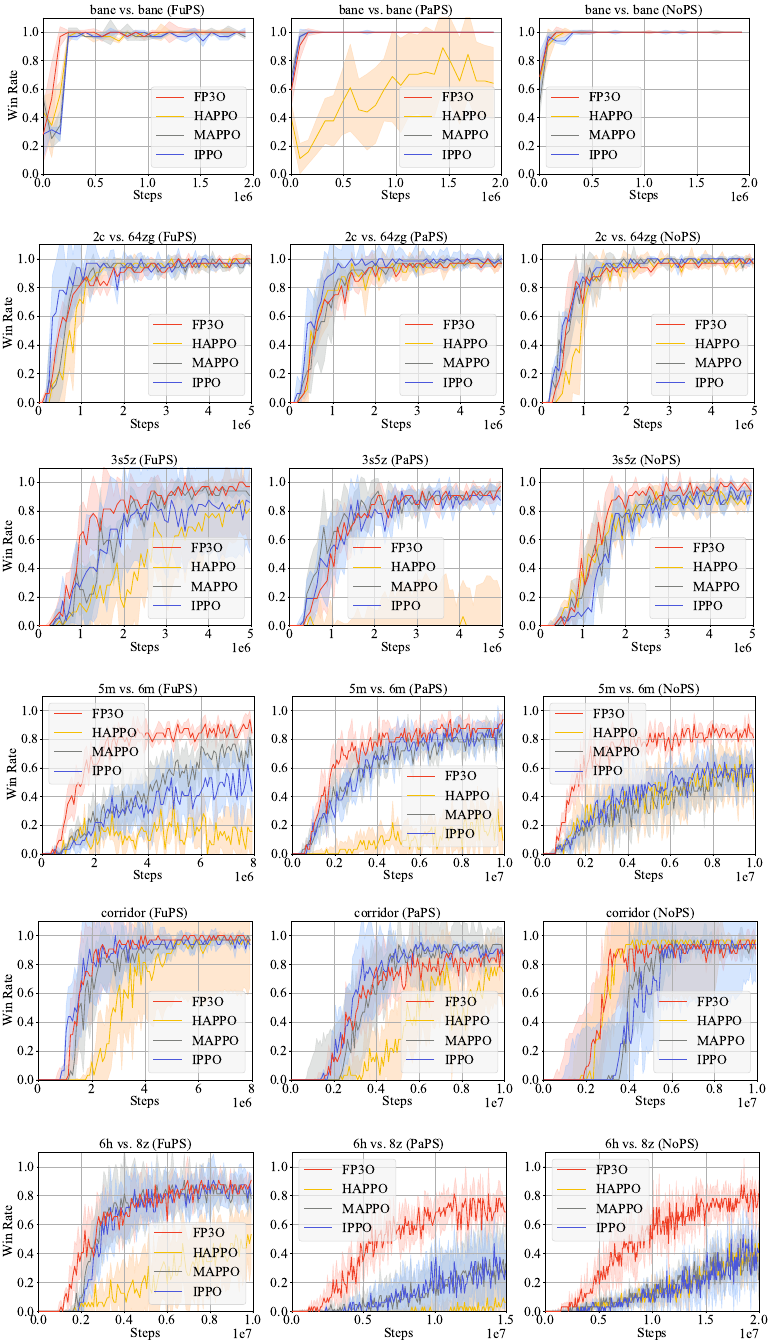} 
  \caption{Median evaluation win rate in the SMAC domain. FuPS, PaPS, and NoPS denote full parameter sharing, partial parameter sharing, and non-parameter sharing respectively.} 
  \label{fig_result_smac} 
\end{minipage}
\end{figure}

We present the training process in the MAMuJoCo and SMAC in Figures~\ref{fig_result_mujoco} and \ref{fig_result_smac}. In the MAMuJoCo domain, we observe that FP3O does not achieve the best performances in certain tasks, such as 3-Agent Hopper (FuPS), 2-Agent Reacher (PaPS), and 3-Agent Hopper (NoPS). This discrepancy could potentially be attributed to the randomness of 5 seeds in our repetitive experiments, as we can see from Figure~\ref{fig_result_mujoco} that there is no significant difference between the result of FP3O and the best performance in these tasks.

In the SMAC domain, we observe that FP3O exhibits weaker performance in the super hard corridor (PaPS) task, as compared to MAPPO. This could be due to the great sensitivity of the corridor map to hyperparameter, as thoroughly investigated and demonstrated in Figure 17-24 of \cite{yu2021mappo}. In our experiment, the hyperparameters in SMAC are taken from the original hyperparameters (MAPPO-friendly hyperparameters) of MAPPO paper without further fine-tuning. This could have led to the results obtained by FP3O and MAPPO in the corridor task (PaPS).

As demonstrated in Tables~\ref{table_MAMuJoCo_results}, \ref{table_SMAC_results} and Figures~\ref{fig_result_mujoco}, \ref{fig_result_smac}, it is interesting to observe that partial parameter sharing seems to have a more detrimental effect on HAPPO than full parameter sharing. To explain this phenomenon, we suppose that agents have a shared backbone with parameter vector $\theta^B$, and each agent $i$ has an individual last layer $\theta^i$. In the sequential update scheme of HAPPO, agents $1,...,n$ are updated sequentially as follows:
\begin{align}
\text{Agent $1$ turn: $\theta^B$}&\text{ and $\theta^1$ are changed.} \nonumber\\
\text{Agent $2$ turn: $\theta^B$}&\text{ and $\theta^2$ are changed.}\nonumber\\
&\vdots \nonumber \\
\text{Agent $n$ turn: $\theta^B$}&\text{ and $\theta^n$ are changed.}\nonumber
\end{align}
Let us focus on agent $1$. $\theta^1$ is \emph{fixed} after the turn of agent $1$, while $\theta^B$ will be \emph{continuously changed} during the turns of agents $2,...,n$.
By the end of the sequential update process, the final $\theta^B$ deviates significantly from the beginning. As a result, it can no longer cooperate efficiently with $\theta^1$, causing the output action of agent $1$ chaotic. All agents of HAPPO suffer from the above problem in partial parameter sharing. However, it does not arise in the full parameter-sharing case as the parameters of the shared backbone and shared last layer are bundled together and trained jointly.
\end{document}